\documentclass[pdflatex,sn-mathphys-num,iicol]{sn-jnl} %
 
\usepackage{amsmath,amssymb,amsfonts}%
\usepackage{amsthm}%
\usepackage{mathrsfs}%
\usepackage[title]{appendix}%
\usepackage{xcolor}%
\usepackage{textcomp}%
\usepackage{manyfoot}%
\usepackage{booktabs}%
\usepackage{algpseudocode}%
\usepackage{listings}%
\usepackage{balance}
\usepackage{mathtools}
\usepackage{tablefootnote} 
\usepackage{multirow} 
\usepackage{graphicx}
\usepackage{subfigure}
\usepackage[ruled,vlined,linesnumbered]{algorithm2e}
\usepackage{adjustbox}
\usepackage{pifont}
\makeatletter
\newcommand{\thickhline}{%
	\noalign {\ifnum 0=`}\fi \hrule height 1pt
	\futurelet \reserved@a \@xhline
} 

\theoremstyle{thmstyleone}%
\newtheorem{theorem}{Theorem}
\newtheorem{proposition}[theorem]{Proposition}%

\theoremstyle{thmstyletwo}%
\newtheorem{lemma}{Lemma} 

\theoremstyle{thmstylethree}%

\begin{document}

\title[Article Title]{Universal Pansharpening Model}

\author[1,3]{\fnm{Hebaixu} \sur{Wang}}\email{wanghebaixu@gmail.com} 
\author*[2,3]{\fnm{Jing} \sur{Zhang}}\email{jingzhang.cv@gmail.com}  
\author[3,4]{\fnm{Haonan} \sur{Guo}}\email{haonan.guo@whu.edu.cn}  
\author[2,3]{\fnm{Di} \sur{Wang}}\email{d\_wang@whu.edu.cn}
\author*[1,3,5]{\fnm{Jiayi} \sur{Ma}}\email{jyma2010@gmail.com}
\author*[2,3]{\fnm{Bo} \sur{Du}}\email{dubo@whu.edu.cn} 
\author[4]{\fnm{Liangpei} \sur{Zhang}}\email{zlp62@whu.edu.cn}   
\affil[1]{\orgdiv{School of Electronic Information}, \orgname{Wuhan University}, \city{Hubei}, \postcode{430072}, \state{Wuhan}, \country{China}}
\affil[2]{\orgdiv{School of Computer Science}, \orgname{Wuhan University}, \city{Hubei}, \postcode{430072}, \state{Wuhan}, \country{China}}
\affil[3]{\orgdiv{Zhongguancun Academy}, \city{Beijing}, \postcode{100094},\country{China}} 
\affil[4]{\orgdiv{State Key Laboratory of Information Engineering in Surveying, Mapping and Remote Sensing}, \orgname{Wuhan University}, \city{Hubei}, \postcode{430072}, \state{Wuhan}, \country{China}} 
\affil[5]{\orgdiv{School of Robotics}, \orgname{Wuhan University}, \city{Hubei}, \postcode{430072}, \state{Wuhan}, \country{China}} 

\abstract{Pansharpening generates the high-resolution multi-spectral (MS) image by integrating spatial details from a texture-rich panchromatic (PAN) image and spectral attributes from a low-resolution MS image. Existing methods are predominantly satellite-specific and scene-dependent, which severely limits their generalization across heterogeneous sensors and varied scenes, thereby reducing their real-world practicality. To address these challenges, we present UniPS, a universal pansharpening model for satellite-agnostic and scene-robust fusion. Specifically, we introduce a modality-interleaved transformer that learns band-wise modal specializations to form reversible spectral affine bases, mapping arbitrary-band MS into a unified latent space via tensor multiplication. Building upon this, we construct a latent diffusion bridge model to progressively evolve latent representations, and incorporate bridge posterior sampling to couple latent diffusion with pixel-space observations, enabling stable and controllable fusion. Furthermore, we devise infinite-dimensional pixel-to-latent interaction mechanisms to comprehensively capture the cross-domain dependencies between PAN observations and MS representations, thereby facilitating complementary information fusion. In addition, to support large-scale training and evaluation, we construct a comprehensive pansharpening benchmark, termed PSBench, consisting of worldwide MS and PAN image pairs from multiple satellites across diverse scenes. Extensive experiments verify that UniPS consistently outperforms state-of-the-art methods, exhibiting superior generalization and robustness across a wide range of tasks.}

\keywords{Pansharpening, large-scale datasets, remote sensing}
\maketitle 
 
\vspace*{-3.7em}

\section{Introduction}\label{sec1} 
The substantial demand for high-quality satellite imagery has driven rapid advances in remote sensing systems~\cite{wu2025semantic,xu2025hipandas,ciotola2024hyperspectral}. Nevertheless, fundamental physical and hardware constraints of satellite sensors prevent the simultaneous acquisition of images possessing both high spatial resolution and rich spectral information. Consequently, most contemporary satellites typically adopt dual-sensor imaging systems, acquiring paired low-resolution multi-spectral (MS) images and high-resolution panchromatic (PAN) images. While MS images provide abundant spectral information across multiple bands at the cost of spatial resolution, PAN images capture fine-grained spatial structures by integrating a broad spectral range into a single channel, albeit with limited spectral diversity. To alleviate this inherent trade-off, pansharpening aims to fuse the complementary information from both modalities, thereby generating images with high spectral fidelity and spatial resolution~\cite{zhou2025toward,wang2025deep,arienzo2022full}. Owing to the superior characteristics of fused images, pansharpening has become a critical technique in remote sensing, supporting a wide range of applications across diverse domains~\cite{wang2023hyperspectral,roy2025classical}.

\begin{figure}[t]
	\centering
	\includegraphics[width=1\linewidth]{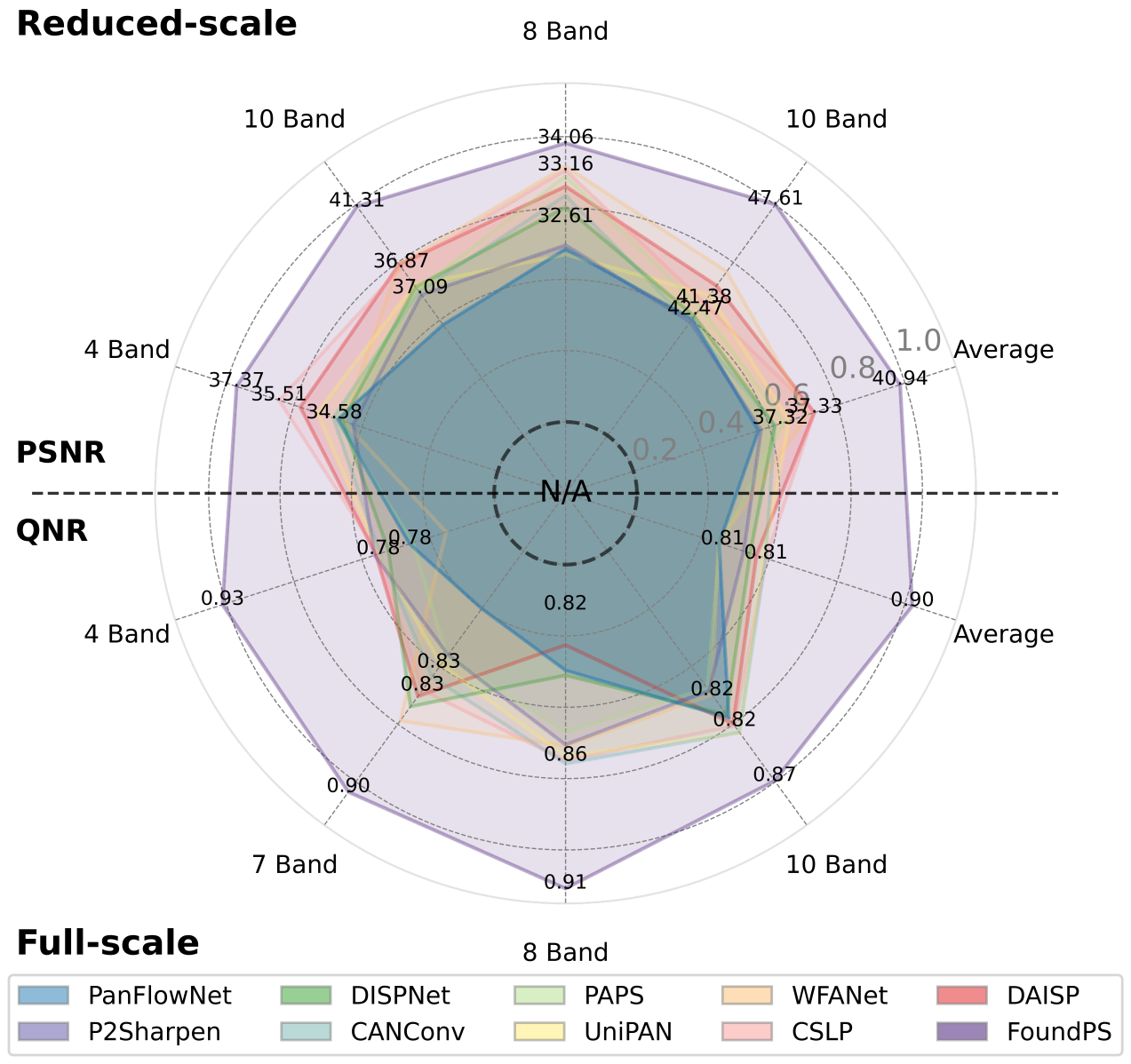}
	
	\caption{Pansharpening performance comparisons on PSBench across representative spectral band configurations (4-, 7-, 8-, and 10-band). The upper and lower panels show the reduced- and full-scale evaluations using PSNR and the non-reference QNR metric, respectively. UniPS consistently outperforms the advanced task-specific models, offering a universal solution for pansharpening tasks. }
	\label{intro:1-0}
\end{figure}

\begin{figure*}[t]
	\centering
	\includegraphics[width=1\linewidth]{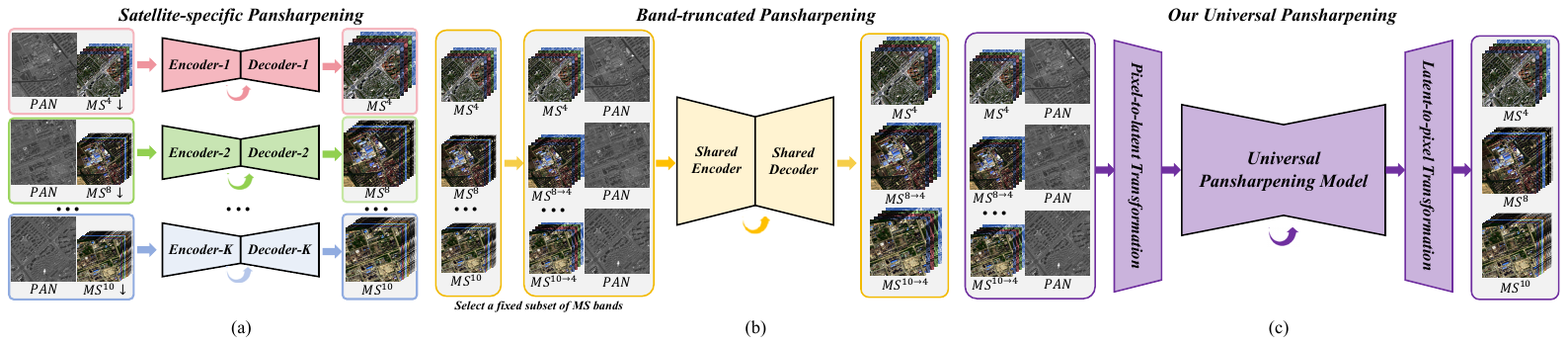}
	
	\caption{Mainstream pansharpening paradigms: (a) Satellite-specific methods employ independent encoder-decoder architectures per spectral configuration, resulting in parameter redundancy and poor cross-satellite scalability. (b) Band-truncated methods unify data format by selecting a fixed band subset, enabling joint training but discarding spectral information and losing the ability to process excluded bands. (c) Our universal method performs band-agnostic fusion by projecting arbitrary-band MS images into a shared latent space, achieving unified modeling without band truncation or parameter duplication, while preserving full cross-band and cross-modal information.}
	\label{intro:1-1}
\end{figure*}

\begin{figure*}[t]
	\centering
	\includegraphics[width=1\linewidth]{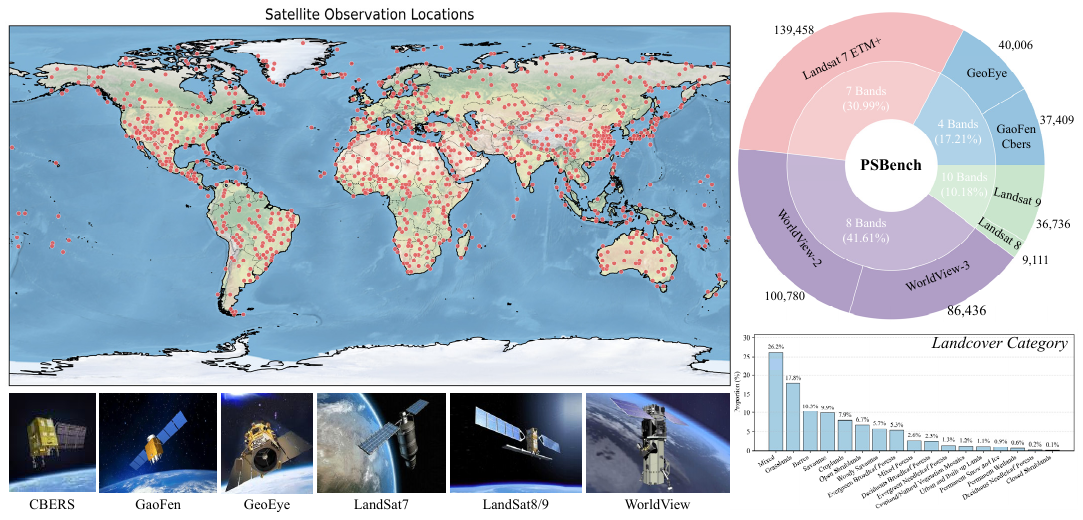}
	
	\caption{Satellite observation locations and distribution of PSBench. It contains four representative spectral configurations for pansharpening tasks with over seventeen landcover categories.} 
	\label{intro:1-2}
\end{figure*}

Traditional pansharpening methods are conventionally categorized into three principal families: component substitution (CS)~\cite{vivone2014critical,vivone2019robust}, multi-resolution analysis (MRA)~\cite{alparone2016spatial}, and variational optimization (VO)~\cite{fang2013variational,fu2019variational,ruder2016overview}. Generally, they all rely on handcrafted rules or priors to model the relationship between modalities, and operate under the assumption that spectral and spatial information can be effectively separated and recombined. Benefiting from explicit formulations, traditional methods typically allow for straightforward adaptation to MS images with varying numbers of spectral bands. However, their performance is fundamentally constrained by the oversimplified cross-modal fusion assumptions, which suffer from inevitable drawbacks in simultaneously preserving spectral fidelity and enhancing spatial details. 

So far, the advent of deep learning has led to significant advancements in pansharpening tasks~\cite{zhang2023panchromatic,guo2025better}. Existing deep-learning-based methods can be generally classified into two paradigms, as illustrated in Fig.~\ref{intro:1-1}. Most approaches adopt independent encoder-decoder architectures built upon convolutional neural networks (CNN)~\cite{zhang2023p2sharpen,wang2025forgotten}, generative adversarial networks (GAN)~\cite{xu2023upangan,shang2024mft}, vision transformers (ViT)~\cite{hou2024linearly}, space state model (SSM)~\cite{he2025pan,hou2025mambamtl}, or their hybrid variants. By incorporating sophisticated spectral-spatial interaction mechanisms, these models have substantially achieved improvements in the preservation of both spatial information and spectral fidelity. Nevertheless, these methods remain predominantly satellite-specific, exhibiting limited generalization capabilities across heterogeneous sensors and varying imaging conditions, as shown in Fig.~\ref{intro:1-1}(a). Specifically, models trained on fixed spectral band configurations are not applicable to different band settings. Moreover, even under identical band numbers, fusion performance frequently deteriorates when applied to unseen scenes or incompatible satellite data. To mitigate these limitations, the zero-shot learning paradigm~\cite{wang2024zero,xiao2025hyperspectral,cao2024zero} reformulates pansharpening as image-pair-specific fusion tasks, wherein a dedicated model is tailored for each source image pair to inherently accommodate arbitrary spectral dimensionalities. However, it generally yields suboptimal performance without large-scale training. Alternatively, the band-truncated pansharpening paradigm~\cite{cui2025enpowering,zhang2025rethinking} attempts to improve cross-sensor generalization by selecting a fixed subset of spectral bands and discarding others to enforce a unified data format, as presented in Fig.~\ref{intro:1-1}(b). While this strategy offers improved generalization ability for sensors sharing the selected bands, it fundamentally fails to adapt to the spectral bands that are excluded. In consequence, its applicability is restricted to a narrow range of sensors and scenarios, lacking true universality and compromising the completeness of spectral information. Beyond the limitations of model architectures, current open-source datasets also impede the assessment and development of generalizable pansharpening models. Most widely used datasets~\cite{xiong2020large,meng2020large,cao2026cross} are collected from a small number of satellites, a specific range of geographic regions, or a limited set of relatively homogeneous scenes, with training and test splits typically constructed via random partitioning, manual truncating, and Wald's protocol~\cite{wald1997fusion}. Such random splitting within a limited data regime induces overlap between training and test distributions, which is likely to inflate in-distribution reconstruction accuracy while offering limited evidence for real-world generalization. Moreover, this constrained data regime further hinders the emergence of robust generalization ability, as models may rely on dataset-specific statistics rather than learning transferable fusion priors. In conclusion, these persistent challenges in cross-sensor generalization and spectral adaptability underscore the urgent necessity for a universal pansharpening model. Such a model should be capable of handling arbitrary spectral bands, generalizing across heterogeneous satellite sensors, and maintaining robust performance on unseen geographic scenes. However, the advancement towards a truly universal pansharpening model is impeded by three crucial challenges. First, the absence of a unified, band-agnostic modeling paradigm prevents existing methods from accommodating arbitrary spectral configurations, rendering them inherently incompatible with different satellites. As a result, practical deployment requires training and maintaining separate models for individual sensors, severely limiting scalability. Second, current methods often suffer from pronounced performance degradation on unseen scenes and incompatible satellite data due to inherent inductive biases even when applied to matched spectral configurations, indicating limited generalization ability. Third, the lack of worldwide large-scale, multi-sensor datasets further constrains the development and assessment of pansharpening, as existing datasets are restricted to specific regions and lack geographic diversity.

To address these challenges, we propose the \textbf{UniPS}, a universal pansharpening  model for satellite-agnostic and scene-robust fusion. Fundamentally, we conceptualize pansharpening into three stages, namely unifying band-agnostic MS representations, enabling effective cross-modality interaction, and restoring desired high-resolution MS images. Firstly, we present a modality-interleaved transformer (MiT) that ensembles mixture-of-experts to generate band-wise modal specializations as a set of spectral affine bases, which are concatenated rather than conventionally weighted aggregation to form reversible mapping matrices. Through tensor multiplication, MS images with arbitrary bands are deterministically projected into a unified latent space, thereby achieving band-agnostic representation with consistent dimensionality. Secondly, we develop the latent diffusion bridge model (LDBM) to progressively evolve the latent representation toward high quality. Furthermore, we incorporate the bridge posterior sampling (BPS) strategy, which tightly couples latent diffusion with pixel observations to provide manifold-constrained fusion guidance. This simultaneously reduces discretization error and shortens the sampling trajectories, and enables training-free adaptation to varied sensors and scenes, thereby enhancing generalization capability. Thirdly, we devise an infinite-dimensional pixel-to-latent interaction mechanism to comprehensively capture the cross-modal dependencies between the pixel-space PAN observation and the latent-space MS representation via Hadamard Product modulated by geometric kernel and exponential kernel, effectively promoting the complementary fusion of spectral and spatial information. In addition, we construct PSBench, a large-scale pansharpening benchmark comprising worldwide MS-PAN image pairs from multiple satellites across diverse geographic scenes (Fig.~\ref{intro:1-2}), to support systematic training and evaluation of universal pansharpening models. Extensive experiments verify that UniPS consistently outperforms state-of-the-art methods (Fig.~\ref{intro:1-0}), exhibiting superior generalization capability and robustness across a wide range of pansharpening tasks. Our contributions are summarized below:

\begin{itemize}
	\item[-] We design a modality-interleaved transformer that generates modal specializations to form spectral affine bases. These bases constitute the reversible mapping matrices that deterministically project MS images with arbitrary spectral bands into a unified latent space, enabling band-agnostic representation learning.
	\item[-] A latent diffusion bridge model equipped with bridge posterior sampling is developed to progressively refine a high-quality representation. This jointly reduces discretization error, accelerates sampling by tightly coupling latent diffusion with pixel observations, and enables training-free adaptation to boost generalization across varied scenes and satellite data. 
	\item[-] To extremely facilitate the cross-modal dependencies between pixel PAN observations and latent MS representations, we devise an infinite-dimensional interaction mechanism. It models feature interactions via a Hadamard product modulated by a geometric kernel and an exponential kernel, enabling complementary fusion of spectral and spatial information.
	\item[-] We construct PSBench, a large-scale pansharpening benchmark comprising 450K worldwide MS-PAN image pairs from multiple satellites across diverse geographic scenes, to fully support the comprehensive training and evaluation of universal pansharpening  models.
\end{itemize}


\section{Related Work}\label{sec:2}

\subsection{Mixture-of-Experts System} 
Mixture of experts (MoE)~\cite{cai2025survey} framework operates on a simple yet powerful principle that different parts of a model specialize in different tasks or aspects of data~\cite{xu2023vitpose++}. Typically, it consists of multiple feed-forward layers with each serving as an expert, and a gating network that dynamically selects and combines expert outputs. Dense MoE activates all experts and aggregates their outputs to obtain favorable task performance but incurs a significant increase in computational overhead, such as EvoMoE~\cite{nie2021evomoe}, MoLE~\cite{wumixture}, and DSMoE~\cite{pan2024dense}. In contrast, sparse MoE computes a weighted combination of only the top-$k$ experts selected by the gating network, achieving a balanced trade-off between performance and efficiency~\cite{rajbhandari2022deepspeed,wei2024skywork} and thus becoming the dominant paradigm. Another focal point of MoE is the design of the gating mechanism. Lewis~\textit{et al.}~\cite{lewis2021base} introduce the balanced assignment of sparse expert layers to equalize workload across experts, improving individual specialization and model stability. Besides, many studies~\cite{jiang2024mixtral,cai2025survey,lenz2025jamba} incorporate an auxiliary loss function to promote a uniform allocation of tokens among experts, thereby enforcing the workload balance. Additionally, several MoE variants have been explored to improve model scalability and adaptability. PanGu-$\Sigma$~\cite{ren2023pangu} presents the random routed experts mechanism, which initially routes tokens to a domain-specific expert group, followed by a random selection within that group. Hmoe~\cite{wang2025hmoe} introduces a heterogeneous MoE framework to address varying token complexities by utilizing different expert networks with diverse capabilities. UniMoE~\cite{li2025uni} employs modality-specific encoders and a sparse MoE architecture to efficiently handle multiple modalities.

Generally, current MoE primarily utilizes data-specific experts to process inputs within a unified format, and aggregates their outputs via weighted addition. Its potential to handle data across heterogeneous modalities with varying formats remains unexplored. In this work, we extend MoE to adaptively align the representations of MS inputs with varying bands into a unified latent space, thereby enabling universal pansharpening.

\subsection{Traditional Pansharpening} 
Traditional pansharpening methods are generally categorized into three groups, namely component substitution, multi-resolution analysis, and variational optimization~\cite{zhang2022progress}. Classical CS-based methods, including the intensity hue-saturation (IHS) fusion method~\cite{tu2001new}, Brovey transformations~\cite{lolli2017haze}, and Gram-Schmidt (GS) orthogonalization method~\cite{aiazzi2007improving}, typically project source image pairs into a shared domain and substitute spatially enhanced components to improve spatial details. Such linear substitutions are computationally efficient, but often induce spectral distortions. MRA-based methods extract the fine-grained spatial information from PAN images using multi-scale decomposition or filtering operations, and inject the high-frequency components into upsampled MS images to preserve spectral information, such as wavelet-based fusion~\cite{alparone2016spatial,otazu2005introduction}, Laplacian pyramid methods~\cite{pradeep2013implementation}, and smoothing filter-based methods~\cite{vivone2014critical}. However, their performance is highly sensitive to the choice of decomposition scales and filters, and spectral inconsistencies may still arise in complex scenes. VO-based methods explicitly model the PAN-MS relationship and formulate pansharpening as an inverse problem solved via iterative optimization under handcrafted priors. For instance, P+XS~\cite{ballester2006variational} assumes the PAN image to be a linear combination of MS bands. SIRF~\cite{chen2015sirf} introduces gradient sparsity as a regularization prior to enhance structural consistency. BAGDC~\cite{lu2021unified} constructs band-adaptive gradient constraints to refine the spatial structures. Despite the strong interpretability, they often require empirical tuning of multiple regularization terms and involve computationally expensive iterative procedures, which can compromise both performance and practical efficiency. Concretely, traditional methods are adaptable to varying spectral bands by tuning hyperparameters, but they are fundamentally constrained by oversimplified fusion assumptions that compromise either spectral or spatial fidelity.

\begin{table*}[t]
	\centering
	\caption{Details of the collected multi-spectral and panchromatic image pairs from different satellites. }
	\label{tab:psbench}	 
	\renewcommand\arraystretch{1.0}
	\resizebox{2.10\columnwidth}{!}{
	\begin{tabular}{ccccccccccc}
		\toprule
		\multirow{2}{*}{{Satellite/Sensor}} & Launch &  Launch &{MS Spatial} & {PAN Spatial} & Multi-Spectral & Panchromatic & Number of  & \multirow{2}{*}{{Order of MS Bands}} &{MS Pixel} & {PAN Pixel}   \\  
		& Date & Country  & Resolution & Resolution & Range & Range & MS Bands &   & Size & Size \\
		\midrule
		GaoFen-1\footnotemark[1]& 2013 & China & 8.0 m & 2.0 m & 450-890 nm & 450-900 nm & 4 & B/G/R/NIR & 256 & 1024   \\
		GaoFen-2\footnotemark[1]& 2014 & China & 3.2 m & 0.8 m & 450-890 nm & 450-900 nm & 4 & B/G/R/NIR & 256 & 1024    \\
		GaoFen-6\footnotemark[1]& 2018 & China & 8.0 m & 2.0 m & 450-900 nm & 450-900 nm  & 4 & B/G/R/NIR & 256 & 1024    \\
		GaoFen-7\footnotemark[1]& 2020 & China & 2.6 m & 0.65 m & 450-900 nm & 450-900 nm & 4 & B/G/R/NIR & 256 & 1024   \\
		GeoEye-1\footnotemark[2]& 2008 & USA  & 1.64 m& 0.41m & 450-920 nm & 450-800 nm & 4 & B/G/R/NIR & 256 & 1024   \\
		CBERS-04A\footnotemark[1]& 2019 & China & 8.0 m & 2.0 m & 450-890 nm & 450-900 nm   & 4 & B/G/R/NIR & 256 & 1024  \\
		Landsat7 ETM+\footnotemark[3] & 1999 & USA & 30 m & 15m & 450-12500 nm & 520-900 nm & 7 & B/G/R/NIR/SWIR/TIR/MIR & 512 & 1024 \\
		WorldView-2\footnotemark[2]&2009 & USA & 1.84 m & 0.46 m & 400-1040 nm & 450-800 nm & 8 & CB/B/G/Y/R/RE/NIR1/NIR2 & 256 & 1024 \\
		WorldView-3\footnotemark[2]&2014 & USA & 1.24 m & 0.31 m & 400-1040 nm & 450-800 nm & 8 & CB/B/G/Y/R/RE/NIR1/NIR2 & 256 & 1024   \\
		Landsat8\footnotemark[3]& 2013 & USA  &30 m & 15 m & 430-12510 nm & 500-680 nm & 10   &  CA/B/G/R/NIR/SWIR1/SWIR2/Cirrus/TIR1/TIR2 & 512 & 1024 \\
		Landsat9\footnotemark[3]& 2021& USA  & 30 m & 15 m & 430-12510 nm & 500-680 nm & 10  &  CA/B/G/R/NIR/SWIR1/SWIR2/Cirrus/TIR1/TIR2  & 512 & 1024  \\
		\bottomrule
	\end{tabular}} 
	\smallskip
	
	\scriptsize{Band abbreviation: \textcolor{blue}{B}: Blue, \textcolor{green}{G}: Green, \textcolor{red}{R}: Red, \textcolor{cyan}{NIR}: Near-infrared, \textcolor{magenta}{CB}: Coastal Blue, \textcolor{yellow}{Y}: Yellow, \textcolor{orange}{RE}: Red Edge, \textcolor{purple}{SWIR}: Short-wave Infrared, \textcolor{orange}{TIR}: Thermal, \textcolor{teal}{MIR}: Mid-Infrared, \textcolor{brown}{CA}: Coastal Aerosol.}
\end{table*}

\subsection{Deep learning Pansharpening} 
With the powerful non-linear capacity of deep learning, pansharpening has witnessed substantial advancements~\cite{pereira2026multi,zhang2023stp}. PNN~\cite{masi2016pansharpening} devises the first pansharpening network with three simple convolution layers, achieving the superior performance of that time. PanNet~\cite{yang2017pannet} incorporates the residual connection to inject spatial information into the resampled spectral information, thereby preserving the spectral distribution as well as spatial details. Afterwards, lots of sophisticated structures~\cite{zhou2022panformer,he2024pan,cao2024diffusion} are investigated to fully explore the nonlinear capacity of the network for superior pansharpening quality~\cite{wang2025deep,zhou2024general,ma2025deep,chen2025cslp,li2025enhanced}. Inspired by the conventional techniques, Hu \emph{et al.}~\cite{hu2020pan} built a multi-scale network to dynamically and locally extract the information from source images. Yang \emph{et al.}~\cite{yang2022memory} embedded a denoising-based prior and a non-local auto-regression prior to improve the long-range coherence, and combined the merits of model optimization with deep learning. Particularly, Ma \emph{et al.}~\cite{ma2020pan} took the first attempt to utilize the generative adversarial network for an unsupervised pansharpening paradigm. Multiple discriminators are introduced to assist the generator in jointly learning the reasonable spectral and spatial distributions~\cite{xu2023upangan,zhong2024ssdiff}. ZeroSharpen~\cite{wang2024zero} and ZSPan~\cite{cao2024zero} break the satellite-specific learning paradigm and explore the image-pair-specific fusion strategies to improve the practicality of deep models when processing the insufficient and unknown data. Recent efforts have increasingly focused on improving the fusion generalization across heterogeneous satellites and diverse scenes. Some works~\cite{zhang2025rethinking} employ a band-truncation strategy to enforce a unified input format by selecting a fixed subset of spectral bands and discarding others, thereby enabling models to be optimized on samples from different satellites. UniPAN~\cite{cui2025enpowering} further projects these standardized data into a shared distribution, reducing the learning difficulty of the network and improving the cross-task adaptability. Additionally, Cui \emph{et al.}~\cite{cui2025leveraging} simulate datasets for pansharpening pre-training, boosting the generalization capacity.

In summary, existing methods are incapable of handling MS inputs with varying spectral configurations from different satellites, rendering them sensor-specific and non-transferable. And the lack of large-scale, real-world datasets further constrains generalization across diverse scenes. To the best of our knowledge, UniPS is the first model designed for truly universal pansharpening. Besides, PSBench is the first comprehensive multi-satellite benchmark constructed at a global scale to facilitate rigorous evaluation and future development in this field.

\section{Methodology}\label{sec:3}

\footnotetext[1]{\url{https://data.cresda.cn}} 
\footnotetext[2]{\url{https://maxar-opendata.s3.amazonaws.com/}}
\footnotetext[3]{\url{https://www.usgs.gov/landsat-missions/}}

\begin{figure*}[t]
	\centering
	\includegraphics[width=1.0\linewidth]{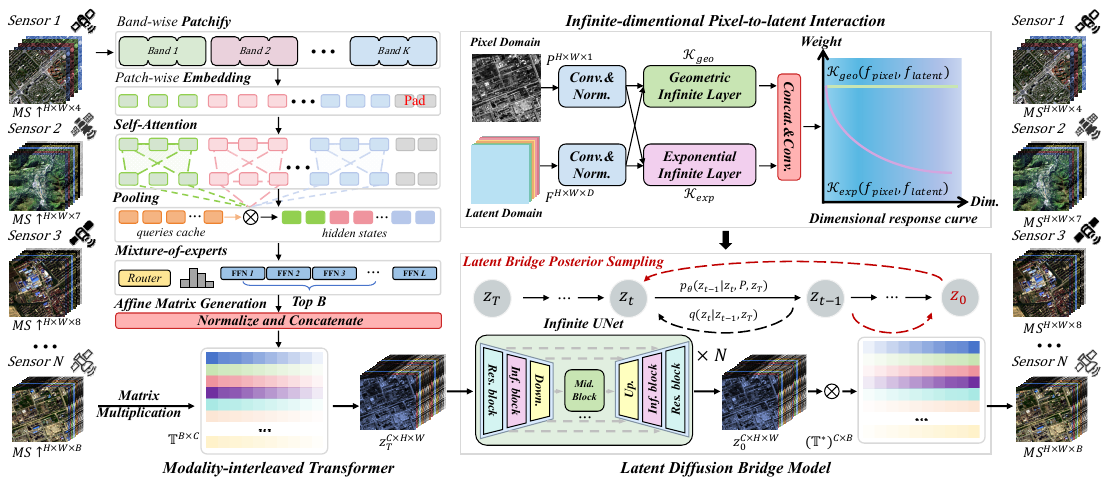} 
	\caption{The overall framework of our method. Multiple satellite observations with arbitrary spectral bands are first unified into a latent space through the modality-interleaved transformer. A latent diffusion bridge model is established for progressive latent refinement through a sequence of iterative updates, whose backbone is a tailored Infinite U-Net empowered by an infinite-dimensional interaction mechanism. This design seamlessly injects complementary spatial cues extracted from PAN images into the latent representations. These enhanced latent representations are projected back to the image domain to reconstruct the high-resolution MS images with flexible spectral dimensions.}
	\label{method:1-1}
\end{figure*}
 
\subsection{Data Source} 
In the light of significant criteria including task diversity, satellite sensors, global coverage, and data accessibility, we select several series of representative satellites to serve as the data sources, as reported in Tab.~\ref{tab:psbench}. Specifically, to ensure task diversity, we consider pansharpening scenarios involving four-, seven-, eight-, and ten-band spectral configurations, with each exhibiting distinct band orders and spectral ranges. These scenarios present widely varying spatial resolutions, ranging from submeter to tens of meters, and cover spectral wavelengths from visible to thermal infrared, thereby constructing a highly heterogeneous imaging environment. Temporally, the data span from 1999 to 2025, covering multiple generations of satellite sensors. Geographically, samples are globally distributed across seventeen classical land-cover categories, ensuring broad environmental and scene diversity, as illustrated in Fig.~\ref{intro:1-2}. Moreover, the benchmark incorporates both freely accessible and commercial satellite data, balancing openness with data quality to support reproducible yet practical research. These criteria are designed to challenge and enable a model to handle pansharpening tasks under varying spectral configurations and scene complexities. All data utilized in this work have been properly licensed.

\subsection{Data Acquisition and Processing}
We collect the source image pairs from both open-access platforms and commercial satellite providers to ensure data quality and diversity. To maintain atmospheric clarity, only samples with cloud and haze coverage below 5\% are retained, while all others are discarded, thereby guaranteeing that the remaining images exhibit high radiometric fidelity and minimal atmospheric distortion. Additionally, we crop all PAN images to $1024\times 1024$ patches, then crop the corresponding MS images according to the precise spatial ratio. To enable supervised training, we simulate the reduced-scale MS/PAN image pairs following the Wald protocol~\cite{wald1997fusion}. This pipeline yields PSBench, a large-scale, globally-covered pansharpening dataset comprising 449,936 aligned MS/PAN patch pairs with a total of 1.68 TB (Fig.~\ref{intro:1-2}). As for evaluation, we randomly select 1091, 463, 561, and 1029 image pairs from the entire PSBench dataset as test sets for the 4-, 7-, 8-, and 10-band pansharpening tasks, respectively.

\subsection{Method Overview} 
UniPS conceptualizes pansharpening into three fundamental stages: unified representation, fusion, and reconstruction, as illustrated in Fig.~\ref{method:1-1}. First, the MS image is upsampled to align its spatial resolution with the PAN image, and then fed into a modality-interleaved transformer (MiT). It undergoes band-wise patchification and patch-wise embedding, followed by self-attention and pooling to obtain hidden states, which are then input into the mixture-of-expert system. A router generates probability weights over experts, and a subset of experts corresponding to the number of MS bands is dynamically selected. These selected expert outputs, representing spectral affine bases, are normalized and concatenated to form a reversible mapping tensor. Through tensor multiplication, MS images with arbitrary bands are deterministically projected into a unified latent space, thereby achieving band-agnostic representation with consistent dimensionality. Second, the core cross-modal fusion is performed by an infinite-dimensional feature interaction mechanism, which serves as the pivotal component of the Infinite-UNet. This mechanism comprehensively integrates PAN features with the evolving latent MS representations via a Hadamard product, modulated by complementary geometric and exponential kernels. The exponential kernel assigns decaying weights across dimensions, emphasizing low-order interactions, while the geometric kernel assigns uniform weights, enabling full-order interactions. Third, a latent diffusion bridge iteratively evolves the coarse latent representations toward higher quality. During diffusion inference, we seamlessly integrate a bridge posterior sampling strategy, which provides manifold-constrained gradients to guide the fusion process. This not only enhances representation quality but also allows training-free fusion adaptation across different satellites and scenes. Finally, the refined latent MS representation is projected back to the image space via the inverse of the mapping tensor, thereby reconstructing the desired MS image. 

\subsection{Modality-interleaved Transformer} 
For the first vital stage, we introduce a MiT to map MS images with arbitrary spectral bands into a band-agnostic unified latent space. Let $x \in \mathbb{R}^{h\times w\times B}$ denote the $B$-band MS image with a height of $h$ and a width of $w$. The upsampled MS image is defined as $x\uparrow \in \mathbb{R}^{H\times W\times B}$, which matches the spatial resolution of PAN image $P \in \mathbb{R}^{H\times W\times 1}$ with size of $H$ and $W$. MiT takes $x\uparrow$ as input and performs band-wise patchification by independently dividing each band into non-overlapping patches. All patches are then embedded to obtain patch embedding $\mathit{E}\in \mathbb{R}^{N\times p^2}$:
\begin{equation}
	\mathit{E} = \text{PatchEmbed}(x\uparrow),
\end{equation}
where $N\!=\!B\!\times\! \frac{H}{p} \!\times\! \frac{W}{p}$ is the number of patches with size $p$. These patch embeddings are augmented with positional encodings and then processed by a Transformer-based self-attention~\cite{han2022survey} mechanism to extract hidden states $H_s\in \mathbb{R}^{N\times p^2}$:
\begin{align}
	&Q = EW_Q,\, K = EW_K, V = EW_V\\
	&H_s = \text{softmax}(\frac{QK^T}{\sqrt{d_k}}) V,
\end{align}
Where $W_{(\cdot)}$ is a linear layer. To aggregate band-wise features and produce a compact representation for expert routing, we employ learnable-query attention pooling. A learnable query vector $q\in \mathbb{R}^{p^2}$ is replicated to form a query matrix $Q_s \in \mathbb{R}^{N\times p^2}$, which interacts with hidden states via cross-attention:
\begin{equation}
	H_{p} = Q_s + \text{softmax}(\frac{Q_sH_s^T}{\sqrt{d_k}})H_s.
\end{equation}
Then $H_p\in \mathbb{R}^{N\times p^2}$ is condensed into a compact vector $h_p\in \mathbb{R}^{p^2}$ via a summation over the first dimension. Then $h_p$ is fed into a router network $\mathcal{G}$ to produce scores over total $L$ experts and determine a top-$B$ expert cache $\Omega_{B}$:
\begin{equation}
	\Omega_{B} = \text{TopB}({\text{softmax}(\mathcal{G}(h_p))}).
\end{equation}
Afterwards, the selected experts are activated. Each expert $\mathcal{E}_i$ outputs a spectral affine basis $T_i\in \mathbb{R}^{1\times C}$. These bases are concatenated to form a mapping tensor $\mathbb{T}\in \mathbb{R}^{B\times C} (C > B)$:
\begin{equation}
	\mathbb{T} = [T_1||T_2||\cdots||T_B], T_i = \mathcal{E}_i(H_s), i \in \Omega_{B}.
\end{equation}
To maintain stability and avoid rank deficiency, we parameterize the mapping tensor $\mathbb{T}$ with the Cayley formulation, thereby ensuring well-posed projection while preserving the learned spectral transformations. Specifically, $\mathbb{T}$ is divided into two column-wise blocks:
\begin{equation}
	\langle \Delta_1,\Delta_2  \rangle=\mathbb{T}
\end{equation}
where $\Delta_1\in \mathbb{R}^{B\times B}$ serves as the principal spectral transformation basis and $\Delta_2\in \mathbb{R}^{B\times (C-B)}$ provides additional latent expansion dimensions. Subsequently, a Cayley transformation is applied to obtain an orthogonal basis:
\begin{equation}
	\widetilde{\Delta}_1 = (I_{d} + (\Delta_1-\Delta_1^T))^{-1}(I_{d} - (\Delta_1-\Delta_1^T)),
\end{equation}
where $I_{d} \in \mathbb{R}^{B\times B}$ is the identity matrix, $(\cdot)^T$ is the matrix transpose, and $\widetilde{\Delta}_1 \in\mathbb{R}^{B\times B}$ satisfies $\widetilde{\Delta}_1^{H}\widetilde{\Delta}_1 =I$ (See Appendix~\ref{sec:suppl_A}), thereby providing a stable full-rank spectral basis. The final mapping tensor is constructed by concatenating the orthogonal basis with the learned latent expansion basis, which can be expressed as:
\begin{equation}
	\mathbb{T}=[\widetilde{\Delta}_1||\Delta_2]\in\mathbb{R}^{B\times C}.
\end{equation}
The MS image can be deterministically projected into a latent representation $\mathbf{Z}\in \mathbb{R}^{H\times W \times C}$ with fixed dimensionality $C$ via tensor multiplication:
\begin{equation}\label{MiT:eq6}
	\mathbf{Z} = (x\uparrow) \mathbb{T}.
\end{equation} 
Notably, $\mathbb{T}$ maintains full row rank, guaranteeing $\mathbb{T} \mathbb{T}^{\ast} = I_{d}$, where $\mathbb{T}^{\ast}$ is the pseudo inverse of $\mathbb{T}$. By dynamically activating expert subsets according to input spectral characteristics, MiT can adaptively model diverse spectral configurations with specialized bases, preserving modality-specific information and balancing the extended modality-agnostic dimension across heterogeneous sensors. This design establishes a unified latent space for subsequent fusion stages. 
 
\subsection{Latent Diffusion Bridge Model}
Benefiting from the MiT, the learned latent space exhibits clear modality-aware separation, where different spectral bands are mapped into distinct yet structured latent distributions (See Fig.~\ref{notion:1-3}). Such a discriminative latent space preserves the intrinsic spectral identity of each modality and balances the heterogeneous spectral representations. Consequently, the latent representations of low-quality and high-quality observations maintain the band-dependent correspondence. Therefore, we employ a latent diffusion bridge model (LDBM) to construct a point-to-point probabilistic path between the low-quality latent distribution $p_{LQ}(\mathbf{Z})$ and the high-quality latent distribution $p_{HQ}(\mathbf{Z})$. Let $\mathbf{z}_t$ be a finite random variable governed by the generalized diffusion bridge process in~\cite{wang2025residual,kieu2025bidirectional}, with terminal condition $\mathbf{z}_T$. The evolution of its marginal distribution $p(\mathbf{z}_t \!\mid\! \mathbf{z}_T)$ satisfies the following stochastic differential equation under a fixed stationary coefficient ratio $\lambda$ with noise schedule $\theta_t$ (See Appendix.~\ref{sec:suppl_B}):
\begin{equation}\label{ldbm_eq1}
	d\mathbf{z}_t = \theta_t\coth(\overline{\theta}_{t:T})(\mathbf{z}_T-\mathbf{z}_t)dt + \sqrt{2\lambda \theta_t} d\omega_t,
\end{equation}
where $\overline{\theta}_{s:t} = \int_s^t \theta_\tau d\tau$ and $\omega_t$ is the standard Wiener process. Given an initial state $\mathbf{z}_0$, the analytical solution of $\mathbf{z}_t$ at time $0<t<T$ of that SDE in Eq.~\eqref{ldbm_eq1} satisfies a Gaussian distribution with expectation $\mathbb{E}[\mathbf{z}_t]$ and variance $Var[\mathbf{z}_t]$:
\begin{equation}
	\!\mathbb{E}[\mathbf{z}_t] = \mathbf{z}_T \!+\! (\mathbf{z}_0\!-\!\mathbf{z}_T)\Theta_t, \, \Theta_t \! \eqcolon \! \frac{\sinh(\overline{\theta}_{t:T})}{\sinh(\overline{\theta}_{0:T})},  \label{sec:method_eq4}	
\end{equation}
\begin{equation}
	\!Var[\mathbf{z}_t]  = 2\lambda\frac{\sinh(\overline{\theta}_{0:t})\sinh(\overline{\theta}_{t:T})}{\sinh(\overline{\theta}_{0:T})} \coloneqq \Sigma_t^2. \label{sec:method_eq5}\\
\end{equation}
Apparently, the initial state at $t = 0$ is a high-quality representation $\mathbf{z}_0$, while the terminal state at $t=T$ is a low-quality representation $\mathbf{z}_T$. From Eq.~\eqref{sec:method_eq4}-\eqref{sec:method_eq5}, the transition probability distributions from $\mathbf{z}_0$ to intermediate states $\mathbf{z}_t$ and its adjacent states $\mathbf{z}_{t-1}$ are:
\begin{align}
	\!\!q(\mathbf{z}_t\vert \mathbf{z}_0,\mathbf{z}_T) &\!=\! \mathcal{N}(\mathbf{z}_T\! + \!(\mathbf{z}_0 \!-\! \mathbf{z}_T)\Theta_{t},\Sigma^2_{t}\boldsymbol{I}),\label{sec:method_eq9} \\
	\!\!\!q(\mathbf{z}_{t\!-\!1}\vert \mathbf{z}_0,\mathbf{z}_T) &\!=\! \mathcal{N}(\mathbf{z}_T \! + \!(\mathbf{z}_0 \!-\! \mathbf{z}_T)\Theta_{t\!-\!1},\Sigma^2_{t\!-\!1}\boldsymbol{I}), \label{sec:method_eq10} 
\end{align}
where $\boldsymbol{I}$ is standard Gaussian noise. Assuming that? sampling from $\mathbf{z}_t$ to $\mathbf{z}_{t-1}$ follows the Gaussian distribution, the deterministic sampling of the reverse process is:
\begin{align}
	\!\! \mathbf{z}_{t\!-\!1} &\!=\! \mathbf{z}_T \!+\! {\frac{\Sigma_{t\!-\!1}}{\Sigma_{t}}}(\mathbf{z}_t \!-\!\mathbf{z}_T) \!+\! (\Theta_{t\!-\!1} \!-\! \Theta_{t} {\frac{\Sigma_{t\!-\!1}}{\Sigma_{t}}})(\widehat{\mathbf{z}}_0^{t} \!-\!\mathbf{z}_T), \nonumber \\
	& \!=\! \mathbf{z}_T \!+\! \frac{\Theta_{t\!-\!1}}{\Theta_t}(\mathbf{z}_t \!-\! \mathbf{z}_T) \!-\! ({\frac{\Theta_{t\!-\!1}}{\Theta_{t}}\Sigma_{t} \!-\! \Sigma_{t\!-\!1}}) \widehat{\epsilon}_t,\label{sec:method_eq12}
\end{align}  
where $\widehat{\mathbf{z}}_0^{t}$ and $\widehat{\epsilon}_t$ are the high-quality representation and noise at a certain state predicted by network $\mathcal{U}(z_t,z_T,P,t)$, respectively. Essentially, these two prediction modes are equivalent.

\subsection{Bridge Posterior Sampling}
Tweedie's formulas~\cite{efron2011tweedie} unveil that $\epsilon_t$-prediction is equivalent to estimating the score function, i.e., the gradient of the log-density of the data distribution. For the case of LDBM sampling, $\widehat{\epsilon}_t$ in Eq.~(\ref{sec:method_eq12}) can be converted into:
\begin{equation} 
	\widehat{\epsilon}_t = - \Sigma_t \nabla_{\mathbf{z}_t} \log p(\mathbf{z}_t\vert \mathbf{z}_T).
\end{equation}
Under such circumstances, we can further incorporate the \emph{Bayer's theorem} to achieve the diffusion guidance, which is expressed as:
\begin{equation}\label{LBPS:eq1}
	\!\!\!\!\nabla_{\mathbf{z}_t} \! \log p(\mathbf{z}_t\vert \mathbf{z}_T) \! = \! \nabla_{\mathbf{z}_t} \! \log p(\mathbf{z}_t) \!+\! \nabla_{\mathbf{z}_t} \! \log p(\mathbf{z}_T\vert \mathbf{z}_t).
\end{equation}
Eq.~(\ref{LBPS:eq1}) offers the profound insights that $\nabla_{\mathbf{z}_t}\log p(\mathbf{z}_T\vert \mathbf{z}_t)$ can be adapted to prevent samples from deviating from the generative manifolds. However, the likelihood term is in fact analytically intractable due to its dependence on time $t$. To circumvent the intractability, we use the surrogate function to maximize the likelihood, yielding approximate posterior sampling~\cite{chung2023diffusion}. Specifically, we make the approximation (see Appendix.~\ref{sec:suppl_C}).
\begin{equation}\label{LBPS:eq2}
	\nabla_{\mathbf{z}_t} \log p(\mathbf{z}_T\vert \mathbf{z}_t) \approx \nabla_{\mathbf{z}_t} \log p(\mathbf{z}_T\vert \widehat{\mathbf{z}}_0^{t}),
\end{equation}
where $\widehat{\mathbf{z}}_0^{t} = \mathbb{E}[\mathbf{z}_0\vert \mathbf{z}_T]$ is the estimated high-quality latent representation at time $t$. Unfortunately, $\mathbf{z}_0$ and $\mathbf{z}_T$, as the two endpoints of the latent diffusion bridge, exhibit an implicit conditional relationship in the latent space. However, they can be projected back to the pixel space through the pseudo inverse of tensor matrix $\mathbb{T}$, which allows us to reformulate the guidance objective. Mathematically, pansharpening can be viewed as a special case of linear inverse problems for multi-spectral imagery, where a degraded MS measurement $x$ is derived from the desired MS image $y \in \mathbb{R}^{H\times W \times B}$ by below formulation:
\begin{equation} 
	x = y\downarrow + \mathbf{n},
\end{equation}
where $\downarrow$ is the forward operator for spatial degradation and $\mathbf{n}$ is the noise. By combining with Eq.~(\ref{MiT:eq6}), we can obtain:
\begin{equation}
	\mathbf{z}_T = (\mathbf{z}_0 \mathbb{T}^{\ast} \downarrow)\mathbb{T}  + \mathbf{n}_{z},
\end{equation}
Suppose $\mathbf{n}_{z} \sim \mathcal{N}(0,\sigma^2)$ is latent noise and follows Gaussian distribution, then Eq.~(\ref{LBPS:eq2}) can be:
\begin{equation}
	\!\!\nabla_{\mathbf{z}_t}\! \log p(\mathbf{z}_T\vert \widehat{\mathbf{z}}_0^{t}) \!= \!-\frac{1}{\sigma^2} \nabla_{\mathbf{z}_t} \|\mathbf{z}_T \!-\! (\widehat{\mathbf{z}}_0^t\mathbb{T}^{\ast}\downarrow)\mathbb{T}\|_2.
\end{equation}

\begin{algorithm}[t]
	\caption{UniPS: Training}
	\label{Alg:train}
	\KwIn{MS image $x$; PAN image $P$; Reference $y$; MiT $\mathcal{M}(\cdot)$, Infinite-UNet $\mathcal{U}(\cdot)$}
	
	\Repeat{converged}{
		
		$\mathbb{T} = \mathcal{M}(x\uparrow)$, $\mathbf{z}_T =  x\mathbb{T} \uparrow$, $\mathbf{z}_0 = y\mathbb{T}$
		
		$t\sim Uniform(1,\cdots,T)$;
		
		$\epsilon\sim\mathcal{N}(0,\boldsymbol{I})$;
		
		$\mathbf{z}_t = \mathbf{z}_T+ (\mathbf{z}_0-\mathbf{z}_T)\Theta_t + \Sigma_t \epsilon$;
		
		$\widehat{\mathbf{z}}_0^t = \mathcal{U}(z_t,z_T,P,t)$
		
		Take the gradient descent step on
		
		$ \nabla \big( \| \hat{\mathbf{z}}_0^t  -    y \mathbb{T} \|_1 + \|  \hat{\mathbf{z}}_0^t\mathbb{T}^{\ast} - y \|\big) $
		
	}  
	
\end{algorithm}
 
\begin{algorithm}[t]
	\caption{UniPS: Sampling}
	\label{Alg:sample}
	\KwIn{MS image $x$; PAN image $P$; Network $\mathcal{M}(\cdot), \mathcal{U}(\cdot)$; Guidance weight $\eta$.}
	
	$\mathbb{T} = \mathcal{M}(x\uparrow)$, $\mathbf{z}_T = x\mathbb{T}\uparrow $
	
	\For{\textnormal{$t=T-1$ to $1$}}{
		
		$\widehat{\mathbf{z}}_0^t = \mathcal{U}(z_t,z_T,P,t)$
		
		$\!\mathbf{z}_{t-1} \!=\! \mathbf{z}_T \!+\! {\frac{\Sigma_{t\!-\!1}}{\Sigma_{t}}}(\mathbf{z}_t \!-\!\mathbf{z}_T) \!+\! (\Theta_{t-1} \!-\! \Theta_{t} {\frac{\Sigma_{t\!-\!1}}{\Sigma_{t}}})(\widehat{\mathbf{z}}_0^t \!-\!\mathbf{z}_T)$

		$\widehat{\mathbf{z}}_{t-1} = \mathbf{z}_{t-1} + \eta \nabla_{\mathbf{z}_t} \|\mathbf{z}_T -  (\widehat{\mathbf{z}}_0^t\mathbb{T}^{\ast}\downarrow)\mathbb{T}\|_2$
		
	}
	
	$\widehat{y} =  \widehat{z}_{0} \mathbb{T}^{\ast}$
	
	\KwOut{$\widehat{y}$}
\end{algorithm}
\noindent Furthermore, we can derive the closed-form upper bound of the Jensen gap for the approximation in Eq.~(\ref{LBPS:eq2}) as follows:
\begin{equation}
	\!\!\!\mathcal{J}(\sigma,M) \! \le \! \frac{d}{\sqrt{2\pi\sigma^2}} e^{-\frac{1}{2\sigma^2}}||\nabla_{\mathbf{z}_t}  (\widehat{\mathbf{z}}_0^t\mathbb{T}^{\ast}\downarrow)\mathbb{T}||_2 M,
\end{equation}
where $M\!:=\!\int\! \|\mathbb{T}^{\ast}\mathbf{z}_0\!\downarrow \! - \! \mathbb{T}^{\ast}\widehat{\mathbf{z}}_0^t\!\downarrow\!\| d p(\mathbf{z}_0\vert \mathbf{z}_t)$. Note that $M$ is finite for most of the distribution in practice, which can be considered as the generalized absolute distance between the observed reference and estimated data. $\mathcal{J}(\sigma,M)$ can approach 0 as $\sigma\!\rightarrow\!0$ or $\infty$, suggesting that the approximation errors reduce when the measurement noise is extremely small or large. Specifically, if the predictions of $\widehat{\mathbf{z}}_0^t$ are accurate, the upper bound $\mathcal{J}(\sigma,M)\vert_{\sigma \!\rightarrow\! 0,M \!\rightarrow\! 0}$ shrinks due to the low variance and distortion. Oppositely, if the predictions lack accuracy, the upper bound of the $\mathcal{J}(\sigma,M)\vert_{\sigma \!\rightarrow\! \infty, M \!\rightarrow\! M_{max}}$ shrinks as well for large variance and limited distortion. For the inverse inference in Eq.~(\ref{sec:method_eq12}), we can employ bridge posterior sampling to guide the current state by the following modifications:
\begin{align}
	\widehat{\mathbf{z}}_0^t &:= \widehat{\mathbf{z}}_0^t + \eta_{z} \nabla_{\mathbf{z}_t} \|\mathbf{z}_T -  (\widehat{\mathbf{z}}_0^t\mathbb{T}^{\ast}\downarrow)\mathbb{T}\|_2, \label{eq22} \\
	\widehat{\epsilon}_t &:= \widehat{\epsilon}_t + \eta_{\epsilon} \nabla_{\mathbf{z}_t} \|\mathbf{z}_T -  (\widehat{\mathbf{z}}_0^t\mathbb{T}^{\ast}\downarrow)\mathbb{T}\|_2.
\end{align} 
Building on these theoretical foundations, BPS functions as a training-free guidance mechanism that enables flexible control over the fusion effect via a tunable weight $\eta_{(\cdot)}$. Larger values enforce stricter spectral consistency with the MS observation, whereas smaller values preserve more spatial priors, leading to improved structural details. Such a mechanism allows the model to adapt across varying satellite sensors and diverse scenarios, achieving scene-adaptive fusion without retraining. Unlike DPS~\cite{chung2023diffusion}, which performs posterior sampling solely in the image domain, our bridge posterior sampling explicitly leverages the reversible spectral projection learned by MiT. Consequently, the degradation model can be equivalently transferred into the latent domain, allowing posterior guidance to be performed directly on structured latent representations without sacrificing spectral fidelity. Besides, according to the manifold assumption~\cite{li2025back}, image data should lie on a low-dimensional manifold, whereas noised quantities do not. Therefore, we adopt the $\widehat{\mathbf{z}}_0^t$-prediction mode as training and sampling criteria, and the corresponding algorithms are detailed in Alg.~\ref{Alg:train} and Alg.~\ref{Alg:sample}, respectively.

\begin{figure*}[t]
	\centering
	\includegraphics[width=0.99\linewidth]{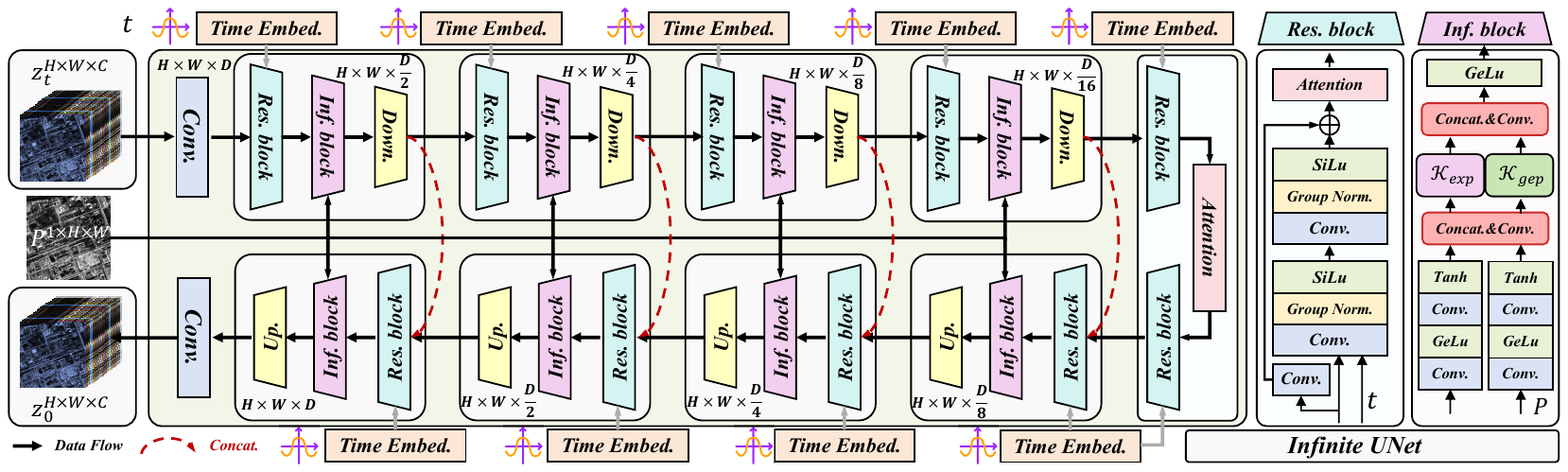}
	\vspace{-0.2em}
	\caption{The network architecture of Infinite-UNet in latent diffusion bridge model.}
	\label{method:1-2}
\end{figure*}

\subsection{Infinite Dimension Interaction}
To progressively refine the latent representation toward high-quality reconstruction, it is crucial to effectively exploit the pixel-domain PAN image as a spatial prior for latent enhancement. However, the domain discrepancy between pixel-space PAN features and latent-space MS representations makes such cross-domain interaction inherently challenging. Existing fusion modules, including concatenation, attention, and multiplicative interactions, primarily capture finite-order feature couplings. Although stacking multiple layers implicitly enriches interactions, the resulting interaction order remains fundamentally bounded by network depth. Therefore, instead of explicitly constructing increasingly higher-order interactions, we seek an implicit formulation capable of aggregating interactions of all orders.

Suppose the feature basis of pixel space to be $\mathcal{A} \! =\! \{\alpha_i\}_{i=1}^m$ and that of the latent space be $\mathcal{B} \!=\! \{\beta_j\}_{j=1}^n$ with degrees of freedom $m$ and $n$, respectively. Most methods simply concatenate them to construct the union space $S^{(1)}=\mathcal{A}\oplus \mathcal{B}  =\text{span}\{\alpha_1,\cdots,\alpha_m,\beta_1,\cdots,\beta_n\}$ with the degree of freedom of $d =  m+n$. This formulation shows that feature concatenation only constructs the first-order space without explicitly feature interactions. Star operation~\cite{ma2024rewrite} utilizes the Hadamard product $\odot$ to extend the feature interaction into an implicit interactive quadratic space $S^{(2)} = W^T\mathcal{V}\odot W^T\mathcal{V} = \text{span}\{\alpha_i\alpha_j,\alpha_i\beta_j,\cdots,\beta_i\beta_j\}$ with a degree of freedom $d(d+1)/2$ for $\mathcal{V} \in S^{(1)}$:
\begin{align}
	 &W^T\mathcal{V}\odot W^T\mathcal{V}\\
	=&\bigg(\sum_{i=1}^d \omega_i v^i\bigg)\odot \bigg(\sum_{j=1}^d \omega_j v^j\bigg)\\
	=&\sum_{i=1}^d \sum_{j=1}^d  \omega_i  \omega_j  v^i  v^j \\
	=&\underbrace{\omega_{1,1} v^1 \! v^1 \!+\! \cdots \!+\! \omega_{i,j}\! v^i v^j + \cdots\! +\! \omega_{d,d} v^d \! v^d}_{d(d+1)/2 \text{items}}.
\end{align}
This suggests that multiplicative operators implicitly lift feature interactions into higher-order polynomial spaces. Self-attention mechanism in Transformers can be interpreted as applying two successive Hadamard products among query, key, and value features~\cite{xu2024infinite}, which implicitly constructs a cubic interaction space $S^{(3)}$ with a degree of freedom $d(d+1)(d+2)/6$. Extending this observation, for a $k$-order Hadamard product interaction on features of total dimension $d$, the dimension of the feature interaction space can be:
\begin{equation}\label{Inf:eq0}
	dim(S^{(k)}) = \binom{d+k-1}{k} = \frac{(d+k-1)!}{(d-1)!k!}.
\end{equation}
The above analysis reveals that existing interaction mechanisms can all be interpreted as finite-order approximations of a unified interaction space. Motivated by that, we define an infinite-dimensional interaction space that simultaneously incorporates interactions of all orders to fully exploit feature interactions:
\begin{equation}\label{Inf:eq1}
	\!\!S^{\text{inf}} = \text{span}\{\bigcup_{k=1}^{\infty}\! S^{(k)}\},
\end{equation}
where the union collects interaction subspaces of different orders. Intuitively, a naive approach to realizing $S^{\text{inf}}$ would require explicitly computing and aggregating infinitely many Hadamard products, which is clearly computationally intractable. Instead, we derive two closed-form surrogate functions that implicitly aggregate all interaction orders, namely a geometric surrogate function $\mathcal{K}_{\text{geo}}(\cdot)$ and an exponential surrogate function $\mathcal{K}_{\text{exp}}(\cdot)$. Let $\mathcal{V} \in S^{(1)}$ be a feature representation, then these surrogate functions can implicitly capture and aggregate infinite-order interactions:
\begin{align}
	\!\!\mathcal{K}_{\text{geo}}(\mathcal{W^TV}) &\!= \!\frac{1}{1 - (\mathcal{W^TV})}\!= \!\sum_{k=0}^{\infty} \mathcal{(W^TV)}^{\odot k}, \label{Inf:eq2}\\
	\!\!\mathcal{K}_{\text{exp}}(\mathcal{W^TV}) &\!= \!\exp{(\mathcal{W^TV})} \!=\! \sum_{k=0}^{\infty} \frac{(\mathcal{W^TV})^{\odot k}}{k!}\label{Inf:eq3},
\end{align}
where $(\cdot)^{\odot k}$ denotes the $k$-fold Hadamard power and $W^T$ is a weight matrix that can be densely or sparsely implemented via a linear layer or a convolution layer, respectively. These expansions generate the complete set of all symmetric monomials spanning $S^{\infty}$. Notably, the two surrogate functions exhibit complementary characteristics. $\mathcal{K}_{\text{geo}}$ assigns equal weight to all interaction orders, while $\mathcal{K}_{\text{exp}}$ imposes factorial decay on higher-order terms, effectively emphasizing lower-order interactions. Both of them enable the full-order feature interactions, without explicit order selection or truncation. For numerical stability, the exponential kernel is kept strictly within a bounded range and thus immune to overflow by constraining the extracted feature with $tanh(\cdot)$ activation (Fig.~\ref{method:1-2}). Meanwhile,  the geometric kernel is stabilized by using $tanh(\cdot)$ activation to avoid singularities.
 
\subsection{Network Architecture}
Our network architecture comprises the modality-interleaved transformer and the latent diffusion bridge model. Specifically, MiT comprises a router network and a set of feed-forward expert layers. The former integrates a patch-embedding module, a multi-layer Transformer stack, and a query-cached attention pooling mechanism, followed by a multi-layer perception (MLP) head that produces expert selection scores. The latter consists of multiple parallel sub-networks, each implemented as a two-layer MLP with gated activation functions, enabling adaptive band specialization. LDBM adopts a U-shaped denoising architecture with symmetric encoder-decoder layers and skip connections, termed Infinite-UNet (Fig.~\ref{method:1-2}). Initially, it utilizes a convolutional layer to project input into a $D$-dimension feature space. Then it employs time-conditioned residual blocks modulated by sinusoidal embeddings to enable step-wise feature adaptation along the diffusion trajectory. To capture long-range dependencies, standard self-attention modules are inserted across scales. Moreover, infinite dimension interaction blocks are inserted at each stage, leveraging geometric and exponential kernels to model full-order interactions between PAN observation and latent MS features. Additionally, multi-scale representation is learned via strided convolutions in encoder and up-sampling followed by convolution in decoder, maintaining spatial alignment.

\begin{figure*}[t]
	\centering
	\includegraphics[width=0.99\linewidth]{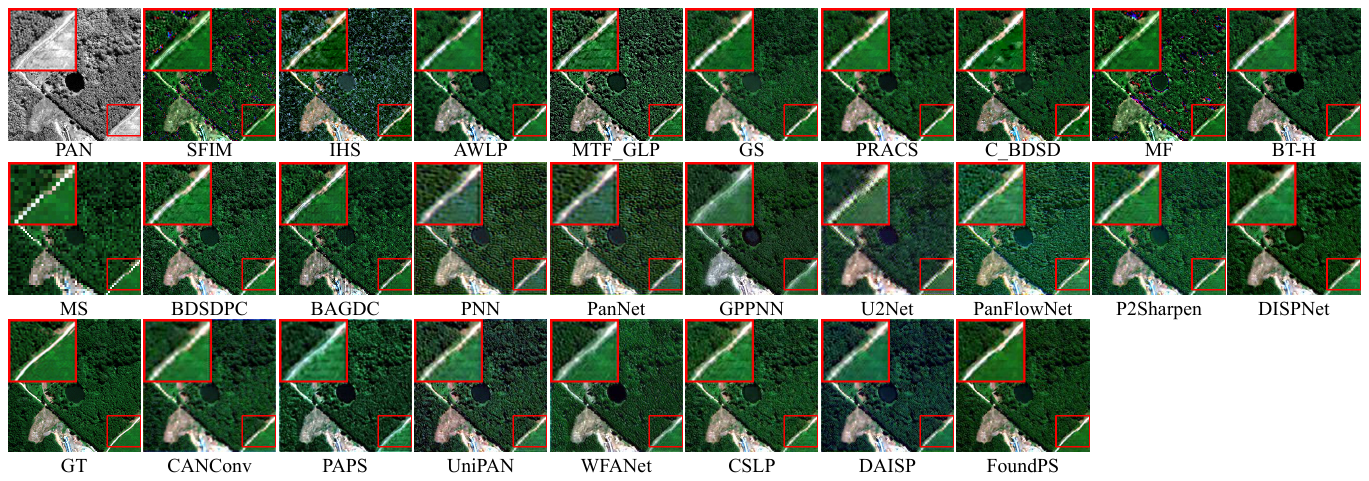}
	\caption{Visual comparisons of reduced scale on 4-band PSBench.}
	\label{exp:reduced_4}
\end{figure*}

\begin{figure*}[t]
	\centering
	\includegraphics[width=0.99\linewidth]{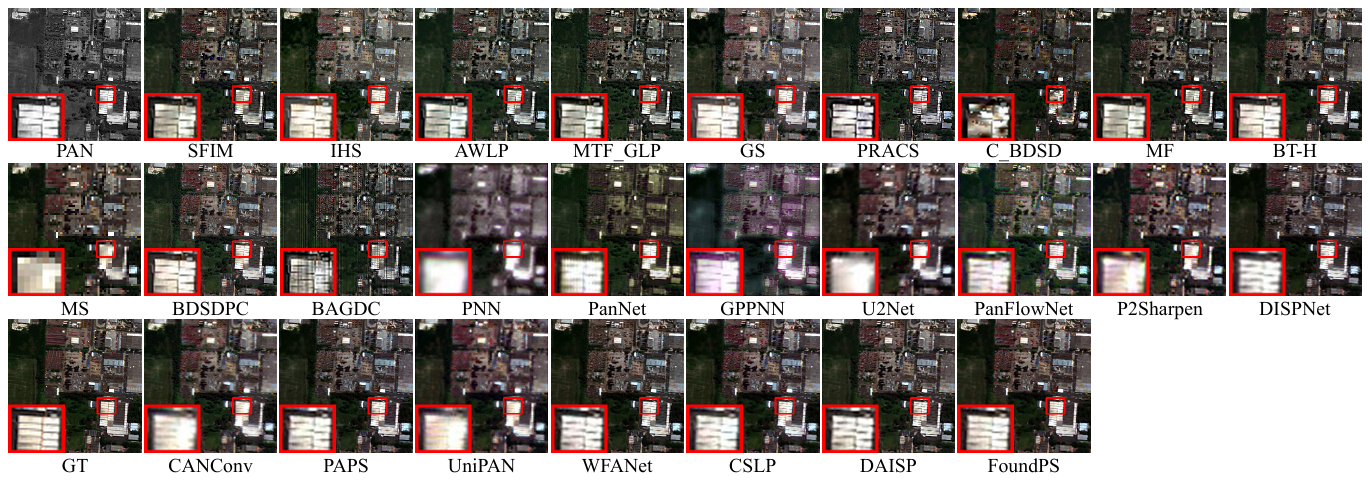}
	\caption{Visual comparisons of reduced scale on 8-band PSBench.}
	\label{exp:reduced_8}
\end{figure*}

\subsection{Loss Functions}
To stabilize the fusion performance, we employ a coupled loss that enforces cross-domain consistency by jointly constraining the latent representation and the reconstructed image. Specifically, given the predicted high-quality latent representations $\widehat{\mathbf{z}}_0^t$ at time $t$, the loss is formulated as:
\begin{equation}
	\mathcal{L}_{\text{ref}} 
	= \| \widehat{\mathbf{z}}_0^t -  y \mathbb{T} \|_1 
	+ \|  \widehat{\mathbf{z}}_0^t  \mathbb{T}^{\ast} - y \|_1 ,
\end{equation}
where $\|\cdot\|_1$ represents $\ell_1$ loss. These two items share a similar formulation but impose complementary constraints. The first term constrains the unified latent space by enforcing information consistency across all latent dimensions, particularly the extended dimensions, preventing them from drifting without semantic constraints and stabilizing latent evolution. The second term enforces reconstruction consistency in the image domain through the inverse projection, preserving spectral fidelity and observation consistency. 

  
\begin{table*}[t]
	\centering
	\renewcommand\arraystretch{1.0}
	\caption{Quantitative comparisons on reduced scale over the PSBench. The best and second best results are shown in \textbf{bold} and \underline{underline} respectively. The up or down arrows indicate higher or lower values correspond to better results.}
	\label{tab:reduced}
	\resizebox{2.10\columnwidth}{!}{
		\begin{tabular}{c||c||c@{\,~}c@{\,~}c@{\,~}c@{\,~}|c@{\,~}c@{\,~}c@{\,~}c@{\,~}|c@{\,~}c@{\,~}c@{\,~}c@{\,~}|c@{\,~}c@{\,~}c@{\,~}c@{\,~}||c@{\,~}c@{\,~}c@{\,~}c@{\,~}}
			\thickhline
			&  & \multicolumn{4}{c|}{4 Bands} & \multicolumn{4}{c|}{7 Bands} & \multicolumn{4}{c|}{8 Bands} &  \multicolumn{4}{c|}{10 Bands} & \multicolumn{4}{c}{Average}  \\ 
			\multirow{-2}{*}{\textbf{Method}}  & \multirow{-2}{*}{\textbf{Year}} & PSNR$\uparrow$	  & SSIM$\uparrow$ & ERGAS$\downarrow$ & SAM$\downarrow$ & PSNR$\uparrow$	  & SSIM$\uparrow$ & ERGAS$\downarrow$ & SAM$\downarrow$ & PSNR$\uparrow$	  & SSIM$\uparrow$ & ERGAS$\downarrow$ & SAM$\downarrow$ & PSNR$\uparrow$	  & SSIM$\uparrow$ & ERGAS$\downarrow$ & SAM$\downarrow$ & PSNR$\uparrow$	  & SSIM$\uparrow$ & ERGAS$\downarrow$ & SAM$\downarrow$\\
			\thickhline
			\multicolumn{22}{l}{\textbf{Traditional Method}}\\
			\hline
			SFIM~\cite{liu2000smoothing}& 2000  &   26.901   &   0.796    &   19.712   &   5.697    &   29.643   &   0.899    &   8.327    &   2.227    &   26.770   &   0.793    &   35.380   &   6.176   &   33.330   &   0.958    &   4.220    &   2.756     &   29.475   &   0.867    &   14.917   & 4.186      \\
			IHS~\cite{tu2001new} & 2001   &   22.947   &   0.712    &   10.916   &   7.715    &   24.400   &   0.857    &   4.166    &   3.026    &   25.835   &   0.708    &   11.245   &   8.607    
			&   33.422   &   0.955    &   3.199    &   2.450    & 27.060 & 0.817 & 7.234 & 5.286      \\
			AWLP~\cite{otazu2005introduction}& 2005  
			&   28.919   &   0.799    &   6.916    &   6.998   	&   28.923   &   0.744    &   2.586    &   2.160    &   24.520   &   0.782    &   10.888   &   10.929   &   34.180   &   0.867    &   2.509    &   1.960   & 29.994 & 0.809 & 5.286 & 5.065      \\
			MTF\_GLP~\cite{aiazzi2006mtf} & 2006  
			&   24.682   &   0.771    &   16.405   &   6.401  &   20.445   &   0.866    &   4.878    &   1.842  &   26.609   &   0.772    &   10.462   &   6.104   &   32.696   &   0.955    &   3.091    &   1.237    & 26.833 & 0.848 & 9.115 & 3.854      \\
			GS~\cite{aiazzi2007improving}& 2007  &   27.064   &   0.804    &   11.141   &   5.314 	&   23.299   &   0.880    &   5.202    &   2.363   &   26.123   &   0.810    &   13.529   &   6.333    &   36.638   &   0.967    &   2.037    &   1.503   & 29.387 & 0.872 & 7.453 & 3.690   \\
			PRACS~\cite{choi2010new} & 2010  &   28.298   &   0.762    &   8.300    &   7.238    
			&   27.835   &   0.708    &   3.505    &   3.684    &   26.815   &   0.760    &   10.846   &   8.887  	&   31.732   &   0.794    &   4.545    &   3.801    & 29.121 & 0.763 & 6.590 & 5.722   \\
			C\_BDSD~\cite{garzelli2014pansharpening} & 2014 &   30.169   &   0.803    &   10.550   &   5.900    &   27.683   &   0.741    &   3.778    &   4.106   &   25.052   &   0.753    &   20.561   &   9.290    &   33.132   &   0.863    &   3.553    &   3.768  & 29.942 & 0.804 & 8.526 & 5.381  	 \\
			MF~\cite{restaino2016fusion} & 2016 &   24.726   &   0.773    &   25.941   &   5.498    
			&   25.574   &   0.899    &   10.479   &   3.032   &   27.600   &   0.818    &   30.155   &   5.544    &   33.300   &   0.959    &   4.229    &   2.694  & 28.107 & 0.863 & 16.696 & 4.147   	 \\ 
			BT-H~\cite{lolli2017haze} & 2017 &   26.504   &   0.757    &   17.324   &   6.414   &   24.610   &   0.908    &   3.866    &   3.271    &   27.431   &   0.834    &   10.455   &   5.360    
			&   29.320   &   0.936    &   3.456    &   2.350    
			& 27.224 & 0.854 & 9.372 & 4.368   	\\
			BDSDPC~\cite{vivone2019robust}& 2019 &   30.107   &   0.751    &   9.054    &   6.005    
			&   29.891   &   0.781    &   2.857    &   2.955   &   24.062   &   0.637    &   14.902   &   9.459  	&   34.333   &   0.874    &   3.192    &   2.863    
			& 30.561 & 0.780 & 6.891 & 4.941   \\
			BAGDC~\cite{lu2021unified} & 2021  &   28.507   &   0.737    &   9.180    &   4.965   &   29.581   &   0.707    &   2.229    &   1.331   &   22.292   &   0.614    &   17.573   &   9.017   &   33.396   &   0.803    &   2.492    &   1.721   & 29.384 & 0.735 & 6.987 & 3.852   	\\
			\hline
			\multicolumn{22}{l}{\textbf{Deep Learning Method}}\\
			\hline
			PNN~\cite{masi2016pansharpening}& 2016 	&   30.537  &   0.847   &   6.178   &   6.935 	&   30.392   &   0.872    &   2.827    &   3.642	&   27.949   &   0.741    &   10.781   &   9.254 	&   37.402   &   0.945    &   2.543    &   3.205
			& 32.377 & 0.868 & 5.068 & 5.468\\
			PanNet~\cite{yang2017pannet}& 2017 	&   30.649  &   0.857   &   7.330   &   8.023	&   31.484   &   0.903    &   2.491    &   3.328	&   28.796   &   0.799    &   13.157   &   10.502	&   38.524   &   0.959    &   1.749    &   2.847	& 33.103 & 0.890 & 5.498 & 5.856\\
			GPPNN~\cite{xu2021deep}& 2021  	&  31.553   &   0.856   &   6.481   &   6.561 	&  32.063   &   0.921    &   2.296    &   2.987	&  29.696   &   0.815    &   10.490   &   8.893	&  39.871   &   0.966    &   1.681    &   2.464 & 34.093 & 0.898 & 4.754 & 4.926\\
			U2Net~\cite{peng2023u2net} & 2023	&  32.078   &   0.826   &   6.128   &   4.946 	&  31.378   &   0.815   &   2.496   &   2.513	&  30.060   &   0.785   &   9.269   &   5.176 	&  39.136   &   0.935   &   1.698   &   2.511	& 33.966 & 0.854 & 4.493 & 3.749\\
			PanFlowNet~\cite{yang2023panflownet}& 2023	&  32.938   &   0.887    &   5.711    &   5.918 
			&  32.699   &   0.917    &   1.988    &   2.638	&  30.504   &   0.836    &   9.864    &   8.953
			&  40.361   &   0.979    &   1.757    &   1.505   & 34.966 & 0.915 & 4.364 & 4.335	\\
			P2Sharpen~\cite{zhang2023p2sharpen}& 2023	&  32.249   &   0.893   &  11.359   &   5.379 	&  34.876   &   0.936   &   1.533   &   2.068	&  30.631   &   0.848   &   9.461   &   7.236	&  40.181   &   0.958   &   1.558   &   2.263	& 35.076 & 0.915 & 6.118 & 4.042\\
			DISPNet~\cite{wang2024deep}& 2024 	&  32.732   &   0.892   &   8.139   &   4.837  	&  35.443   &   0.939   &   1.408   &   1.738	&  31.910   &   0.877   &   7.244   &   5.438	&  40.538   &   0.976   &   1.327   &   2.107 	& 35.650 & 0.926 & 4.577 & 3.479\\  
			CANConv~\cite{duan2024content} & 2024 	&  33.129   &   0.873   &   6.185   &   4.652	&  35.111   &   0.932   &   1.964   &   2.040	&  32.302   &   0.879   &   7.399   &   5.578 		&  40.139   &   {\underline{{0.982}}}    &   1.510    &   2.366 & 35.655 & 0.920 & 4.081 & 3.574\\
			PAPS~\cite{jia2024paps} & 2024 	&  33.053   &   0.883   &   5.619   &   5.429  &  35.242   &   0.944   &   1.489   &   1.849	&  32.958   &   {\underline{{0.903}}}   &   7.343   &   5.102 	&  41.078   &   0.969   &   1.333   &   1.968    & 36.056 & 0.925 & 3.733 & 3.609\\
			UniPAN~\cite{cui2025enpowering}& 2025 	&  33.836   &   0.892   &   6.007   &   4.970	&  35.587   &   0.945   &   1.535   &   1.947	&  30.323   &   0.827    &   9.449    &   8.065		&   41.879   &   0.979    &   1.299    &   1.923	& 36.263 & 0.920 & 4.175 & 3.889 \\
			WFANet~\cite{huang2025wavelet}& 2025&  32.693   &   0.868    &   5.722    &   5.842	&  {\underline{{37.098}}}   &   0.928    &   1.314    &   {\underline{{1.513}}}  	&  {\underline{{33.283}}}   &   0.895    &   {\underline{{6.063}}}   &   5.383  	&  {\underline{{43.365}}}   &   0.967    &   {\underline{{1.109}}}    &  {\underline{{1.492}}} 	& 37.059 & 0.915 & 3.476 & 3.578\\
			CSLP~\cite{chen2025cslp}& 2025	&  {\underline{{35.506}}}   &  {\underline{{0.906}}}    &   {\underline{{4.346}}}    &   {\underline{{3.306}}}	&  36.873   &   0.948    &   1.319    &   1.630	&  33.156   &   0.902    &   6.859    &   {\underline{{4.887}}}	&  41.380   &   0.979    &   1.375    &   1.824 & {\underline{{37.326}}} & {\underline{{0.937}}} & {\underline{{3.204}}} & {\underline{{2.755}}}\\
			DAISP~\cite{wang2025deep}& 2025 &  34.577   &   0.897   &   5.653   &   4.326	&  37.087   &   {\underline{{0.950}}}   &   {\underline{{1.251}}}   &   1.589	&  32.608   &   0.892   &   7.016   &   4.967	&  42.474   &   0.977   &   1.144   &   1.647	& 37.320 & 0.932 & 3.592 & 3.055\\
			\hline
			\multicolumn{22}{l}{\textbf{Universal Method}} \\
			\hline
			UniPS-T & -  &   35.609   &   0.907    &   4.499    &   3.691    &   35.960   &   0.950    &   1.414    &   1.863    &   31.771   &   0.872    &   7.782    &   5.876    &   40.992   &   0.980    &   1.410    &   2.107    &   36.867   &   0.934    &   3.417    &   3.166  \\
			UniPS-S & -   &   35.526   &   0.907    &   4.046    &   3.539    &   38.517   &   0.954    &   1.111    &   1.278    &   31.871   &   0.871    &   6.348    &   5.632    &   43.162   &   0.985    &   0.902    &   1.500    &   38.021   &   0.936    &   2.830    &   2.774    \\
			UniPS-B & - &   36.616   &   0.925    &   5.063    &   3.046    &   39.009   &   0.957    &   1.073    &   1.115    &   33.360   &   0.907    &   6.632    &   4.813    &   45.139   &   0.987    &   0.875    &   1.170    &   39.351   &   0.949    &   3.209    &   2.346      \\
			UniPS-L & -   &   \textbf{37.370}   &   \textbf{0.929}    &   \textbf{3.636}    &   \textbf{2.825}    &   \textbf{41.307}   &   \textbf{0.972}    &   \textbf{0.804}    &   \textbf{0.904}    &   \textbf{34.063}   &   \textbf{0.913}    &  \textbf{5.531}    &   \textbf{4.402}    &   \textbf{47.614}   &   \textbf{0.991}    &   \textbf{0.640}    &   \textbf{0.887}    &   \textbf{40.937}   &   \textbf{0.955}    &  \textbf{2.427}    &   \textbf{2.079}    \\	 
			\thickhline
		\end{tabular}
	}
\end{table*}

\begin{figure*}[t]
	\centering
	\includegraphics[width=0.99\linewidth]{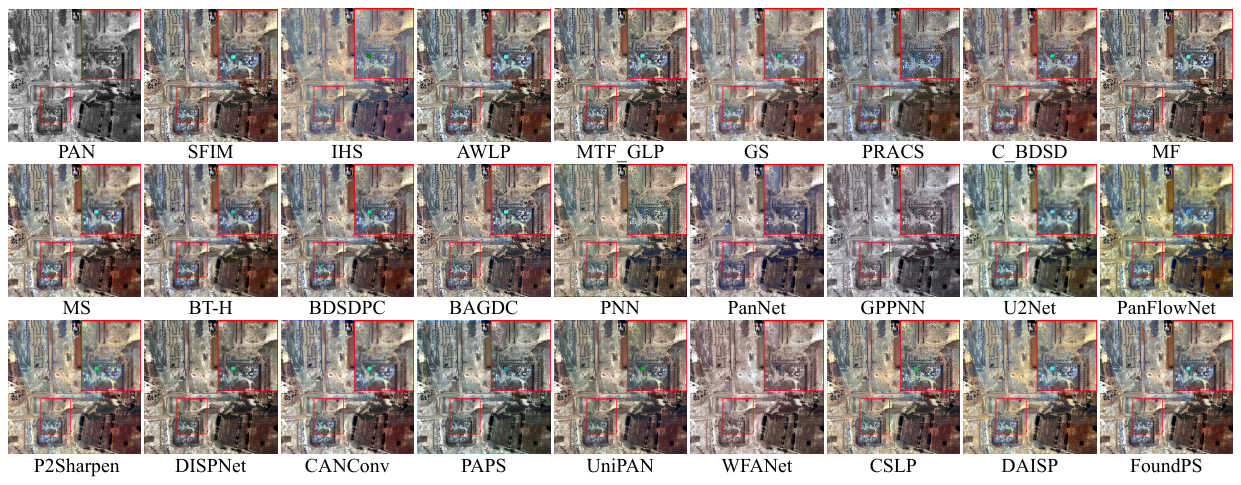}
	\caption{Visual comparisons of full scale on 4-band PSBench.}
	\label{exp:full_4}
\end{figure*}

\section{Experimental Results}\label{sec:4}

In this section, we conduct comprehensive experiments to evaluate the performance of UniPS. We first describe the experimental settings and implementation details. We then compare our approach with state-of-the-art methods on PSBench. Subsequently, ablation studies and efficiency analyses are performed to assess the effectiveness of individual design components. Afterwards, we evaluate the generalization capability and segmentation performance on other public datasets and assess the classical remote sensing applications on PSBench for all methods.

\begin{figure*}[t]
	\centering
	\includegraphics[width=0.99\linewidth]{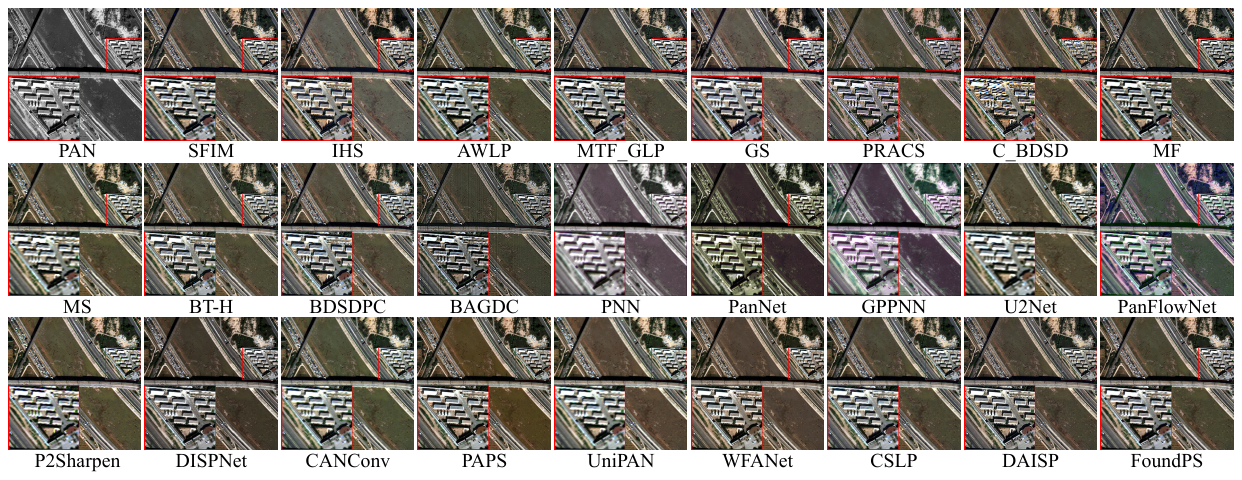}
	\caption{Visual comparisons of full scale on 8-band PSBench.}
	\label{exp:full_8}
\end{figure*}

\subsection{Experiment Configurations}
Extensive experiments are conducted to thoroughly evaluate our methods on both reduced and full scales. As for validation on the reduced scale, the degraded MS and PAN images are regarded as the inputs, while the original MS images are used as the groundtruth by following the Wald protocol~\cite{wald1997fusion}. We use four reference metrics, namely PSNR~\cite{huynh2008scope}, SSIM~\cite{wang2004image}, ERGAS~\cite{wald2002data} and SAM~\cite{alparone2007comparison} to evaluate the fusion performance. It is worth mentioning that the larger values of PSNR and SSIM indicate the higher quality, while the smaller values of ERGAS and SAM mean the better performance. As for the validation on the full scale, three mainstream metrics are used to assess the fusion performance, including $D_\lambda$, $D_s$, and QNR~\cite{alparone2008multispectral}. Notably, these non-reference metrics are evaluated on artificially manipulated parameters, which do not necessarily coincide with the visual effects. Several representative methods are selected for comprehensive comparisons at both reduced and full scales. Traditional methods include SFIM~\cite{liu2000smoothing}, IHS~\cite{tu2001new}, AWLP~\cite{otazu2005introduction}, MTF\_GLP~\cite{aiazzi2006mtf}, GS~\cite{aiazzi2007improving}, PRACS~\cite{choi2010new}, C\_BDSD~\cite{garzelli2014pansharpening}, MF~\cite{restaino2016fusion}, BT-H~\cite{lolli2017haze}, BDSDPC~\cite{vivone2019robust} and BAGDC~\cite{lu2021unified}. Deep learning methods are PNN~\cite{masi2016pansharpening}, PanNet~\cite{yang2017pannet}, GPPNN~\cite{xu2021deep}, U2Net~\cite{peng2023u2net}, PanFlowNet~\cite{yang2023panflownet}, P2Sharpen~\cite{zhang2023p2sharpen}, DISPNet~\cite{wang2024deep}, CANConv~\cite{duan2024content}, PAPS~\cite{jia2024paps}, UniPAN~\cite{cui2025enpowering}, WFANet~\cite{huang2025wavelet}, CSLP~\cite{chen2025cslp} and DAISP~\cite{wang2025deep}. Since PSBench provides MS images with different spectral configurations, all deep learning methods are re-trained separately on each task-specific dataset with necessary network modifications.

Our UniPS is trained using 8 NVIDIA A100 80GB GPUs with the PyTorch framework for 144h. Adam optimizer and L1 loss are employed for 500k iterations with a learning rate of 1e-4. We set the total batch size as 32 and distribute it evenly to each task. We randomly crop $256\times256$ PAN patches and MS patches with corresponding scale factor as network input for training and use 10 timesteps for both reduced- and full-scale testing. The maximum number of experts is $16$. Besides, we propose several Infinite-UNet variants by changing the channel number of the hidden layers dimension $D$ to obtain different versions with varied parameter quantities: 
\begin{itemize}
	\item[$\bullet$] UniPS-T: $D$=32, channel multiplier = \{1,1,1,1\}
	\item[$\bullet$] UniPS-S: $D$=64, channel multiplier = \{1,1,1,1\}
	\item[$\bullet$] UniPS-B: $D$=32, channel multiplier = \{1,2,2,4\}	
	\item[$\bullet$] UniPS-L: $D$=64, channel multiplier = \{1,2,2,4\}
\end{itemize}
We adopt the default noise schedule and stationary variance $\lambda = 0.001$ in~\cite{wang2025residual} with total diffusion steps as 1000.  Notably, UniPS refers to UniPS-L unless otherwise stated. 

\begin{table*}[t]
	\centering
	\renewcommand\arraystretch{1.00}
	\caption{Quantitative comparisons on full scale over the PSBench. The best and second best results are shown in {\textbf{bold}} and {\underline{underline}} respectively. The up or down arrows indicate higher or lower values correspond to better results.}
	\label{tab:full}
	\resizebox{2.10\columnwidth}{!}{
		\begin{tabular}{c||c||ccc|ccc|ccc|ccc||ccc}
			\thickhline
			&  & \multicolumn{3}{c|}{4 Bands} & \multicolumn{3}{c|}{7 Bands} & \multicolumn{3}{c|}{8 Bands} &  \multicolumn{3}{c|}{10 Bands} & \multicolumn{3}{c}{Average}  \\  
			\multirow{-2}{*}{\textbf{Method}} & \multirow{-2}{*}{\textbf{Year}}	& QNR$\uparrow$ & $D_\lambda\downarrow$ & $D_s\downarrow$  & QNR$\uparrow$ & $D_\lambda\downarrow$ & $D_s\downarrow$ &  QNR$\uparrow$ & $D_\lambda\downarrow$ & $D_s\downarrow$ &  QNR$\uparrow$ & $D_\lambda\downarrow$ & $D_s\downarrow$  & QNR$\uparrow$ & $D_\lambda\downarrow$ & $D_s\downarrow$   \\
			\thickhline
			\multicolumn{17}{l}{\textbf{Traditional Method}}\\
			\hline
			SFIM~\cite{liu2000smoothing}& 2000 &   0.7849   &   0.0919   &   0.1372   &   0.8115   &   0.0768   &   0.1227   &   0.8437   &   0.0719   &   0.0918   &   0.8158   &   0.0580   &   0.1343  & 0.8084 & 0.0752 & 0.1270	\\
			IHS~\cite{tu2001new} & 2001 &   0.5925   &   0.1055   &   0.3393   &   0.6957   &   0.1360   &   0.2002   &   0.8112   &   0.0875   &   0.1128   &   0.7444   &   0.1408   &   0.1356   & 0.6928 & 0.1198 & 0.2145 \\
			AWLP~\cite{otazu2005introduction}& 2005 &   0.7733   &   0.0839   &   0.1619   &   0.8032   &   0.0984   &   0.1112   &   0.8533   &   0.0486   &   0.1042   &   0.7354   &   0.1550   &   0.1305   
			& 0.7780 & 0.1046 & 0.1341	\\
			MTF\_GLP~\cite{aiazzi2006mtf} & 2006 &   0.7054   &   0.1175   &   0.2051   &   0.8200   &   0.0880   &   0.1035   &   0.8064   &   0.0855   &   0.1196   &   0.7725   &   0.0520   &   0.1857    
			& 0.7627 & 0.0861 & 0.1680	 \\
			GS~\cite{aiazzi2007improving}& 2007 &   0.6515   &   0.0681   &   0.3064   &   0.7734   &   0.0787   &   0.1654   &   0.8625   &   0.0426   &   0.0998   &   0.8235   &   0.0507   &   0.1335   
			& 0.7606 & 0.0605 & 0.1942	   \\
			PRACS~\cite{choi2010new} & 2010 &   0.6979   &   0.0921   &   0.2436   &   0.7238   &   0.1479   &   0.1537 &  0.7620   &   0.1172   &   0.1382   &   0.7486   &   0.1130   &   0.1571   
			& 0.7286 & 0.1126 & 0.1837 	 \\
			C\_BDSD~\cite{garzelli2014pansharpening} & 2014 &   {\underline{0.9173}}   &   {\underline{0.0307}}   &   {\underline{0.0544}}   &  0.8075   &   0.0899   &   0.1137     &   0.8020   &   0.0595   &   0.1472   &   0.8252   &   0.0516   &   0.1306 
			& 0.8506 & 0.0523 & {\underline{0.1036}}		 		\\
			MF~\cite{restaino2016fusion} & 2016 &   0.7162   &   0.1135   &   0.1964   &   0.7570   &   0.0996   &   0.1610    &   0.8346   &   0.0768   &   0.0971   &   0.7911   &   0.0719   &   0.1480   
			& 0.7654 & 0.0920 & 0.1596   	 \\ 
			BT-H~\cite{lolli2017haze} & 2017 &   0.6991   &   0.1360   &   0.2056   &   0.8299   &   0.0792   &   {\underline{0.1002}}   &   0.8545   &   0.0546   &   0.0979   &   0.7533   &   0.0904   &   0.1724  
			& 0.7631 & 0.0990 & 0.1601   	  \\
			BDSDPC~\cite{vivone2019robust}& 2019&   0.9038   &   0.0347   &   0.0657   &  0.8190   &   0.0817   &   0.1093   &   0.8590   &   0.0498   &   0.0973 & 0.7517   &   0.0981   &   0.1677    
			& {\underline{0.8323}} & 0.0661 & 0.1115   	   \\
			BAGDC~\cite{lu2021unified} & 2021 &   0.8306   &   0.0826   &   0.1040   &   0.6927   &   0.2217   &   0.1117   &    0.7539   &   0.1090   &   0.1541   &   0.6311   &   0.2238   &   0.1874   
			& 0.7294 & 0.1575 & 0.1400 	   \\
			\hline
			\multicolumn{17}{l}{\textbf{Deep Learning Method}}\\
			\hline
			PNN~\cite{masi2016pansharpening}& 2016 	&   0.6454   &   0.1459   &   0.2484    &   0.7985   &   0.1125   &   0.1013    &   0.7992   &   0.0889   &   0.1230     &   0.7035   &   0.1307   &   0.1922    	& 0.7144 & 0.1266 & 0.1853  \\
			PanNet~\cite{yang2017pannet}& 2017  &   0.6151   &   0.1285   &   0.2971  	&   0.8157   &   0.0816   &   0.1124	&   0.8162   &   0.0818   &   0.1116 &   0.7322   &   0.1274   &   0.1619   & 0.7188 & 0.1129 & 0.1926    \\
			GPPNN~\cite{xu2021deep}& 2021	&   0.6313   &   0.1474   &   0.2638   	&   0.7673   &   0.1011   &   0.1471  	&   0.7969   &   0.1039   &   0.1118   &   0.7681   &   0.1139   &   0.1337 	& 0.7247 & 0.1218 & 0.1780  \\
			U2Net~\cite{peng2023u2net} & 2023	&   0.8469   &   0.0696   &   0.0897    &   0.7984   &   0.1107   &   0.1028
			&   {\underline{{0.8639}}}   &   {\underline{{0.0272}}}   &   0.1121    &   0.7167   &   0.1366   &   0.1706  & 0.7981 & 0.0926 & 0.1218\\
			PanFlowNet~\cite{yang2023panflownet}& 2023	&   0.7472   &   0.1273   &   0.1459  &   0.7731   &   0.1073   &   0.1347	&   0.8254   &   0.0792   &   0.1038	&   0.8160   &   0.0698   &   0.1230   	& 0.7859 & 0.0978 & 0.1302 \\
			P2Sharpen~\cite{zhang2023p2sharpen}& 2023	&   0.7819   &   0.0520   &   0.1758   &   0.8054   &   0.0728   &   0.1321  &   0.8554   &   0.0595   &   0.0908 &   0.7937   &   0.0886   &   0.1293 & 0.8008 & 0.0688 & 0.1403 \\
			DISPNet~\cite{wang2024deep}& 2024	&   0.7682   &   0.0903   &   0.1560 &   0.8413   &   0.0553   &   0.1097   &   0.8274   &   0.0918   &   0.0895 &   0.8133   &   0.0612   &   0.1341 & 0.8047 & 0.0748 & 0.1308\\  
			CANConv~\cite{duan2024content} & 2024	&   0.7768   &   0.0416   &   0.1915   &   0.8208   &   0.0642   &   0.1233	&   0.8631   &   0.0513   &   0.0907  &   0.8280   &   0.0532   &   0.1257   & 0.8141 & {\underline{{0.0509}}} & 0.1430   \\
			PAPS~\cite{jia2024paps} & 2024 	&   0.7439   &   0.1164   &   0.1597   &   0.8065   &   0.0782   &   0.1260   	&   0.8503   &   0.0593   &   0.0965   	&   0.7917   &   0.0964   &   0.1242  & 0.7864 & 0.0946 & 0.1328\\
			UniPAN~\cite{cui2025enpowering}& 2025	&   0.7780   &   0.0693   &   0.1640    &   0.8121   &   0.0696   &   0.1278   &   0.8601   &   0.0557   &   0.0893  	&   {\underline{{0.8286}}}   &   0.0609   &   {\underline{{0.1178}}}  & 0.8127 & 0.0646 & 0.1314 \\
			WFANet~\cite{huang2025wavelet}& 2025&   0.7109   &   0.1030   &   0.2062 	&   {\underline{{0.8512}}}   &   {\underline{{0.0453}}}   &   0.1086 	&   0.8560   &   0.0619   &  {\underline{{0.0878}}} 	&   0.7969   &   0.0789   &   0.1351   
			& 0.7854 & 0.0788 & 0.1481 \\
			CSLP~\cite{chen2025cslp}& 2025 	&   0.7782   &   0.0707   &   0.1641 &   0.8300   &   0.0556   &   0.1216 		&   0.8618   &   0.0530   &   0.0908  &   0.8226   &   0.0651   &   0.1202  & 0.8143 & 0.0636 & 0.1314 \\
			DAISP~\cite{wang2025deep}& 2025	 &   0.7805   &   0.0619   &   0.1688  &   0.8342   &   0.0481   &   0.1242  &   0.8152   &   0.0911   &   0.1041   &   0.8211   &   {\underline{{0.0515}}}   &   0.1346 & 0.8085 & 0.0603 & 0.1401  \\
			\hline
			\multicolumn{17}{l}{\textbf{Universal Method}} \\
			\hline
			UniPS-T & - &   0.8377   &   0.0661   &   0.1040   &   0.7969   &   0.0828   &   0.1322   &   0.8577   &   0.0590   &   0.0889   &   0.7416   &   0.1291   &   0.1496   &   0.8019   &   0.0886   &   0.1218  \\
			UniPS-S & -  &   0.8633   &   0.0440   &   0.0972   &   0.8414   &   0.0481   &   0.1165   &   0.8711   &   0.0478   &   0.0854   &   0.8049   &   0.0578   &   0.1459   &   0.8414   &   0.0498   &   0.1148  \\
			UniPS-B & -  &   0.9135   &   0.0286   &   0.0601   &   0.8677   &   0.0300   &   0.1057   &   0.8850   &   0.0403   &   0.0782   &   0.8402   &   0.0426   &   0.1226   &   0.8771   &   0.0352   &   0.0914   \\
			UniPS-L & -&   \textbf{0.9327}   &   \textbf{0.0170}   &   \textbf{0.0514}   &   \textbf{0.9009}   &   \textbf{0.0108}   &   \textbf{0.0893}   &   \textbf{0.9135}   &   \textbf{0.0186}   &   \textbf{0.0693}   &   \textbf{0.8679}   &   \textbf{0.0215}   &   \textbf{0.1131}   &   \textbf{0.9030}   &   \textbf{0.0176}   &   \textbf{0.0810}    \\	 
			\thickhline
		\end{tabular}
	}
\end{table*}

\begin{figure*}[htp] 
\centering
\subfigure[Expert distribution]{
	\includegraphics[width=0.18\linewidth]{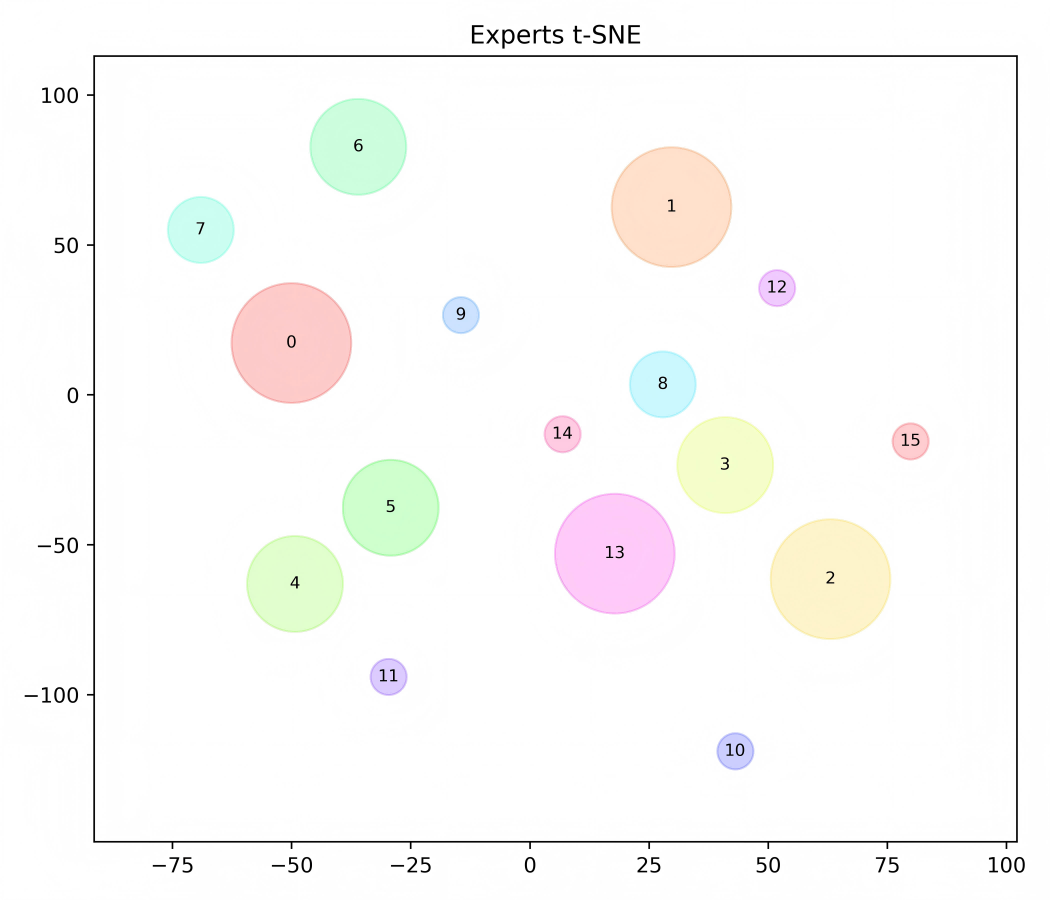}
	\label{2-1-x}
}
\subfigure[Reduced-scale MS]{
	\includegraphics[width=0.18\linewidth]{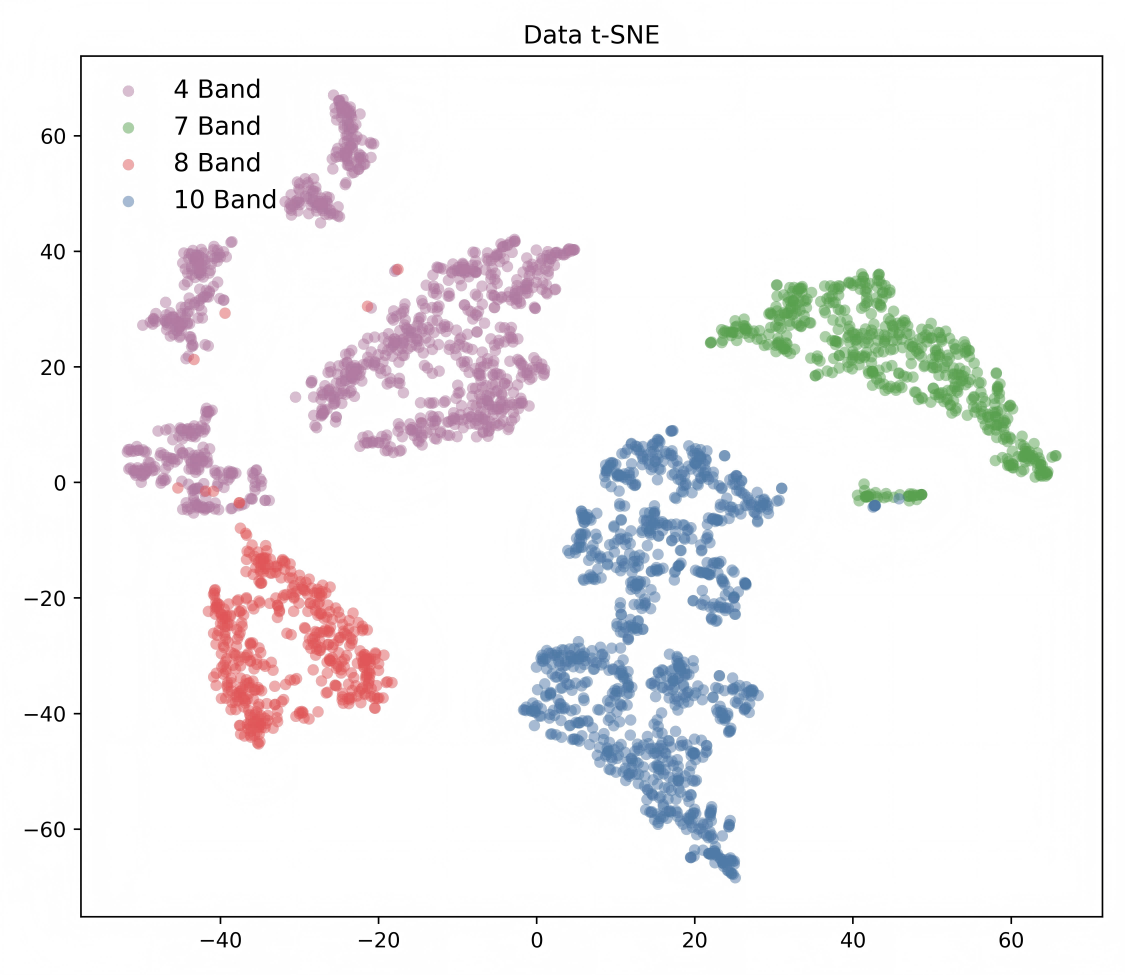}
	\label{2-1-a}
}
\subfigure[Reduced-scale Latent]{
	\includegraphics[width=0.18\linewidth]{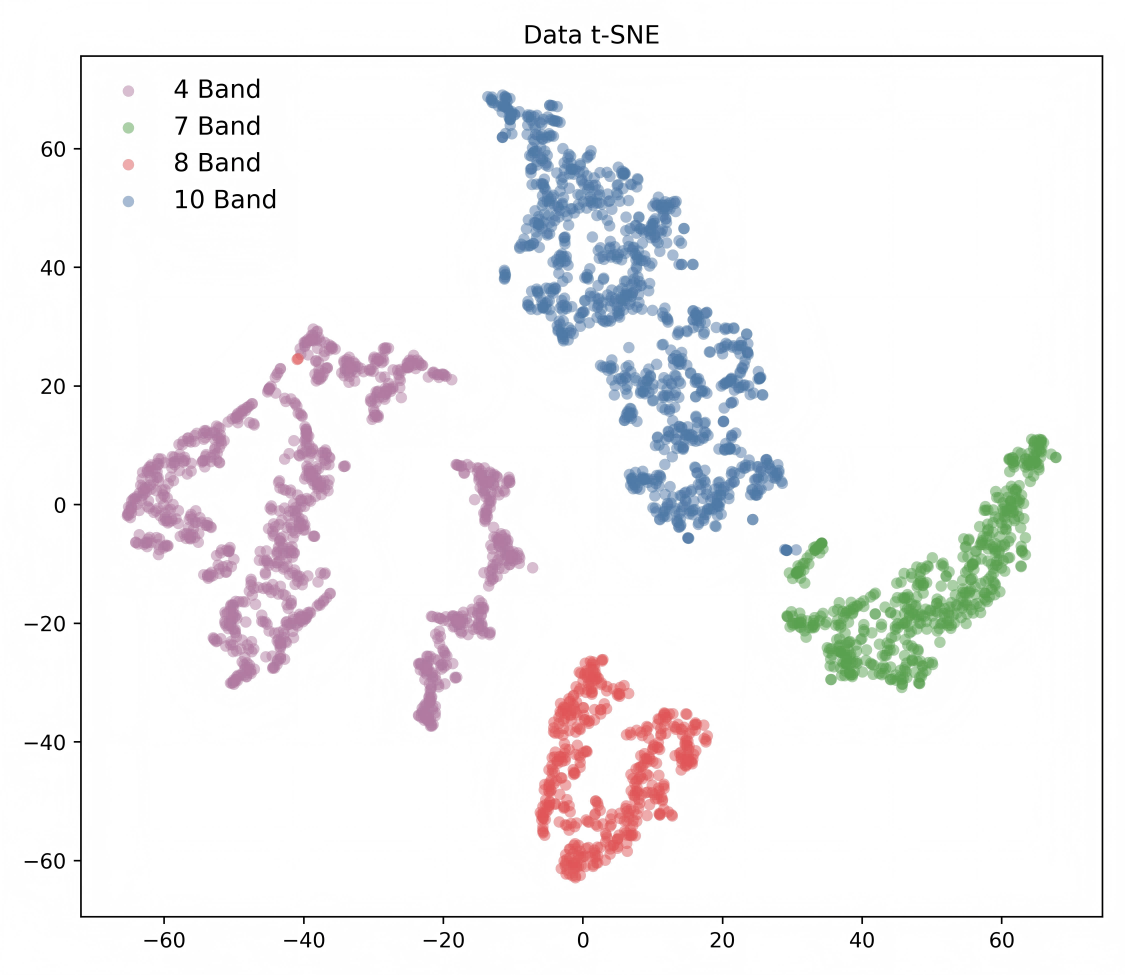}
	\label{2-1-b}
}
\subfigure[Full-scale MS]{
	\includegraphics[width=0.18\linewidth]{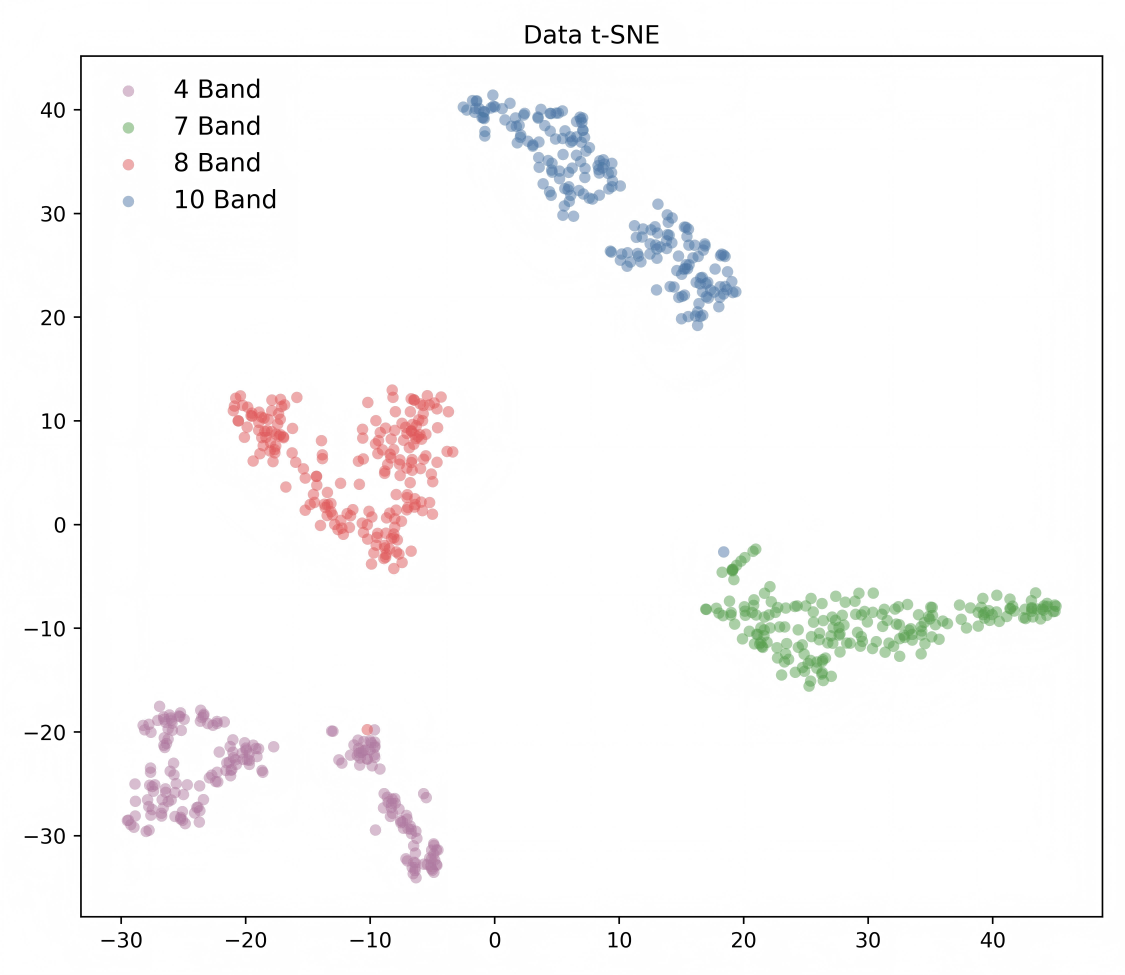}
	\label{2-1-c}
}
\subfigure[Full-scale Latent]{
	\includegraphics[width=0.18\linewidth]{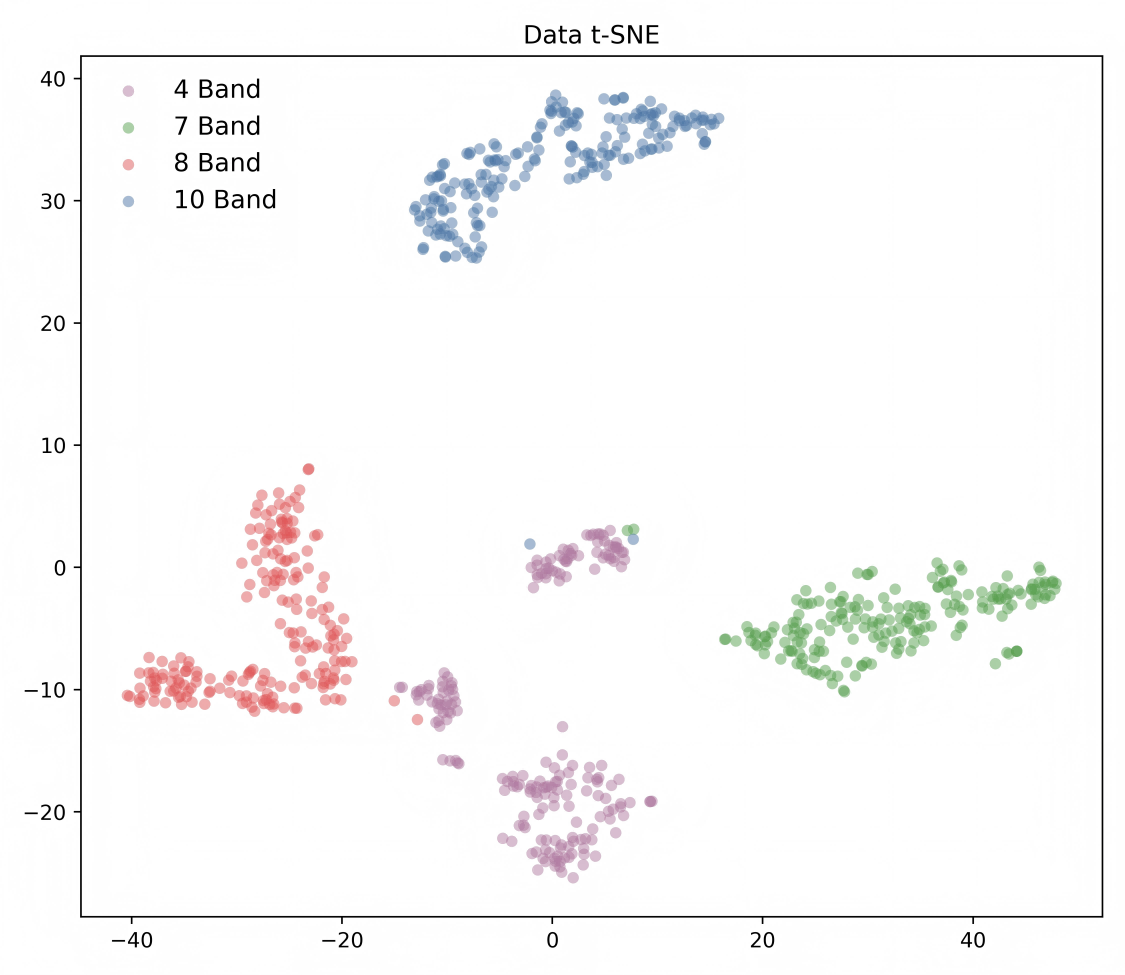}
	\label{2-1-d}
}
\caption{Distribution of data and parameters by t-SNE visualization. (a) Expert parameters and their activation frequency. Larger circles represent higher activation frequencies. (b) MS images on the reduced scale. (c) MS latents generated by MiT on the reduced scale. (d) MS images on the full scale. (e) MS latents generated by MiT on the full scale.}
\label{notion:1-3}
\end{figure*}

\begin{table*}[htp] 
	\caption{Pansharpening performance of reduced scale with different sampling steps.}
	\label{tab:nfe}
	\centering
	\renewcommand\arraystretch{1.05}
	\resizebox{2.10\columnwidth}{!}{
		\begin{tabular}{c|cccc|cccc|cccc|cccc|cccc}
			\toprule
			\multirow{2}{*}{\textbf{NFEs}} & \multicolumn{4}{c|}{4 Bands} & \multicolumn{4}{c|}{7 Bands} & \multicolumn{4}{c|}{8 Bands} &  \multicolumn{4}{c|}{10 Bands} & \multicolumn{4}{c}{Average}  \\ 
			& PSNR$\uparrow$ & ERGAS$\downarrow$ & SAM$\downarrow$ & QNR$\uparrow$ & PSNR$\uparrow$ & ERGAS$\downarrow$ & SAM$\downarrow$ & QNR$\uparrow$ &  PSNR$\uparrow$	 & ERGAS$\downarrow$ & SAM$\downarrow$ & QNR$\uparrow$ & PSNR$\uparrow$	 & ERGAS$\downarrow$ & SAM$\downarrow$ & QNR$\uparrow$ & PSNR$\uparrow$  & ERGAS$\downarrow$ & SAM$\downarrow$ & QNR$\uparrow$\\
			\midrule 
			1 	&   33.232   &   6.219    &   3.220  &   0.9240  &   37.949   &   1.144    &   1.195  &   0.9262  &   28.610   &   11.338   &   4.880  &   0.8906  &   45.127   &   0.849    &   0.973  &   0.9042  &   37.290   &   4.306    &   2.366  &   0.9130   \\
			3  	&   34.991   &   4.839    &   2.845  &   0.9434  &   39.698   &   0.939    &   1.004  &   0.9217  &   30.736   &   7.981    &   4.362  &   0.9197  &   46.759   &   0.700    &   0.852  &   0.8934  &   39.035   &   3.257    &   2.090  &   0.9197  \\
			5  	&   36.795   &   3.872    &   2.708  &   0.9392   &   40.973   &   0.825    &   0.912  &   0.9097   &   33.196   &   6.057    &   4.255  &   0.9167  &   47.869   &   0.616    &   0.822  &   0.8760  &   40.634   &   2.582    &   1.996  &   0.9100  \\
			10 	&   37.370   &   3.637    &   2.826  &   0.9327  &   41.307   &   0.804    &   0.904  &   0.9009  &   34.063   &   5.531    &   4.402  &   0.9135  &   47.614   &   0.640    &   0.887  &   0.8679  &   40.937   &   2.427    &   2.079  &   0.9030   \\
			20 	&   37.047   &   3.782    &   3.003  &   0.9294  &   41.126   &   0.825    &   0.927  &   0.8992   &   33.807   &   5.658    &   4.577  &   0.9134  &   47.216   &   0.671    &   0.932  &   0.8664  &   40.625   &   2.510    &   2.185  &   0.9010  \\
			\bottomrule
	\end{tabular}} 

	\vspace{\dblfloatsep}
 	
	\caption{Fusion performance of different $\eta$ via bridge posterior sampling.}\label{tab:sampling_bps}
	\centering
	\renewcommand\arraystretch{1.05}
	\resizebox{2.10\columnwidth}{!}{
		\begin{tabular}{c|cccc|cccc|cccc|cccc|cccc}
			\toprule
			\multirow{2}{*}{\textbf{$\eta$}} & \multicolumn{4}{c|}{4 Bands} & \multicolumn{4}{c|}{7 Bands} & \multicolumn{4}{c|}{8 Bands} &  \multicolumn{4}{c|}{10 Bands} & \multicolumn{4}{c}{Average}  \\ 
			& PSNR$\uparrow$ & ERGAS$\downarrow$ & SAM$\downarrow$ & QNR$\uparrow$ & PSNR$\uparrow$ & ERGAS$\downarrow$ & SAM$\downarrow$ & QNR$\uparrow$ &  PSNR$\uparrow$	 & ERGAS$\downarrow$ & SAM$\downarrow$ & QNR$\uparrow$ & PSNR$\uparrow$	 & ERGAS$\downarrow$ & SAM$\downarrow$ & QNR$\uparrow$ & PSNR$\uparrow$  & ERGAS$\downarrow$ & SAM$\downarrow$ & QNR$\uparrow$\\
			\midrule 
			0  &   37.370   &   3.637    &   2.826   &   0.9327 &   41.307   &   0.804    &   0.904   &   0.9009&   34.063   &   5.531    &   4.402   &   0.9135  &   47.614   &   0.640    &   0.887   &   0.8679  &   40.937   &   2.427    &   2.079   &   0.9030 \\ 
			0.1 &   37.372   &   3.638    &   2.826   &   0.9296  &   41.331   &   0.802    &   0.903   &   0.9136 	&   34.065   &   5.531    &   4.402   &   0.9074 &   47.625   &   0.639    &   0.887   &   0.8802 &   40.947   &   2.427    &   2.078   &   0.9073 \\
			0.5 &   37.370   &   3.641    &   2.826   &   0.9259 &   41.391   &   0.797    &   0.900   &   0.9108  &   34.063   &   5.531    &   4.402   &   0.9060  &   47.661   &   0.637    &   0.884   &   0.8748 &   40.968   &   2.426    &   2.077   &   0.9036 \\
			1.0 &   37.370   &   3.637    &   2.826   &   0.9246  &   41.400   &   0.796    &   0.899   &   0.9110 &   34.063   &   5.531    &   4.402   &   0.9055 	&   47.685   &   0.635    &   0.882   &   0.8731 &   40.978   &   2.424    &   2.076   &   0.9025\\
			5.0 &   37.370   &   3.637    &   2.826   &   0.9202 	&   41.307   &   0.804    &   0.904   &   0.9103  &   34.063   &   5.531    &   4.402   &   0.9031  &   47.625   &   0.640    &   0.886   &   0.8695  &   40.941   &   2.427    &   2.078   &   0.8993 \\ 
			\bottomrule
	\end{tabular}}
  
\end{table*}

\subsection{Comparative Experiments}

\subsubsection{Reduced-scale Assessment}
The qualitative results of reduced scale across different tasks are provided in Fig.~\ref{exp:reduced_4}, and Fig.~\ref{exp:reduced_8}, while additional visual comparisons are deferred to Appendix~\ref{sec:suppl_D}. Apparently, traditional methods show spectral distortions due to the unbalanced combination of PAN-derived spatial cues. Deep methods improve on the integration of spectral and spatial information, yet our proposed method stands out with a more accurate spectral distribution and enhanced spatial detail. To statistically validate the performance, we report the quantitative results in Tab.~\ref{tab:reduced}. For fairness, we deactivate BPS. Our method achieves the best average value in all metrics, including PSNR, SSIM, SAM and ERGAS, indicating that the fused results are the most similar to the reference. Therefore, our method outperforms all other state-of-the-art methods and is the most competitive.

\subsubsection{Full-scale Assessment}
We also measure the fusion performance on full scale. The visual comparisons are shown in Fig.~\ref{exp:full_4}, and Fig.~\ref{exp:full_8} (See Appendix~\ref{sec:suppl_D} for more results). Regarding the traditional methods, some of them appropriately retain similar spectral attributes with the MS image but inadequately integrate the spatial details, while others distort both spectral and spatial information. As for deep methods, some of them struggle to maintain the balance between the preservation of spectral distribution and the injection of spatial details. In contrast, our method reasonably injects the spatial details and enhances the fusion quality. The quantitative results are presented in Tab.~\ref{tab:full}. Our method still secures the top spot on non-reference metrics and thus maintains a leading edge.
 
\subsection{Ablation Study}

To fully exploit the effectiveness of each design, we conduct ablation studies that include four categories: (i) Distributions and effects of  MiT, (ii) diffusion sampling steps and strategies, (iii) performance of varied UniPS variants, and (iv) scalability of the UniPS.

\subsubsection{Distributions and Effects of MiT} 
To thoroughly evaluate the role of the mixture-of-experts, we first visualize the distributions of expert parameters and their activation frequencies on PSBench, as illustrated in Fig.~\ref{2-1-x}.  Larger circles indicate higher activation frequencies. The non-uniform activation patterns demonstrate that different experts capture different spectral characteristics and are adaptively selected according to the spectral attributes. Besides, we provide the t-SNE distributions of MS pixel features and latent representations generated by MiT in Fig.~\ref{notion:1-3}. Compared with the original MS features, the latent representations preserve the discriminative characteristics of different spectral configurations while mapping heterogeneous inputs into a unified latent space. Hence, MiT effectively balances modality-specific information preservation and cross-sensor representation alignment, enabling UniPS to handle diverse spectral configurations within a single framework.
 
\subsubsection{Sampling Steps and Strategies} 
Model efficiency and fusion quality depend on the sampling steps, quantified by neural function evaluations (NFEs). We provide the fusion performance of different sampling steps in Tab.~\ref{tab:nfe}. In reduced-scale evaluation, reference metrics exhibit a consistent improvement as the number of sampling steps increases, with the best PSNR achieved at 10 NFEs. However, further increasing the sampling steps to 20 NFEs results in a slight degradation, suggesting that excessive iterations may introduce unnecessary refinements and deviate from the optimal bridge trajectory. For full-scale evaluation, 3 NFEs achieve the highest QNR, while additional steps lead to a gradual decline. This phenomenon may be attributed to the domain discrepancy between reduced-scale training and full-scale inference. Considering the balance between computational cost and robustness across both evaluation protocols, we adopt 10 sampling steps as the operational default.
 
We also assess the bridge posterior sampling strategies with different weights $\eta$ in Eq.~(\ref{eq22}). Since different spectral configurations contain varying numbers of bands and exhibit different attributes, the optimal balance between latent refinement and posterior guidance may vary across different inputs. The quantitative results are reported in Tab.~\ref{tab:sampling_bps}. The results show that moderate posterior guidance maintains stable fusion performance across different spectral configurations. At reduced scale, increasing $\eta$ from 0 to 1.0 slightly improves the average PSNR, while excessive guidance ($\eta=5.0$) causes a minor degradation. This indicates that BPS can effectively enhance consistency between generated latent representations and observed MS measurements. Meanwhile, full-scale metrics remain stable for $\eta\le 1.0$, demonstrating the robustness and controllability of BPS. Notably, $\eta = 0.1$ already achieves competitive results. Based on these observations, BPS is capable of serving as a training-free fusion adaptation mechanism that allows controllable adjustment between spectral fidelity and latent refinement according to different pansharpening scenarios.

\begin{table*}[htp]   
	\caption{Fusion performance of UniPS variants at reduced scale.}\label{tab:variantslr}
	\centering
	\renewcommand\arraystretch{1.0}
	\resizebox{2.10\columnwidth}{!}{
		\begin{tabular}{cccc|cccc|cccc|cccc|cccc|cccc}
			\toprule
			\multicolumn{4}{c|}{\textbf{Configurations}}  & \multicolumn{4}{c|}{4 Bands} & \multicolumn{4}{c|}{7 Bands} & \multicolumn{4}{c|}{8 Bands} &  \multicolumn{4}{c|}{10 Bands} & \multicolumn{4}{c}{Average}  \\ 
			& MiT & $\mathcal{K}_{exp}$ & $\mathcal{K}_{geo}$ 			& PSNR$\uparrow$	  & SSIM$\uparrow$ & ERGAS$\downarrow$ & SAM$\downarrow$ & PSNR$\uparrow$	  & SSIM$\uparrow$ & ERGAS$\downarrow$ & SAM$\downarrow$ & PSNR$\uparrow$	  & SSIM$\uparrow$ & ERGAS$\downarrow$ & SAM$\downarrow$ & PSNR$\uparrow$	  & SSIM$\uparrow$ & ERGAS$\downarrow$ & SAM$\downarrow$ & PSNR$\uparrow$	  & SSIM$\uparrow$ & ERGAS$\downarrow$ & SAM$\downarrow$\\
			\midrule  
			I&\ding{51} & \ding{51} & \ding{51}  &  37.370   &  0.929    &  3.636    &  2.825    &  41.307   &  0.972    &  0.804    &  0.904    &  34.063   &  0.913    & 5.531    &  4.402    &  47.614   &  0.991    &  0.640    &  0.887    &  40.937   &  0.955    & 2.427    &  2.079    \\  
			II&\ding{53} & \ding{51} & \ding{51}  &   33.497   &   0.857    &   5.482    &   3.937    &   37.909   &   0.953    &   1.141    &   1.329    &   29.067   &   0.778    &   8.615    &   5.171    &   42.349   &   0.982    &   1.198    &   1.494    &   36.529   &   0.903    &   3.767    &   2.854    \\  
			III&\ding{51} & \ding{51} & \ding{53}  &   35.919   &   0.917    &   4.205    &   3.253    &   38.119   &   0.957    &   1.125    &   1.246    &   32.382   &   0.891    &   7.924    &   5.680    &   43.731   &   0.984    &   1.021    &   1.452    &   38.346   &   0.942    &   3.158    &   2.660      \\  
			IV&\ding{51} & \ding{53} & \ding{51} &   36.255   &   0.920    &   4.541    &   3.066    &   39.199   &   0.959    &   1.090    &   1.108    &   32.785   &   0.898    &   8.125    &   5.099    &   44.793   &   0.987    &   0.881    &   1.139    &   39.063   &   0.946    &   3.252    &   2.383    \\   
			V&\ding{51} & \multicolumn{2}{c|}{\textbf{Add.}} &   33.776   &   0.865    &   5.645    &   3.382    &   37.698   &   0.953    &   1.220    &   1.251    &   29.360   &   0.803    &   9.375    &   5.099    &   44.324   &   0.983    &   1.125    &   1.332    &   37.278   &   0.910    &   3.925    &   2.584   \\  
			VI&\ding{51} & \multicolumn{2}{c|}{\textbf{Concat.}} &   33.574   &   0.865    &   6.040    &   3.688    &   38.149   &   0.954    &   1.120    &   1.275    &   29.439   &   0.804    &   10.972   &   4.996    &   44.047   &   0.983    &   1.004    &   1.414    &   37.210   &   0.911    &   4.235    &   2.704  \\  
			\bottomrule
	\end{tabular}}  
	
	\vspace{\dblfloatsep}
	
	\caption{Fusion performance of UniPS variants at full scale.}\label{tab:variantshr}
	\centering
	\renewcommand\arraystretch{1.0}
	\resizebox{2.00\columnwidth}{!}{
		\begin{tabular}{cccc|ccc|ccc|ccc|ccc|ccc}
			\toprule
			\multicolumn{4}{c|}{\textbf{Configurations}}& \multicolumn{3}{c|}{4 Bands} & \multicolumn{3}{c|}{7 Bands} & \multicolumn{3}{c|}{8 Bands} &  \multicolumn{3}{c|}{10 Bands} & \multicolumn{3}{c}{Average}  \\ 
			& MiT & $\mathcal{K}_{exp}$ & $\mathcal{K}_{geo}$ & QNR$\uparrow$ & $D_\lambda\downarrow$ & $D_s\downarrow$  & QNR$\uparrow$ & $D_\lambda\downarrow$ & $D_s\downarrow$ &  QNR$\uparrow$ & $D_\lambda\downarrow$ & $D_s\downarrow$ &  QNR$\uparrow$ & $D_\lambda\downarrow$ & $D_s\downarrow$  & QNR$\uparrow$ & $D_\lambda\downarrow$ & $D_s\downarrow$\\
			\midrule  
			I & \ding{51} & \ding{51} & \ding{51} & 0.9327   &   0.0170   &   0.0514   &   0.9009   &   0.0108   &   0.0893   &   0.9135   &   0.0186   &   0.0693   &   0.8679   &   0.0215   &   0.1131   &   0.9030   &   0.0176   &   0.0810   \\  
			II & \ding{53} & \ding{51} & \ding{51}  &   0.8902   &   0.0392   &   0.0742   &   0.8464   &   0.0450   &   0.1141   &   0.8773   &   0.0389   &   0.0875   &   0.8407   &   0.0493   &   0.1161   &   0.8642   &   0.0435   &   0.0970    \\  
			III & \ding{51} & \ding{51} & \ding{53}  &   0.9165   &   0.0313   &   0.0543   &   0.8774   &   0.0257   &   0.0995   &   0.8925   &   0.0353   &   0.0750   &   0.8534   &   0.0351   &   0.1157   &   0.8854   &   0.0321   &   0.0855  \\  
			IV & \ding{51} & \ding{53} & \ding{51} &   0.9211   &   0.0273   &   0.0535   &   0.8751   &   0.0271   &   0.1006   &   0.8996   &   0.0309   &   0.0718   &   0.8501   &   0.0352   &   0.1191   &   0.8865   &   0.0304   &   0.0861 \\   
			V & \ding{51} & \multicolumn{2}{c|}{\textbf{Add.}} &   0.9172   &   0.0289   &   0.0560   &   0.8701   &   0.0250   &   0.1076   &   0.8973   &   0.0291   &   0.0759   &   0.8449   &   0.0418   &   0.1185   &   0.8822   &   0.0324   &   0.0886  \\  
			VI & \ding{51} & \multicolumn{2}{c|}{\textbf{Concat.}} &   0.9082   &   0.0357   &   0.0589   &   0.8667   &   0.0264   &   0.1098   &   0.8904   &   0.0342   &   0.0781   &   0.8329   &   0.0509   &   0.1226   &   0.8735   &   0.0388   &   0.0917   \\  
			\bottomrule
	\end{tabular}}    
	
	\vspace{\dblfloatsep}
	
	\caption{Model efficiency and details of deep pansharpening methods.} 
	\label{tab:time}
	\resizebox{2.10\columnwidth}{!}{
		\begin{tabular}{c@{\,~}|c@{\,~}c@{\,~}c@{\,~}c@{\,~}c@{\,~}c@{\,~}c@{\,~}c@{\,~}c@{\,~}c@{\,~}c@{\,~}c@{\,~}c@{\,~}c@{\,~}c@{\,~}c@{\,~}c@{\,~}}
			\toprule
			Method   	&  PNN~\cite{masi2016pansharpening}  & PanNet~\cite{yang2017pannet}  & GPPNN~\cite{xu2021deep}  & U2Net~\cite{peng2023u2net}   &   PanFlowNet~\cite{yang2023panflownet}  &  P2Sharpen~\cite{zhang2023p2sharpen}  & DISPNet~\cite{wang2024deep}  &  CANConv~\cite{duan2024content} & PAPS~\cite{jia2024paps} & UniPAN~\cite{cui2025enpowering} & WFANet~\cite{huang2025wavelet} & CSLP~\cite{chen2025cslp} & DAISP~\cite{wang2025deep} & UniPS-T & UniPS-S & UniPS-B & UniPS-L \\
			\midrule
			Parameters (M) & 0.081 & 0.095  & 0.119 & 0.527 & 0.087 & 0.715 & 1.566 & 0.199 & 1.141 & 0.076 & 0.506 & 1.209 & 2.324 & 5.249 & 8.965 & 8.315 & 20.661
			\\
			FLOPs (G)      & 5.263 & 6.221  & 5.586 & 8.301 & 22.850 & 46.547 & 110.667 & 1.614 & 80.971 & 4.982 & 12.323 & 27.621 & 135.640 & 36.798& 105.383 & 47.057& 144.429
			\\
			GPU Memory (G) & 0.593 & 0.546  & 0.667 & 0.702 & 0.606 & 1.093 & 0.589  & 0.979 & 0.757 & 0.548 & 0.831 & 0.655 & 0.899 & 1.032 & 1.177 & 1.055 & 1.255
			\\
			Time (s) 	   & 0.061 & 0.082  & 0.093 & 0.164 & 0.130 & 0.173 & 0.179 & 0.217 & 0.167 & 0.076 & 0.166 & 0.130 & 0.183 & 0.946 & 1.313 & 2.048 & 2.735
			\\
			\bottomrule
	\end{tabular}}
\end{table*}
 
\subsection{Performance of UniPS variants}

\begin{figure}[t]  
	\centering
	\includegraphics[width=0.99\linewidth]{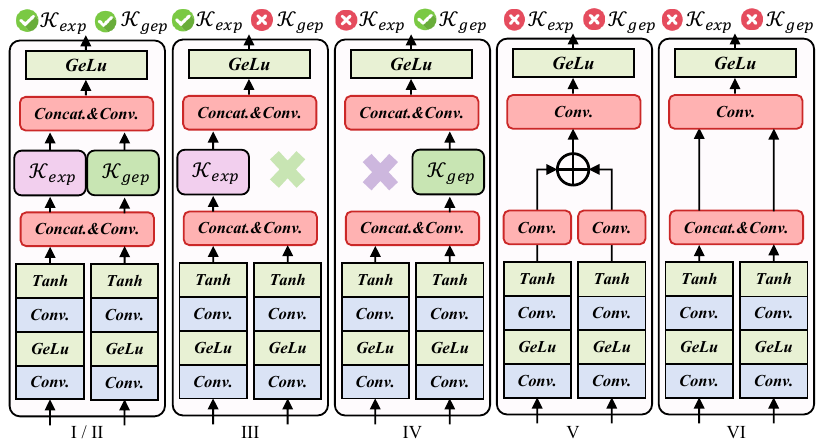}
	\caption{Feature interaction blocks of UniPS variants.}
	\label{exp:infblocks} 
\end{figure}

To investigate the effectiveness of the MiT and the feature interaction mechanism, we construct several UniPS variants with different designs. Concretely, we remove MiT and replace missing bands with zero padding to enable unified processing of heterogeneous inputs. In addition, we decouple the two infinite-dimensional kernels to evaluate their individual contributions. We further compare our parameter-free interaction strategy with conventional feature fusion schemes, including addition and concatenation. The detailed configurations of these variants are illustrated in Fig.~\ref{exp:infblocks}. The quantitative results are provided in Tab.~\ref{tab:variantslr} and Tab.~\ref{tab:variantshr} for reduced and full scales, respectively. At reduced scale, the complete UniPS (I) consistently achieves the best overall performance across all spectral settings. Removing MiT (II) results in a notable deterioration in all reference metrics. This demonstrates that simply padding missing bands with zeros is insufficient for heterogeneous spectral inputs, as different band configurations introduce imbalanced extended spectral representations and make unified feature optimization more challenging. In contrast, MiT learns modality-aware representations that preserve modality-specific characteristics while projecting heterogeneous spectral inputs into a balanced latent space. The comparison among I, III, and IV reveals the complementary roles of the exponential and geometric kernels. Using either kernel alone leads to a considerable performance decrease. These two kernels capture complementary interaction patterns between PAN and latent MS representations and are most effective when jointly employed. We further compare our interaction mechanism with conventional addition and concatenation operations. This suggests that simple feature aggregation is insufficient to characterize the complex correspondence between PAN and MS representations, whereas the parameter-free kernel interaction implicitly captures higher-order dependencies. The full-scale results exhibit a consistent trend. Variant I achieves the highest average QNR, outperforming the MiT-free variant and the addition- and concatenation-based variants. Moreover, jointly using the two kernels yields better average spectral and spatial consistency than employing either kernel individually. In conclusion, MiT is essential for accommodating heterogeneous spectral configurations. The exponential and geometric kernels provide complementary interaction cues and their joint parameter-free formulation is substantially more effective than conventional addition or concatenation. Together, these components enable UniPS to achieve robust and consistent fusion across varying bands.

\begin{figure*}[htp]  
	\centering
	\includegraphics[width=0.99\linewidth]{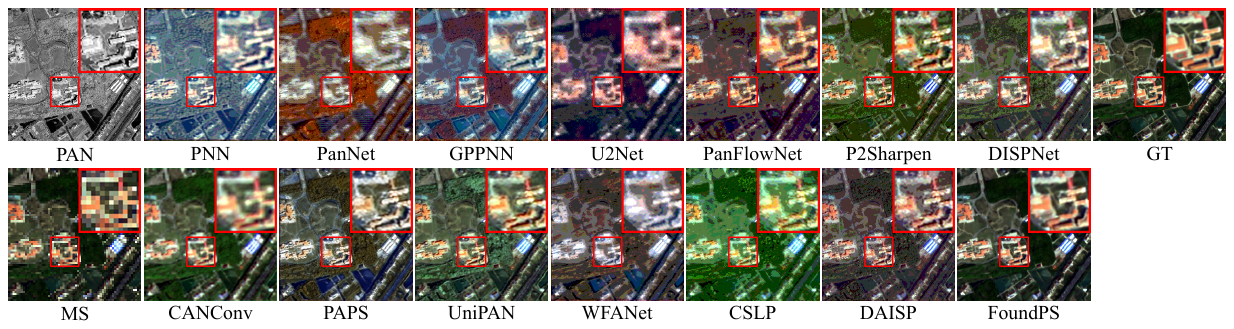}
	\caption{Visual comparisons of reduced scale on SegGF.}
	\label{exp:genGF_reduced} 
	 
	 \centering
	 \includegraphics[width=0.99\linewidth]{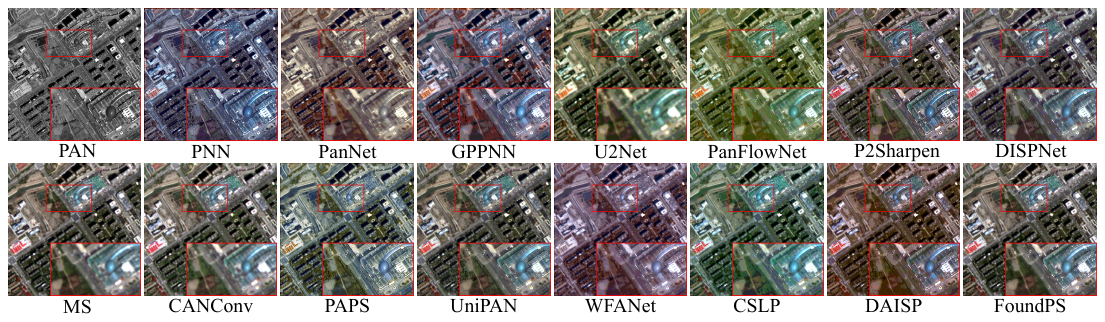}
	 \caption{Visual comparisons of full scale on SegGF.}
	 \label{exp:genGF_full} 
\end{figure*} 

\subsubsection{Scalability of UniPS} 
To demonstrate our scalability, we construct multiple UniPS variants by adjusting the feature channels in Infinite-UNet, thereby varying model capacity. Quantitative results on both reduced and full scales are summarized in Tab.~\ref{tab:reduced} and Tab.~\ref{tab:full}, respectively. The results clearly indicate that performance scales positively with parameter count, confirming the architecture's ability to leverage increased capacity. Notably, even the small variant UniPS-S remains competitive against other methods, underscoring our effectiveness. 
   
\begin{table*}[h] 
 	\centering
 	\caption{Zero-shot pansharpening performance on unseen scenes and satellite datasets.} 
 	\label{tab:zero}  
 	\renewcommand\arraystretch{1.1}
 	\resizebox{2.10\columnwidth}{!}{
	\begin{tabular}{c|ccccc|ccccc|ccccc|ccccc}
		\toprule
		\multirow{2}{*}{\textbf{Method}} & \multicolumn{5}{c|}{QuickBird~\cite{ma2020pan}} & \multicolumn{5}{c|}{SegGF~\cite{wang2025deep}} & \multicolumn{5}{c|}{Jiling~\cite{cao2026cross}} &  \multicolumn{5}{c}{skysat~\cite{cao2026cross}} \\  
		& PSNR$\uparrow$& SSIM$\uparrow$ & ERGAS$\downarrow$ & SAM$\downarrow$ & QNR$\uparrow$  & PSNR$\uparrow$& SSIM$\uparrow$ & ERGAS$\downarrow$ & SAM$\downarrow$ & QNR$\uparrow$ & PSNR$\uparrow$& SSIM$\uparrow$ & ERGAS$\downarrow$ & SAM$\downarrow$ & QNR$\uparrow$ & PSNR$\uparrow$& SSIM$\uparrow$ & ERGAS$\downarrow$ & SAM$\downarrow$ & QNR$\uparrow$\\
		\midrule
		PNN~\protect\cite{masi2016pansharpening} 
		&  24.9903   &   0.8396   &   7.5687   &  11.6871 &   0.4390 &  24.1732   &   0.6991   &   3.9805   &   5.0416   &   0.3723	&  24.8081   &   0.8318   &   6.2130   &   5.2326   &   0.8231   &  26.0699   &   0.8495   &   6.4022   &   5.3754   &   0.8319 \\
		PanNet~\protect\cite{yang2017pannet}
		&  21.6332   &   0.7963   &  12.6515   &  14.6404   &   0.4285	&  20.8404   &   0.5865   &   6.1647   &  10.2467   &   0.5113 	&  22.8129   &   0.7539   &   7.9233   &  10.1700   &   0.7261   &  24.0236   &   0.7826   &   7.8340   &  10.1602   &   0.7052 	 \\
		GPPNN~\protect\cite{xu2021deep}
		&  26.3027   &   0.8616   &   6.9445   &  11.7491   &   0.7272	&  24.0514   &   0.6990   &   4.2215   &   5.6822   &   0.6025 &  25.2377   &   0.7580   &   4.9689   &   4.7102   &   0.7855   &  26.9535   &   0.7947   &   4.7313   &   4.8363   &   0.7915 	\\
		U2Net~\protect\cite{peng2023u2net}
		&  27.4097   &   0.7240   &   5.5303   &   8.7204   &   0.7961	&  22.4246   &   0.7077   &   5.7532   &   6.6718   &   \underline{0.8453}	&  26.6666   &   \underline{0.8405}   &   4.8783   &   5.2118   &   0.7506   &  27.6754   &   \underline{0.8642}   &   4.7137   &   5.7169   &   0.7776 	\\
		PanFlowNet~\protect\cite{yang2023panflownet}
		&  21.4585   &   0.7554   &  17.6751   &  13.2153   &   0.6191	&  12.7717   &   0.4703   &  30.0502   &  14.0260   &   0.3733 &  26.3085   &   0.7826   &   4.6496   &   3.5070   &   0.7918   &  28.7214   &   0.8209   &   4.2169   &   3.2943   &   0.8109  \\
		P2Sharpen~\protect\cite{zhang2023p2sharpen}
		&  33.3422   &   0.8840   &   \underline{2.8564}   &   3.3286   &   0.6301	&  22.3037   &   0.7225   &   4.1746   &   3.5671   &   0.5131	&  26.8236   &   0.7714   &   4.5884   &   3.4714   &   0.7981   &  28.4640   &   0.8104   &   4.7306   &   3.3967   &   0.8215  \\
		DISPNet~\protect\cite{wang2024deep}
		&  31.0746   &   0.8446   &   3.4811   &   4.6411   &   0.8618	&  24.2749   &   0.7382   &   4.7943   &   3.9017   &   0.8366	 &  27.5673   &   0.7919   &   4.2665   &   2.8796   &   \underline{0.8320}   &  30.2138   &   0.8311   &   3.7730   &   2.7048   &   \underline{0.8506} 	\\  
		CANConv~\protect\cite{duan2024content}
		&  \underline{33.5858}   &  \underline{0.8930}   &   2.9639   &   \underline{3.1327}   &   \underline{0.8863}  
		&  27.0788   &   \underline{0.8139}   &   3.1013   &   4.0545   &   0.8417
		&  27.7639   &   0.7911   &   \underline{3.9329}   &   \underline{2.5156}   &   0.7788   &  30.1067   &   0.8261   &   \underline{3.6146}   &   2.4056   &   0.7983 \\
		PAPS~\protect\cite{jia2024paps} 
		&  29.9086   &   0.8828   &   4.0986   &   7.8094   &   0.7588
		&  22.2894   &   0.7305   &   4.5931   &   8.8448   &   0.5202 &  25.4502   &   0.7577   &   5.4988   &   4.1382   &   0.6663   &  27.6202   &   0.7993   &   4.9407   &   3.8745   &   0.6986
		\\
		UniPAN~\protect\cite{cui2025enpowering}
		&  29.5113   &   0.8885   &   4.8035   &   3.2952   &   0.6353	&  27.8065   &   0.7565   &   2.7156   &   \underline{2.4996}   &   0.5909 &  26.6694   &   0.7869   &   4.7208   &   2.9249   &   0.7041   &  28.0552   &   0.8083   &   5.2080   &   2.9964   &   0.7414  \\
		WFANet~\protect\cite{huang2025wavelet}
		&  26.2470   &   0.8453   &   6.7174   &  10.7235   &   0.6647 &  18.9564   &   0.6613   &   5.7413   &   7.1638   &   0.6246 &  \underline{28.0767}   &   0.8232   &   4.1088   &   2.6809   &   0.8089   &  30.7180   &   0.8571   &   3.6163   &   2.4651   &   0.8389 \\
		CSLP~\protect\cite{chen2025cslp}
		&  31.2616   &   0.8801   &   4.3995   &   3.7732   &   0.7919	&  \underline{28.9008}   &   0.7695   &   \underline{2.2789}   &   2.5067   &   0.7327	&  26.8386   &   0.7763   &   4.9468   &   2.8636   &   0.7094   &  29.2750   &   0.8157   &   4.4504   &   2.6734   &   0.7438\\
		DAISP~\protect\cite{wang2025deep}
		&  25.7630   &   0.8054   &   9.0269   &   5.0775   &   0.6924	&  19.7114   &   0.6866   &   9.0789   &   4.9151   &   0.6437 &  27.4855   &   0.7923   &   4.4433   &   2.5966   &   0.7948   &  \underline{30.6517}   &   0.8323   &   3.8121   &   \underline{2.3298}   &   0.8223 \\
		UniPS
		&  \textbf{38.9823}   &   \textbf{0.9545}   &   \textbf{1.4540}   &   \textbf{1.7429}   &   \textbf{0.9224}  &  \textbf{35.2662}   &   \textbf{0.9369}   &   \textbf{1.1393}   &   \textbf{1.3139}   &   \textbf{0.9002} &  \textbf{30.2701}   &   \textbf{0.8861}   &   \textbf{2.9349}   &   \textbf{2.4837}   &   \textbf{0.9130}   &  \textbf{32.6353}   &   \textbf{0.9055}   &   \textbf{2.6951}   &   \textbf{2.2622}   &   \textbf{0.9226}\\
		\bottomrule
 	\end{tabular}}  
\end{table*}

\begin{table*}
	\centering
	\renewcommand\arraystretch{1.2}
	\caption{Quantitative comparisons on reduced scale over the truncated 4-band PSBench. All of deep methods are retrained on the mixed truncated datasets, denoted by $\ast$.}
	\label{tab:truncatedlr}
	\resizebox{2.10\columnwidth}{!}{
		\begin{tabular}{c|c@{\,~}c@{\,~}c@{\,~}c@{\,~}|c@{\,~}c@{\,~}c@{\,~}c@{\,~}|c@{\,~}c@{\,~}c@{\,~}c@{\,~}|c@{\,~}c@{\,~}c@{\,~}c@{\,~}|c@{\,~}c@{\,~}c@{\,~}c@{\,~}}
			\thickhline
			&  \multicolumn{4}{c|}{4 Bands} & \multicolumn{4}{c|}{Truncated 7 Bands} & \multicolumn{4}{c|}{Truncated 8 Bands} &  \multicolumn{4}{c|}{Truncated 10 Bands} & \multicolumn{4}{c}{Average}  \\ 
			\multirow{-2}{*}{\textbf{Method}}   & PSNR$\uparrow$	  & SSIM$\uparrow$ & ERGAS$\downarrow$ & SAM$\downarrow$ & PSNR$\uparrow$	  & SSIM$\uparrow$ & ERGAS$\downarrow$ & SAM$\downarrow$ & PSNR$\uparrow$	  & SSIM$\uparrow$ & ERGAS$\downarrow$ & SAM$\downarrow$ & PSNR$\uparrow$	  & SSIM$\uparrow$ & ERGAS$\downarrow$ & SAM$\downarrow$ & PSNR$\uparrow$	  & SSIM$\uparrow$ & ERGAS$\downarrow$ & SAM$\downarrow$\\
			\thickhline
			$\ast$PNN~\protect\cite{masi2016pansharpening}&   31.547   &   0.861    &   10.794   &   5.986    &   26.731   &   0.893    &   4.745    &   4.767    &   30.510   &   0.837    &   12.725   &   6.909    &   34.104   &   0.955    &   3.790    &   3.736    &   31.364   &   0.894    &   7.703    &   5.168  \\
			$\ast$PanNet~\protect\cite{yang2017pannet} &   32.325   &   0.875    &   6.368    &   7.615    &   30.029   &   0.933    &   2.837    &   5.576    &   31.058   &   0.869    &   7.385    &   8.240    &   37.797   &   0.972    &   2.022    &   4.157    &   33.513   &   0.916    &   4.463    &   6.211    \\
			$\ast$GPPNN~\protect\cite{xu2021deep} &   32.943   &   0.878    &   5.101    &   5.212    &   31.818   &   0.937    &   2.014    &   3.872    &   31.857   &   0.883    &   6.404    &   5.872    &   38.459   &   0.970    &   1.527    &   3.077    &   34.388   &   0.920    &   3.570    &   4.370 \\
			$\ast$U2Net~\protect\cite{peng2023u2net} &   33.081   &   0.849    &   6.156    &   4.608    &   33.197   &   0.923    &   1.881    &   3.338    &   29.632   &   0.785    &   8.527    &   5.632    &   41.810   &   0.970    &   1.253    &   1.940    &   35.451   &   0.892    &   4.138    &   3.659   \\
			$\ast$PanFlowNet~\protect\cite{yang2023panflownet} &   34.306   &   0.886    &   5.526    &   4.696    &   36.174   &   0.945    &   1.310    &   1.698    &   31.952   &   0.867    &   6.914    &   6.316    &   41.120   &   0.971    &   1.396    &   2.169    &   36.523   &   0.922    &   3.626    &   3.573    \\
			$\ast$P2Sharpen~\protect\cite{zhang2023p2sharpen} &   34.959   &   0.887    &   6.522    &   3.918    &   36.537   &   0.963    &   1.300    &   1.505    &   32.113   &   0.877    &   7.039    &   5.382    &   42.698   &   0.981    &   1.088    &   1.571    &   37.355   &   0.930    &   3.888    &   2.935 \\
			$\ast$DISPNet~\protect\cite{wang2024deep}&   34.686   &   0.893    &   4.481    &   3.810    &   39.144   &   0.957    &   0.963    &   1.317    &   32.391   &   0.892    &   5.681    &   4.818    &   43.355   &   0.978    &   0.970    &   1.670    &   37.981   &   0.932    &   2.878    &   2.811    \\  
			$\ast$CANConv~\protect\cite{duan2024content} &   33.769   &   0.876    &   4.779    &   3.726    &   39.649   &   0.956    &   0.933    &   1.012    &   32.576   &   0.888    &   5.680    &   4.788    &   41.277   &   0.973    &   1.232    &   1.134    &   37.103   &   0.924    &   3.061    &   2.548    \\
			$\ast$PAPS~\protect\cite{jia2024paps} &   35.222   &   0.902    &   4.199    &   3.949    &   37.640   &   0.950    &   1.140    &   1.601    &   33.174   &   0.900    &   5.286    &   4.861    &   42.621   &   0.978    &   1.035    &   1.681    &   37.773   &   0.935    &   2.778    &   2.922   \\
			$\ast$UniPAN~\protect\cite{cui2025enpowering} &   35.273   &   0.903    &   4.893    &   3.313    &   37.456   &   0.955    &   1.195    &   1.078    &   33.193   &   0.905    &   5.456    &   4.198    &   43.671   &   0.971    &   0.998    &   1.089    &   38.105   &   0.936    &   3.041    &   2.317  \\
			$\ast$WFANet~\protect\cite{huang2025wavelet}&   36.030   &   0.907    &   5.140    &   3.834    &   39.987   &   \underline{0.964}    &   0.892    &   1.262    &   33.351   &   0.891    &   6.992    &   5.239    &   44.440   &   \underline{0.983}    &   0.958    &   1.679    &   39.094   &   0.940    &   3.286    &   2.876  \\
			$\ast$CSLP~\protect\cite{chen2025cslp} &  \underline{36.137}    &   0.912    &   5.310    &   \underline{3.115}    &   39.857   &   0.959    &   0.937    &   \underline{0.984}    &   \underline{33.685}   &   0.908    &   5.553    &   4.169    &   \underline{45.170}   &   0.981    &   0.872    &   \underline{1.035}    &   \underline{39.399}   &   0.943    &   3.110    &   \underline{2.208}    \\
			$\ast$DAISP~\protect\cite{wang2025deep}&   36.068   &   \underline{0.913}    &   \underline{4.009}    &   3.116    &   \underline{40.096}   &   0.959    &   \underline{0.881}    &   1.109    &   33.594   &   \underline{0.909}    &   \underline{5.184}    &   \underline{4.076}    &   44.915   &   0.982    &   \underline{0.810}    &   1.195    &   39.318   &   \underline{0.944}    &   \underline{2.574}    &   2.269   \\
			UniPS &   \textbf{37.370}   &   \textbf{0.929}    &   \textbf{3.636}    &   \textbf{2.825}    &   \textbf{42.462}   &   \textbf{0.973}    &   \textbf{0.675}    &   \textbf{0.866}   &   \textbf{34.798}   &   \textbf{0.925}    &  \textbf{ 4.456}    &   \textbf{3.708}      &   \textbf{47.367}   &   \textbf{0.988}    &   \textbf{0.642}    &   \textbf{1.003}    &   \textbf{41.171}   &   \textbf{0.955}    &   \textbf{2.248}    &   \textbf{2.009}  \\
			\thickhline
		\end{tabular}
	}
	\vspace{\dblfloatsep}
	\centering
	\renewcommand\arraystretch{1.2}
	\caption{Quantitative comparisons on full scale over the truncated 4-band PSBench. All of deep methods are retrained on the mixed truncated datasets, denoted by $\ast$.}
	\label{tab:truncatedhr} 
	\resizebox{2.10\columnwidth}{!}{
		\begin{tabular}{c|c@{\,~}c@{\,~}c@{\,~}|c@{\,~}c@{\,~}c@{\,~}|c@{\,~}c@{\,~}c@{\,~}|c@{\,~}c@{\,~}c@{\,~}|c@{\,~}c@{\,~}c@{\,~}}
			\thickhline
			&  \multicolumn{3}{c|}{4 Bands} & \multicolumn{3}{c|}{7 Bands} & \multicolumn{3}{c|}{8 Bands} &  \multicolumn{3}{c|}{10 Bands} & \multicolumn{3}{c}{Average}  \\ 
			\multirow{-2}{*}{\textbf{Method}}   & QNR$\uparrow$ & $D_\lambda\downarrow$ & $D_s\downarrow$  & QNR$\uparrow$ & $D_\lambda\downarrow$ & $D_s\downarrow$ &  QNR$\uparrow$ & $D_\lambda\downarrow$ & $D_s\downarrow$ &  QNR$\uparrow$ & $D_\lambda\downarrow$ & $D_s\downarrow$  & QNR$\uparrow$ & $D_\lambda\downarrow$ & $D_s\downarrow$\\
			\thickhline
			$\ast$PNN~\protect\cite{masi2016pansharpening}&   0.8425   &   0.0793   &   0.0851   &   0.8021   &   0.0926   &   0.1164   &   0.8361   &   0.0730   &   0.0984   &   0.8275   &   0.0721   &   0.1086   &   0.8294   &   0.0784   &   0.1004    \\
			$\ast$PanNet~\protect\cite{yang2017pannet} &   0.7696   &   0.1282   &   0.1186   &   0.6858   &   0.1596   &   0.1870   &   0.8188   &   0.1024   &   0.0884   &   0.8210   &   0.0925   &   0.0960   &   0.7786   &   0.1184   &   0.1191  \\
			$\ast$GPPNN~\protect\cite{xu2021deep}&   0.8117   &   0.0872   &   0.1120   &   0.7783   &   0.0894   &   0.1468   &   0.8534   &   0.0738   &   0.0796   &   0.8402   &   0.0606   &   \underline{0.1059}   &   0.8212   &   0.0769   &   0.1115 \\
			$\ast$U2Net~\protect\cite{peng2023u2net} &   0.8682   &   0.0752   &   \underline{0.0615}   &   0.8115   &   0.1133   &   \textbf{0.0853}   &   0.8327   &   0.0628   &   0.1117   &   0.8628   &   0.0613   &   0.0813   &   0.8511   &   0.0756   &   \underline{0.0796} \\
			$\ast$PanFlowNet~\protect\cite{yang2023panflownet} &   0.7527   &   0.1129   &   0.1530   &   0.8439   &   0.0627   &   0.0997   &   0.7603   &   0.0930   &   0.1655   &   0.7564   &   0.1087   &   0.1535   &   0.7713   &   0.0996   &   0.1455  \\
			$\ast$P2Sharpen~\protect\cite{zhang2023p2sharpen} &   0.8092   &   0.0544   &   0.1446   &   0.8550   &   0.0283   &   0.1203   &   0.8557   &   0.0526   &   0.0977   &   0.8676   &   0.0249   &   0.1102   &   0.8433   &   0.0398   &   0.1221 \\
			$\ast$DISPNet~\protect\cite{wang2024deep}&   0.8337   &   0.0612   &   0.1123   &   0.8416   &   0.0450   &   0.1190   &   0.8852   &   0.0520   &   \underline{0.0666}   &   0.8667   &   0.0215   &   0.1143   &   0.8535   &   0.0440   &   0.1075   \\  
			$\ast$CANConv~\protect\cite{duan2024content}  &   0.7304   &   0.0917   &   0.1976   &   0.7843   &   0.0666   &   0.1619   &   0.8634   &   0.0643   &   0.0778   &   0.8473   &   0.0447   &   0.1132   &   0.7978   &   0.0678   &   0.1460  \\
			$\ast$PAPS~\protect\cite{jia2024paps}&   0.8431   &   0.0761   &   0.0890   &   0.7677   &   0.0848   &   0.1623   &   0.8355   &   0.0836   &   0.0891   &   0.8446   &   0.0404   &   0.1201   &   0.8290   &   0.0671   &   0.1123     \\
			$\ast$UniPAN~\protect\cite{cui2025enpowering} &   0.8430   &   0.0512   &   0.1130   &   0.7997   &   0.0755   &   0.1364   &   0.8811   &   0.0488   &   0.0745   &   0.8148   &   0.0644   &   0.1296   &   0.8316   &   0.0595   &   0.1169    \\
			$\ast$WFANet~\protect\cite{huang2025wavelet}  &   0.8664   &   0.0560   &   0.0831   &  \underline{0.8686}   &   \underline{0.0214}   &   0.1125   &   0.8643   &   0.0537   &   0.0868   &   \underline{0.8741}   &   0.0218   &   0.1063   &   \underline{0.8690}   &   0.0383   &   0.0965   \\
			$\ast$CSLP~\protect\cite{chen2025cslp} &   0.8425   &   0.0576   &   0.1072   &   0.8050   &   0.0538   &   0.1497   &   0.8814   &   0.0491   &   0.0736   &   0.8668   &   0.0320   &   \textbf{0.1046}   &   0.8495   &   0.0473   &   0.1090  \\
			$\ast$DAISP~\protect\cite{wang2025deep} &   \underline{0.8792}   &   \underline{0.0433}   &   0.0816  &   0.8237   &   0.0488   &   0.1346   &   \underline{0.8881}   &   \underline{0.0397}   &   0.0754   &   0.8699   &   \underline{0.0166}   &   0.1154   &   0.8675   &   \underline{0.0350}   &   0.1013 \\
			UniPS &   \textbf{0.9327}   &   \textbf{0.0170}   &   \textbf{0.0514}   &   \textbf{0.8932}   &   \textbf{0.0138}   &   \underline{0.0942}   &   \textbf{0.9107}   &   \textbf{0.0304}   &   \textbf{0.0606}   &   \textbf{0.8748}  &   \textbf{0.0158}   &   0.1113   &   \textbf{0.9034}   &   \textbf{0.0180}   &   \textbf{0.0800}  \\
			\thickhline
		\end{tabular}
	}
\end{table*} 

\subsection{Efficiency Analysis} 
We conduct the efficiency analysis to record model parameters, floating point operations (FLOPs), GPU memory usage, and time consumption of the deep pansharpening method on 4-band data of PSBench with PAN size of $256 \times 256$ and MS size of $64 \times 64 \times 4$ (Tab.~\ref{tab:time}). Most deep learning methods exhibit high processing speeds for test image samples. However, practical deployment requires training and maintaining separate models for individual satellite sensors, severely limiting their scalability. In comparison, UniPS introduces additional computational overhead due to its band-agnostic representation learning and diffusion-based fusion, which sacrifices some per-sample efficiency. In return, it offers stronger universality and scalability, while consistently achieving competitive performance across diverse tasks.

\subsection{Zero-shot Generalization} 
To verify the generalization capability of our universal pansharpening paradigm, we evaluate all pretrained models on external public datasets without retraining. Specifically, SegGF~\cite{wang2025deep}, which is captured by GaoFen-2 and excluded from PSBench, is utilized to assess unseen-scene generalization. QuickBird~\cite{ma2020pan} and the Jilin-series and SkySat-series datasets from PanScale~\cite{cao2026cross} are further adopted to evaluate cross-satellite generalization. As reported in Tab.~\ref{tab:zero}, UniPS consistently and significantly outperforms all task-specific methods across these unseen scenarios. Visual comparisons are presented in Fig.~\ref{exp:genGF_reduced} and Fig.~\ref{exp:genGF_full} for reduced and full scales, respectively, with additional results provided in Appendix~\ref{sec:suppl_D}. These results demonstrate that our method retains substantial advantages over existing methods when generalized to unseen scenes and satellites.
  
\subsection{Band Truncation Generalization}  
To further verify the superiority of UniPS, we conduct an additional experiment under a unified 4-band setting after spectral truncation. Specifically, for the 7-, 8-, and 10-band configurations in PSBench, we manually select the commonly shared B/G/R/NIR bands and discard the remaining bands. In this way, all datasets are converted into a consistent 4-band format, which allows existing methods to be retrained under the same input formats. Therefore, this setting removes the input-format mismatch among different satellites and provides a more direct evaluation of cross-sensor and cross-scene generalization. For fairness, all competing deep-learning-based methods are retrained on the mixed 4-band truncated datasets, denoted by $\ast$ in Tab.~\ref{tab:truncatedlr} and Tab.~\ref{tab:truncatedhr}. In contrast, UniPS keeps the same pretrained model without any retraining or additional adaptation. The reduced-scale results are reported in Tab.~\ref{tab:truncatedlr}. UniPS achieves the best average performance across all four metrics on all truncated configurations, demonstrating its consistent advantage under the unified band setting. The full-scale results in Tab.~\ref{tab:truncatedhr} further confirm this conclusion. UniPS achieves the highest average QNR and the lowest average $D_\lambda$ and $D_s$. These results indicate that UniPS maintains stronger spectral consistency and spatial fidelity even when all methods are re-optimized under the truncated datasets. In conclusion, manually truncating spectral bands can partially alleviate the input incompatibility of existing methods, but it does not fundamentally solve their limited generalization ability. UniPS still consistently outperforms them without additional fine-tuning. This verifies that our universal pansharpening paradigm learns more transferable spatial-spectral fusion priors.  

\begin{table*}[t]
	\caption{Pansharpening performance at reduced scale under different mask ratio.}
	\label{tab:masklr}
	\centering
	\renewcommand\arraystretch{1.0}
	\resizebox{2.10\columnwidth}{!}{
		\begin{tabular}{c|cccc|cccc|cccc|cccc|cccc}
			\toprule
			Mask & \multicolumn{4}{c|}{4 Bands} & \multicolumn{4}{c|}{7 Bands} & \multicolumn{4}{c|}{8 Bands} &  \multicolumn{4}{c|}{10 Bands} & \multicolumn{4}{c}{Average}  \\ 
			Ratio & PSNR$\uparrow$	  & SSIM$\uparrow$ & ERGAS$\downarrow$ & SAM$\downarrow$ & PSNR$\uparrow$	  & SSIM$\uparrow$ & ERGAS$\downarrow$ & SAM$\downarrow$ & PSNR$\uparrow$	  & SSIM$\uparrow$ & ERGAS$\downarrow$ & SAM$\downarrow$ & PSNR$\uparrow$	  & SSIM$\uparrow$ & ERGAS$\downarrow$ & SAM$\downarrow$ & PSNR$\uparrow$	  & SSIM$\uparrow$ & ERGAS$\downarrow$ & SAM$\downarrow$\\
			\midrule 
			0\% &  37.370   &  0.929    &  3.636    &  2.825    &  41.307   &  0.972    &  0.804    &  0.904    &  34.063   &  0.913    & 5.531    &  4.402    &  47.614   &  0.991    &  0.640    &  0.887    &  40.937   &  0.955    & 2.427    &  2.079   \\ 
			10\% &   36.918   &   0.927    &   3.661    &   2.998    &   39.174   &   0.960    &   1.078    &   1.181    &   33.665   &   0.910    &   5.760    &   4.768    &   45.527   &   0.986    &   0.873    &   1.156    &   39.657   &   0.950    &   2.594    &   2.330   \\
			20\%  &   35.874   &   0.919    &   4.453    &   3.795    &   36.002   &   0.950    &   2.204    &   2.182    &   32.889   &   0.902    &   6.982    &   5.719    &   43.049   &   0.982    &   1.465    &   1.907    &   37.802   &   0.943    &   3.444    &   3.171   \\
			40\% &   33.879   &   0.895    &   8.601    &   5.797    &   29.589   &   0.911    &   27.523   &   6.199    &   30.716   &   0.862    &   18.928   &   8.953    &   36.674   &   0.965    &   4.370    &   4.961    &   33.543   &   0.915    &   12.166   &   6.066   \\
			60\% &   31.583   &   0.851    &   22.574   &   8.842    &   22.313   &   0.831    &  165.034   &   14.053   &   27.772   &   0.788    &   58.058   &   14.934   &   28.278   &   0.906    &   15.746   &   13.660   &   28.281   &   0.856    &   51.053   &   12.249 \\
			\bottomrule
	\end{tabular}} 
	\vspace{\dblfloatsep} 
	\caption{Pansharpening performance at full scale under different mask ratio.}
	\label{tab:maskhr}
	\centering
	\renewcommand\arraystretch{1.0}
	\resizebox{2.10\columnwidth}{!}{
		\begin{tabular}{c|ccc|ccc|ccc|ccc|ccc}
			\toprule
			Mask& \multicolumn{3}{c|}{4 Bands} & \multicolumn{3}{c|}{7 Bands} & \multicolumn{3}{c|}{8 Bands} &  \multicolumn{3}{c|}{10 Bands} & \multicolumn{3}{c}{Average}  \\ 
			Ratio & QNR$\uparrow$ & $D_\lambda\downarrow$ & $D_s\downarrow$  & QNR$\uparrow$ & $D_\lambda\downarrow$ & $D_s\downarrow$ &  QNR$\uparrow$ & $D_\lambda\downarrow$ & $D_s\downarrow$ &  QNR$\uparrow$ & $D_\lambda\downarrow$ & $D_s\downarrow$  & QNR$\uparrow$ & $D_\lambda\downarrow$ & $D_s\downarrow$  \\
			\midrule 
			0\%   & 0.9327   &   0.0170   &   0.0514   &   0.9009   &   0.0108   &   0.0893   &   0.9135   &   0.0186   &   0.0693   &   0.8679   &   0.0215   &   0.1131   &   0.9030   &   0.0176   &   0.0810    \\
			10\% &   0.7733   &   0.0607   &   0.1770   &   0.8618   &   0.0325   &   0.1095   &   0.8665   &   0.0487   &   0.0898   &   0.8563   &   0.0365   &   0.1114   &   0.8300   &   0.0460   &   0.1306 \\
			20\% &   0.7589   &   0.0725   &   0.1827   &   0.8285   &   0.0555   &   0.1235   &   0.8532   &   0.0597   &   0.0932   &   0.8285   &   0.0584   &   0.1207   &   0.8080   &   0.0630   &   0.1387   \\
			40\% &   0.7222   &   0.0978   &   0.2018   &   0.7640   &   0.1046   &   0.1498   &   0.8140   &   0.0844   &   0.1133   &   0.7555   &   0.1151   &   0.1491   &   0.7541   &   0.1026   &   0.1623  \\
			60\% &   0.6866   &   0.1256   &   0.2187   &   0.6647   &   0.1822   &   0.1939   &   0.7287   &   0.1427   &   0.1559   &   0.6567   &   0.1901   &   0.1945   &   0.6791   &   0.1594   &   0.1971 \\
			\bottomrule
	\end{tabular}}
\end{table*} 
\subsection{Hyperspectral Generalization} 
To further showcase the compatibility of UniPS with unseen spectral bands, we conduct hyperspectral generalization on the Real HSI-MSI-PAN dataset~\cite{li2024real}, acquired by the Ziyuan-1 02D satellite. The dataset provides paired PAN and hyperspectral images with a spatial resolution ratio of $12\times$, where each HSI contains 76 spectral bands. The number of spectral bands in this dataset substantially exceeds the spectral configurations and expert assignments. Following the UniPS setting, we sequentially partition the 76 HSI bands into consecutive groups of four and fuse each group with the corresponding PAN image, without retraining or modifying the network architecture. For visualization, the 35th, 16th, and 7th bands of the original HSI are used to form the RGB composite. Notably, this experiment presents a dual generalization challenge: the model must accommodate previously unseen spectral bands while handling a substantially larger spatial resolution gap than those encountered during training. The qualitative results are shown in Fig.~\ref{exp:HyperGen}. UniPS successfully reconstructs high-resolution hyperspectral representations with clear spatial structures and well-preserved spectral characteristics, demonstrating its capability to generalize beyond predefined spectral configurations.

\begin{figure}[t]
	\centering
	\includegraphics[width=0.97\linewidth]{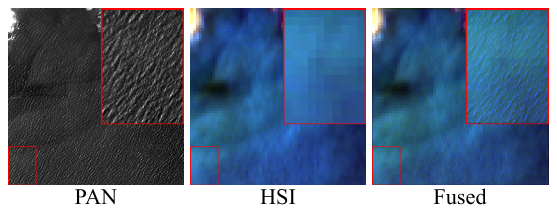}
	\vspace*{-0.4em}
	\caption{Hyperspectral generalization results.}
	\label{exp:HyperGen}
\end{figure}
\begin{figure*}[t]
	\centering
	\includegraphics[width=1\linewidth]{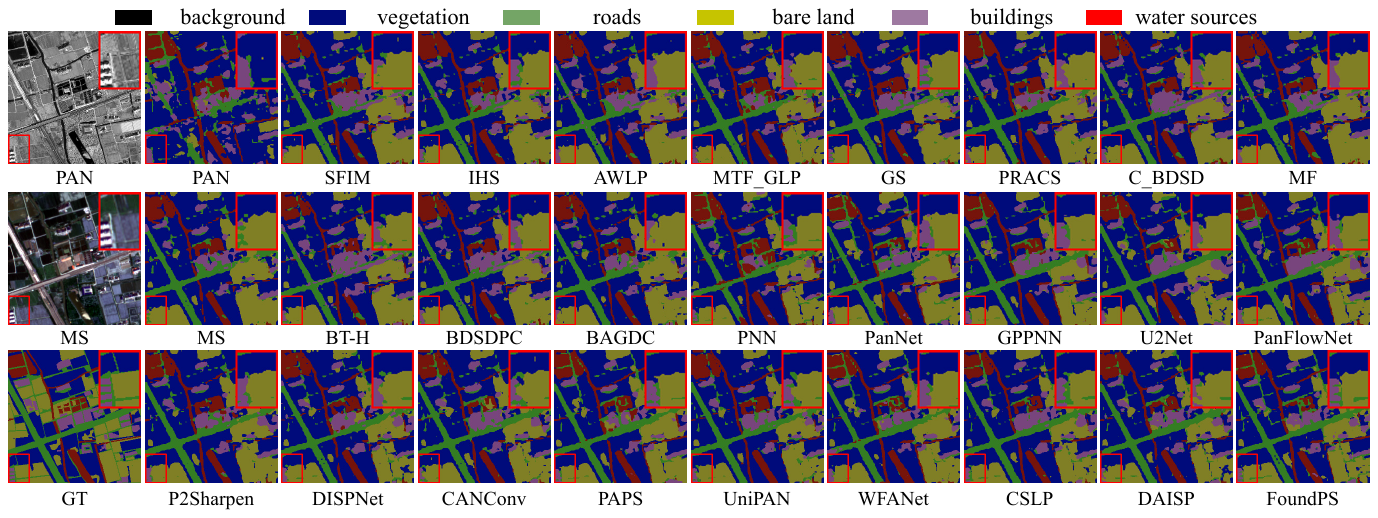}
	\vspace*{-0.4em}
	\caption{Reduced-scale segmentation comparisons on SegGF.}
	\label{exp:GFseg_reduced}
\end{figure*}

\begin{figure*}[ht]
	\centering
	\includegraphics[width=1\linewidth]{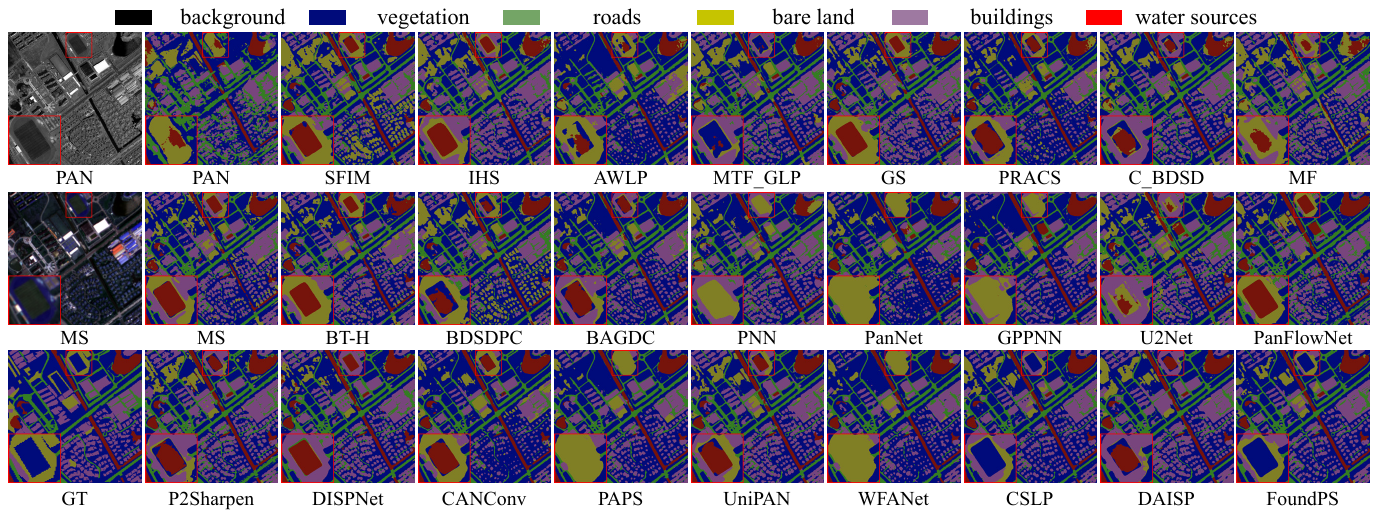}
	\vspace*{-0.4em}
	\caption{Full-scale segmentation comparisons on SegGF.}
	\label{exp:GFseg_full}
\end{figure*}  

\subsection{Band Corruption Completion}
To evaluate the robustness of UniPS, we randomly mask a portion of MS bands during inference. Specifically, each MS band is masked with probability determined by the mask ratio, and the selected bands are replaced by zero padding while their original spectral positions and the overall input dimensionality are retained. Therefore, the model is aware of which spectral positions are unavailable and is required to reconstruct the complete high-resolution MS image using the remaining valid bands with the PAN image. The reduced-scale results are reported in Tab.~\ref{tab:masklr}. As expected, the reconstruction quality gradually decreases as the mask ratio increases because progressively less spectral information is available. Nevertheless, the performance remains relatively stable under low mask ratios. For example, when 10\% of bands are masked, the model still achieves competitive performance. This result indicates that UniPS does not process each band independently. Instead, it captures intrinsic dependencies among spectral bands and exploits the PAN observation to compensate, to some extent, for locally missing spectral information. The masking operation also affects the expert-routing process in MiT. Under a low mask ratio, most of the cross-band dependency structure remains intact. The routing module can therefore infer a reliable latent representation from the observed bands, and the selected experts remain compatible with the underlying spectral configuration. However, when the mask ratio increases to 40\% or 60\%, a substantial portion of the spectral dependency structure is destroyed. The remaining bands no longer provide sufficient contextual evidence for reliable expert assignment and latent-space construction. Consequently, the routing process becomes uncertain, and the model performance drops significantly. The full-scale results in Tab.~\ref{tab:maskhr} show a consistent trend. UniPS obtains the best performance when all bands are available. With increasing mask ratios, QNR gradually decreases while both $D_\lambda$ and $D_s$ increase accordingly. Overall, UniPS is robust to mild spectral corruption and can exploit the intrinsic correlations among MS bands and the complementary spatial information provided by PAN. In conclusion once excessive masking severely disrupts the inter-band dependency structure, expert routing and missing-band inference become underdetermined, resulting in substantial performance degradation.

\begin{table*}[t]
	\centering 
	\caption{Segmentation performance on the SegGF dataset. \textbf{Bold} means the best average segmentation performance.}
	\vspace*{-0.4em}
	\label{tab:segment}
	\resizebox{2.10\columnwidth}{!}{
		\begin{tabular}{c@{\,~}|c@{\,~}c@{\,~}c@{\,~}c@{\,~}c@{\,~}c@{\,~}|c@{\,~}c@{\,~}c@{\,~}c@{\,~}c@{\,~}c@{\,~}|c@{\,~}c@{\,~}c@{\,~}c@{\,~}c@{\,~}c@{\,~}|c@{\,~}c@{\,~}c@{\,~}c@{\,~}c@{\,~}c@{\,~}}
			\toprule
			\multirow{3}{*}{\textbf{Method}}&\multicolumn{12}{c}{Reduced-scale}&\multicolumn{12}{c}{Full-scale}\\
			&\multicolumn{6}{c}{Accuracy}&\multicolumn{6}{c}{IoU}&\multicolumn{6}{c}{Accuracy}&\multicolumn{6}{c}{IoU}\\
			\cmidrule(lr){2-25}&Avg.&Land.&Build.&Road.&Vega.&Water.&Avg&Land.&build.&Road.&Vega.&water.&Avg.&Land.&Build.&Road.&Vega.&Water.&Avg.&Land.&build.&Road.&Vega.&water.\\
			\midrule
			PAN	&46.29&37.17&60.37&36.82&67.57&75.82&31.38&31.31&38.72&25.75&42.35&50.18&50.09&50.08&67.24&58.20&58.32&66.72&35.48&38.58&45.80&35.49&41.89&51.12\\			
			LRMS&54.84&71.18&59.27&47.72&83.13&67.76&42.52&57.76&47.60&33.16&62.05&42.35&57.15&73.46&71.90&48.77&82.28&66.46&44.83&59.05&52.94&37.38&63.97&55.64\\		
			SFIM~\cite{liu2000smoothing}&56.42&71.97&69.05&46.11&79.23&72.16&43.14&58.54&49.63&33.35&62.09&55.23&60.06&78.78&69.00&59.32&78.57&74.68&47.59&61.17&56.28&43.19&64.20&60.70\\		
			IHS~\cite{tu2001new}&56.05&72.16&67.40&42.84&79.40&74.51&42.49&57.64&49.87&31.62&61.31&54.49&57.40&76.09&74.79&54.30&80.62&58.63&45.53&59.45&55.92&40.68&64.61&52.52\\		
			AWLP~\cite{otazu2005introduction}&56.09&72.00&64.75&48.09&80.56&71.15&43.10&58.33&48.73&34.07&62.48&55.01&56.76&74.10&64.26&52.52&86.36&63.31	&45.47&56.51&54.10&40.32&65.88&55.99\\		
			MTF\_GLP~\cite{aiazzi2006mtf}&56.83&69.80&66.48&44.87&82.57&77.25&43.13&58.22&51.19&33.32&62.75&53.28&58.00&78.26&68.75&57.99&80.75&62.22&46.31&57.98&56.26&42.94&65.75&54.92\\		
			GS~\cite{aiazzi2007improving}&55.86&69.76&66.91&44.80&81.30&72.37&42.79&57.38&50.25&32.35&61.55&55.22&57.39&70.84&71.09&54.24&80.48&67.72&44.93&59.44&52.72&37.91&63.05&56.43\\		
			PRACS~\cite{choi2010new}&57.00&71.37&67.50&45.48&80.59&77.04&43.19&58.38&51.42&33.53&62.33&53.51&56.72&74.57&68.87&51.20&84.70&60.98&45.14&57.18&55.52&39.51&65.41&53.19\\		
			C\_BDSD~\cite{garzelli2014pansharpening}&55.44&70.27&67.59&46.15&81.52&67.11&42.81&57.89&49.32&32.83&62.43&54.38&57.97&79.09&69.17&55.16&81.47&62.96&46.43&57.70&56.83&42.14&65.97&55.92\\		
			MF~\cite{restaino2016fusion}&55.57&72.01&61.72&49.62&81.20&68.87&42.96&58.17&47.97&34.18&62.46&54.95&56.72&77.96&71.29&58.54&79.26&53.30&44.92&57.86&56.24&42.63&64.76&48.04\\		
			BT-H~\cite{lolli2017haze}&54.75&71.11&68.79&44.04&82.18&62.41&42.45&57.84&49.77&32.02&62.46&52.58&57.83&73.06&75.23&46.04&81.82&70.81&44.81&59.84&52.73&36.07&64.73&55.51\\		
			BDSDPC~\cite{vivone2019robust}&55.84&71.44&69.64&46.20&80.82&66.97&43.22&58.54&50.59&33.18&62.34&54.69&58.55&79.76&63.78&59.17&79.86&68.75&46.75&59.39&54.00&43.27&64.30&59.56\\		
			BAGDC~\cite{lu2021unified}&55.46&70.22&62.47&45.94&84.13&69.99&42.77&57.76&49.65&33.35&62.50&53.35&57.38&74.30&65.59&51.36&85.61&67.44&45.87&57.05&54.72&39.68&65.47&58.29\\		
			PNN~\cite{masi2016pansharpening}&55.58&68.04&55.10&48.60&83.11&78.63&41.22&57.33&45.58&35.07&61.85&47.52&57.93&72.29&77.44&61.52&79.92&56.39&45.77&59.16&55.89&44.18&65.07&50.34\\		
			PanNet~\cite{yang2017pannet}&55.14&69.24&64.98&47.85&78.18&70.61&41.84&54.98&46.19&34.58&60.39&54.87&58.07&74.16&64.56&55.16&82.25&72.30&45.52&58.41&51.11&41.08&64.80&57.69\\		
			GPPNN~\cite{xu2021deep}&56.51&69.63&71.06&46.31&79.82&72.24&43.42&57.56&50.67&34.64&60.78&56.86&58.98&71.20&74.11&50.07&83.73&74.80&45.97&59.73&54.21&40.54&64.77&56.55\\		
			U2Net~\cite{peng2023u2net}&55.32&72.79&65.21&47.68&80.86&65.39	&42.75&58.19&49.01&34.53&61.90&52.89&58.22&80.27&63.89&57.25&76.63&71.31&45.12&57.93&52.41&42.01&63.94&54.45\\		
			PanFlowNet~\cite{yang2023panflownet}&54.67&66.05&64.31&42.9&82.62&72.13&41.40&55.85&46.55&32.61&59.87&53.49&52.05&56.49&62.29&51.43&78.68&63.39	&37.69&51.86&39.95&32.69&61.16&40.47\\		
			P2Sharpen~\cite{zhang2023p2sharpen}&57.19&70.45&69.37&48.47&82.49&72.36	&44.49&59.12&52.80&35.49&62.61&56.91&61.32&71.86&78.90&58.68&80.77&77.69&48.47&61.98&58.31&43.81&65.11&61.63\\		
			DISPNet~\cite{wang2024deep}	&57.67&69.49&73.03&46.02&81.59&75.91&44.10&58.31&54.28&34.64&62.78&54.60&60.57&74.87&73.14&58.96&79.66&76.80&47.51&60.90&57.90&43.01&65.09&58.17\\		
			CANConv~\cite{duan2024content}&57.15&72.57&70.33&50.17&81.11&68.71&44.77&59.21&53.56&35.89&63.06&56.90&60.12&75.27&73.36&51.44&84.45&76.20&47.87&60.88&57.13&41.70&66.15&61.34\\		
			PAPS~\cite{jia2024paps}&56.50&71.37&67.10&48.04&80.58&71.90&43.61&57.14&50.23&35.92&61.57&56.80&60.16&73.32&73.25&59.22&83.01&72.15&48.11&61.19&56.57&44.71&65.69&60.47\\		
			UniPAN~\cite{cui2025enpowering}	&57.85&69.71&70.27&48.66&82.85&75.61&44.86&58.95&53.77&35.89&63.15&57.37&61.60&73.90&80.52&54.67&80.70&79.84&48.50&61.80&58.49&43.20&65.74&61.78\\		
			WFANet~\cite{huang2025wavelet}&56.15&71.13&66.41&46.76&79.84&72.78&42.85&56.44&47.96&35.01&62.00&55.71&59.95&72.40&74.27&57.42&82.90&72.72&47.59&60.38&55.71&44.01&65.81&59.64\\		
			CSLP~\cite{chen2025cslp}&58.34&73.33&74.37&51.16&80.79&70.36&45.98&60.71&56.07&37.60&63.00&58.48&61.58&75.01&77.83&59.09&82.87&74.70&49.68&62.87&60.92&45.35&66.32&62.60\\		
			DAISP~\cite{wang2025deep}&55.93&71.56&66.31&50.26&81.19&66.27&43.65&57.94&51.79&34.71&62.43&55.03&58.71&80.37&68.05&53.79&77.81&72.24&46.22&57.72&54.51&40.65&64.73&59.69\\		
			UniPS&\textbf{59.36}&74.27&72.23&53.21&81.33&75.10&\textbf{46.81}&60.43&56.45&39.64&64.18&60.15& 
			\textbf{62.05}&75.60&76.71&61.45&82.55&76.0&\textbf{50.11}&62.58&61.21&46.94&66.80&63.10\\		
			\bottomrule
		\end{tabular}
	} 
\end{table*}

\begin{figure*}[t]
	\centering
	\includegraphics[width=0.99\linewidth]{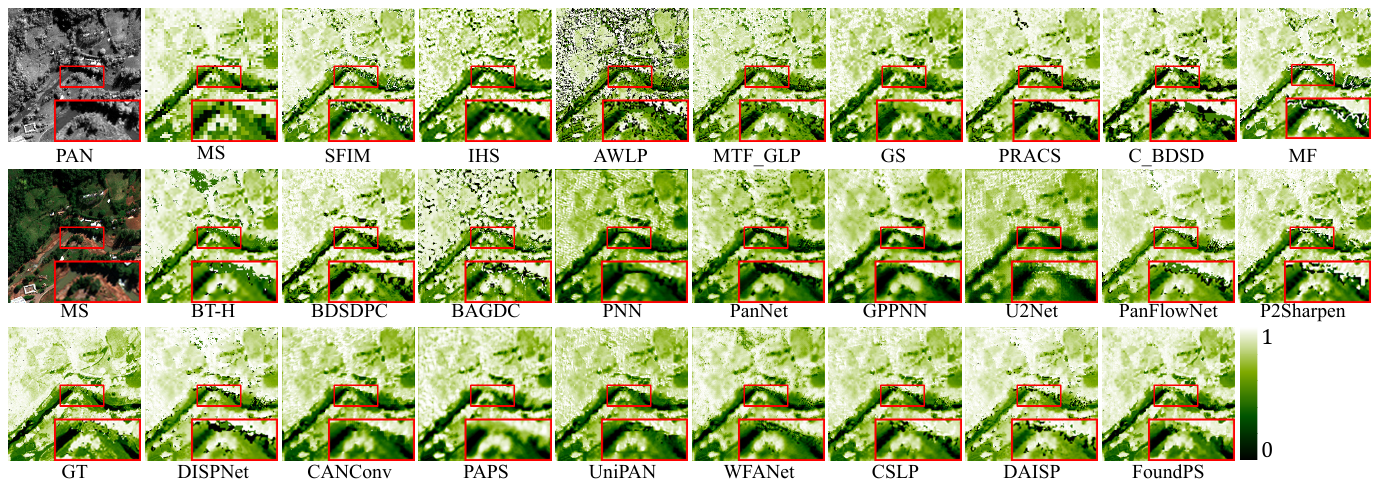}
	\caption{Remote sensing application of NDVI.}
	\label{exp:ndvi}
\end{figure*}

\begin{figure*}[t]
	\centering
	\includegraphics[width=0.99\linewidth]{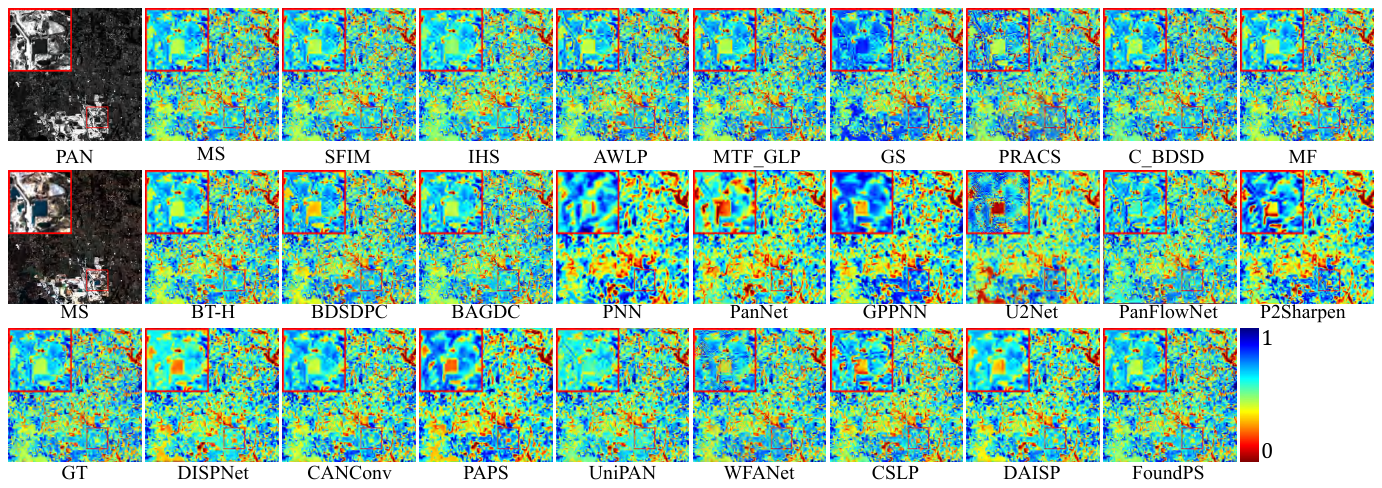}
	\caption{Remote sensing application of NDBI.}
	\label{exp:ndbi}
\end{figure*}

\begin{table*}[t]
	\centering
	\renewcommand\arraystretch{1.05}
	\caption{Quantitative evaluation of remote sensing index over four representative applications. The best results are shown in \textbf{bold}. The up or down arrows indicate higher or lower values correspond to better results.}
	\label{tab:application}
	\resizebox{2.10\columnwidth}{!}{
		\begin{tabular}{c|c@{\,~}c@{\,~}c@{\,~}|c@{\,~}c@{\,~}c@{\,~}|c@{\,~}c@{\,~}c@{\,~}|c@{\,~}c@{\,~}c@{\,~}|c@{\,~}c@{\,~}c@{\,~}}
			\thickhline
			& \multicolumn{3}{c|}{4 Bands (NDVI)} & \multicolumn{3}{c|}{7 Bands (NDWI)} & \multicolumn{3}{c|}{8 Bands (NDRE)} &  \multicolumn{3}{c|}{10 Bands (NDBI)} &  \multicolumn{3}{c}{Average} \\ 
			\multirow{-2}{*}{\textbf{Method}}  & RMSE$\downarrow$ & MAE$\downarrow$ & CC$\uparrow$  & RMSE$\downarrow$ & MAE$\downarrow$ & CC$\uparrow$ & RMSE$\downarrow$	 & MAE$\downarrow$ & CC$\uparrow$ & RMSE$\downarrow$  & MAE$\downarrow$ & CC$\uparrow$ & RMSE$\downarrow$  & MAE$\downarrow$ & CC$\uparrow$ \\
			\thickhline
			SFIM~\cite{liu2000smoothing} & 0.1202 & 0.0719 & 0.7965 & 0.0326 & 0.0204 & 0.9094  & 0.0966 & 0.0536 & 0.6701 & 0.0254 & 0.0175 & 0.9250 & 0.0701 & 0.0422 & 0.8401 \\
			IHS~\cite{tu2001new}& 0.1114 & 0.0748 & 0.8077  & 0.0373 & 0.0252 & 0.8802 	& 0.0846 & 0.0526 & 0.6965  & 0.0257 & 0.0178 & 0.9233 & 0.0662 & 0.0440 & 0.8421 \\
			AWLP~\cite{otazu2005introduction} & 0.2294 & 0.1227 & 0.6739 & 0.0298 & 0.0205 & 0.9136	& 0.2563 & 0.1369 & 0.3355 	& 0.0221 & 0.0150 & 0.9398 & 0.1299 & 0.0713 & 0.7539 \\
			MTF\_GLP~\cite{aiazzi2006mtf}& 0.1353 & 0.0808 & 0.7747 & 0.0317 & 0.0189 & 0.9223 	& 0.1398 & 0.0698 & 0.5704	& 0.0217 & 0.0140 & 0.9434 & 0.0803 & 0.0463 & 0.8262 \\
			GS~\cite{aiazzi2007improving} & 0.1088 & 0.0722 & 0.8077 & 0.0432 & 0.0286 & 0.8597 & 0.0835 & 0.0503 & 0.6994 	& 0.0276 & 0.0193 & 0.9209 & 0.0668 & 0.0439 & 0.8381 \\
			PRACS~\cite{choi2010new}& 0.2377 & 0.1402 & 0.6050 	& 0.0345 & 0.0222 & 0.8788	& 0.1710 & 0.0982 & 0.4921 	& 0.0291 & 0.0179 & 0.9067  & 0.1233 & 0.0729 & 0.7360   \\
			C\_BDSD~\cite{garzelli2014pansharpening} & 0.1536 & 0.0924 & 0.8002  & 0.0274 & 0.0173 & 0.9398	& 0.1849 & 0.0982 & 0.5088	& 0.0230 & 0.0159 & 0.9520 & 0.0929 & 0.0548 & 0.8319	 \\
			MF~\cite{restaino2016fusion} & 0.1097 & 0.0656 & 0.8204  & 0.0286 & 0.0182 & 0.9307  & 0.0850 & 0.0480 & 0.7177  & 0.0198 & 0.0141 & 0.9464 & 0.0622 & 0.0377 & 0.8662	 \\ 
			BT-H~\cite{lolli2017haze}& 0.1416 & 0.0784 & 0.7413 & 0.0264 & 0.0182 & 0.9195 & 0.0763 & 0.0438 & 0.7177 & 0.0201 & 0.0138 & 0.9469 & 0.0717 & 0.0414 & 0.8369 \\
			BDSDPC~\cite{vivone2019robust} 	& 0.1058 & 0.0666 & 0.8528  & 0.0221 & 0.0150 & 0.9480  & 0.1025 & 0.0569 & 0.6974  & 0.0191 & 0.0147 & 0.9511 & 0.0620 & 0.0390 & 0.8791 \\
			BAGDC~\cite{lu2021unified} 	& 0.1306 & 0.0709 & 0.7944 	& 0.0224 & 0.0155 & 0.9402 	& 0.1694 & 0.0944 & 0.4590 	& 0.0231 & 0.0168 & 0.9448 & 0.0818 & 0.0468 & 0.8202	\\ 
			PNN~\cite{masi2016pansharpening} & 0.1409 & 0.1097 & 0.7379 & 0.0635 & 0.0462 & 0.6573 	& 0.1323 & 0.0836 & 0.4035  & 0.0590 & 0.0473 & 0.7776 & 0.0990 & 0.0741 & 0.6873 \\
			PanNet~\cite{yang2017pannet} & 0.1849 & 0.1539 & 0.7634	& 0.0593 & 0.0479 & 0.7796 	& 0.1342 & 0.0893 & 0.4943 	& 0.0594 & 0.0489 & 0.7524  & 0.1139 & 0.0911 & 0.7231 \\
			GPPNN~\cite{xu2021deep} & 0.1441 & 0.1109 & 0.7591 	& 0.0512 & 0.0399 & 0.8160 	& 0.1128 & 0.0850 & 0.5390  & 0.0403 & 0.0306 & 0.8293  & 0.0889 & 0.0681 & 0.7598 \\
			U2Net~\cite{peng2023u2net} 	& 0.1201 & 0.0881 & 0.7842 	& 0.0510 & 0.0349 & 0.7483 	& 0.0801 & 0.0502 & 0.7118 	& 0.0487 & 0.0377 & 0.7363 & 0.0785 & 0.0565 & 0.7515 \\
			PanFlowNet~\cite{yang2023panflownet} & 0.1093 & 0.0742 & 0.8208	& 0.0582 & 0.0405 & 0.7369 	& 0.1347 & 0.0860 & 0.5070	& 0.0438 & 0.0252 & 0.7928 & 0.0825 & 0.0539 & 0.7505 \\
			P2Sharpen~\cite{zhang2023p2sharpen} & 0.1374 & 0.1045 & 0.8311 	& 0.0383 & 0.0286 & 0.8713 	& 0.1283 & 0.0768 & 0.5826 	& 0.0420 & 0.0323 & 0.8302 & 0.0872 & 0.0632 & 0.8014 \\
			DISPNet~\cite{wang2024deep} & 0.1334 & 0.0933 & 0.8263 & 0.0375 & 0.0269 & 0.8560 & 0.0893 & 0.0531 & 0.7276  & 0.0357 & 0.0286 & 0.9357 & 0.0778 & 0.0544 & 0.8529  \\  
			CANConv~\cite{duan2024content}	& 0.1020 & 0.0673 & 0.8213  & 0.0458 & 0.0342 & 0.8330  & 0.0990 & 0.0561 & 0.6925  & 0.0372 & 0.0328 & 0.9463 & 0.0703 & 0.0485 & 0.8453  \\
			PAPS~\cite{jia2024paps} & 0.1284 & 0.0971 & 0.8142 	& 0.0344 & 0.0251 & 0.8938 	& 0.0712 & 0.0460 & 0.7570  & 0.0374 & 0.0289 & 0.8568 & 0.0734 & 0.0544 & 0.8339 \\
			UniPAN~\cite{cui2025enpowering} & 0.0968 & 0.0681 & 0.8591  & 0.0462 & 0.0332 & 0.8077  & 0.1307 & 0.0763 & 0.6004	& 0.0347 & 0.0288 & 0.9324 & 0.0724 & 0.0502 & 0.8358 \\
			WFANet~\cite{huang2025wavelet} 	& 0.1215 & 0.0932 & 0.7952 	& 0.0563 & 0.0455 & 0.7425 	& 0.0809 & 0.0529 & 0.7111 	& 0.0285 & 0.0206 & 0.8978 & 0.0734 & 0.0550 & 0.8070 \\
			CSLP~\cite{chen2025cslp}& 0.0961 & 0.0660 & 0.8671 	& 0.0296 & 0.0204 & 0.9153 	& 0.0770 & 0.0445 & 0.7435 	& 0.0349 & 0.0276 & 0.8973 & 0.0614 & 0.0421 & 0.8674 \\
			DAISP~\cite{wang2025deep}& 0.1099 & 0.0729 & 0.8258  & 0.0342 & 0.0236 & 0.8688 & 0.0785 & 0.0464 & 0.7319 & 0.0302 & 0.0228 & 0.9272  & 0.0657 & 0.0438 & 0.8528 \\
			UniPS &   \textbf{0.0769}   &   \textbf{0.0502}   &   \textbf{0.8950}   &   \textbf{0.0196}   &   \textbf{0.0134}   &   \textbf{0.9510}   &   \textbf{0.0652}   &   \textbf{0.0396}   &   \textbf{0.7835}  &   \textbf{0.0184}   &   \textbf{0.0137}   &   \textbf{0.9542}    & \textbf{0.0458}  & \textbf{0.0302}  & \textbf{0.9079} \\	 
			\thickhline
		\end{tabular}}
\end{table*}

\subsection{Application on Segmentation} 
SegGF dataset~\cite{wang2025deep} is also specifically designed to overcome the limitations of non-reference evaluation metrics by providing meticulously crafted and highly reliable semantic labels. We employ a unified multi-modal segmentation model SegFormer~\cite{xie2021segformer} to rigorously assess the pansharpening performance of each method. For comprehensive comparisons, the mono-channel PAN images, along with resized low-quality MS images, are considered to provide segmentation performance baseline. The qualitative results on reduced scale and full scale are illustrated in Fig.~\ref{exp:GFseg_reduced} and Fig.~\ref{exp:GFseg_full}, respectively. Although PAN images contain more spatial details, their segmentation accuracy is significantly lower than that of lower-quality MS images, which highlights the critical influence of spectral information on segmentation. In comparison to other pansharpening methods, it is evident that the fusion of PAN and MS images can significantly enhance performance and accuracy in high-level tasks. Notably, our method achieves the best segmentation performance, particularly in highlighted regions. This demonstrates that our fusion method achieves the highest consistency between spectral and spatial information, leading to superior overall performance. Furthermore, we quantitatively evaluate the segmentation results using the intersection over Union (IoU) and mean accuracy (Acc.) metrics, as presented in Tab.~\ref{tab:segment}. Hence, our method has the highest mean IoU and Acc., indicating the best overall performance.

\subsection{Remote Sensing Application} 
To evaluate the spectral fidelity of each pansharpening method across diverse band configurations, we systematically evaluate the effectiveness of each method in remote sensing applications by computing four key normalized difference indices using specific band combinations. Specifically, we generate the normalized difference vegetation index (NDVI)~\cite{tucker1979red} in four-band datasets, water index (NDWI)~\cite{mcfeeters1996use} in seven-band datasets, red edge index (NDRE)~\cite{barnes2000coincident} in eight-band datasets and building index (NDBI)~\cite{zha2003use} in ten-band datasets. Their definitions are given below:
\begin{align} 
	\text{NDVI} &= \frac{NIR - R}{NIR + R},   \\
	\text{NDWI} &= \frac{G - NIR}{G + NIR},  \\
	\text{NDRE} &= \frac{NIR - RE}{NIR + RE}, \\
	\text{NDBI} &= \frac{SWIR - NIR}{SWIR + NIR}.
\end{align}
In accordance with these definitions, the normalized difference indices are confined to the range of $[-1, 1]$, which can serve as the spectral accuracy indicators of the pansharpening process to a certain extent. The quantitative results over these four indices are presented in Tab.~\ref{tab:application}. Clearly, our method achieves the best values across all these objective metrics, namely root mean squared error (RMSE), mean absolute error (MAE), and correlation coefficient (CC). Moreover, we present the visualizations in Fig.~\ref{exp:ndvi}, and Fig.~\ref{exp:ndbi}, with more visual comparisons deferred to Appendix~\ref{sec:suppl_D}. Particularly, all of our index maps exhibit the closest resemblance to the reference, while the other methods suffer from intensity distortions.

\section{Conclusion and Discussion}\label{sec:5} 
In this work, we present UniPS, a universal model for pansharpening that establishes a unified, band-agnostic, and scene-robust fusion paradigm. By projecting MS images with arbitrary band configurations into a shared latent space and performing diffusion-based cross-modal fusion, UniPS enables a model to generalize across heterogeneous sensors and diverse scenes. To support large-scale training and fair evaluation, we further construct PSBench, a comprehensive worldwide benchmark comprising multi-satellite MS and PAN image pairs, which fills a critical gap in the development of pansharpening  models. Extensive experiments demonstrate that existing pansharpening methods, particularly lightweight or small-capacity models, often fail to achieve robust performance under heterogeneous spectral settings. This limitation stems not only from insufficient model capacity, but more fundamentally from the lack of architectural designs that can internalize and disentangle diverse spectral modalities within a unified representation. As a result, such models struggle to fully capture cross-band and cross-sensor characteristics, leading to degraded generalization performance on large-scale and multi-modal benchmarks.

While UniPS provides an effective and scalable pipeline for universal pansharpening, several limitations remain. First, due to hardware constraints, the current implementation processes PAN images using patches up to $1024\times1024$ pixels, which limits direct modeling of ultra-large, gigapixel-scale satellite scenes. Second, the bridge posterior sampling strategy requires gradient preservation during inference, introducing additional computational overhead. Finally, the parameter scale of UniPS remains moderate compared with emerging large-scale foundation models. Addressing these limitations, especially scaling to larger contexts and improving inference efficiency, remains an important direction for future work.

\section*{Funding}
This work was supported in part by the Fundamental and Interdisciplinary Disciplines Breakthrough Plan of the Ministry of Education of China (JYB2025XDXM101), the National Natural Science Foundation of China (62276192, 62225113, 624B2109), the Zhongguancun Academy Project (20240308), and the New Generation Artificial Intelligence-National Science and Technology Major Project (2025ZD0123602).

\section*{Conflict of Interest}
The authors declare that they have no conflict of interest.

\section*{Ethics Approval and Consent to Participate}
Not applicable.

\section*{Consent for Publication}
Not applicable.

\section*{Data Availability}
The datasets used in this study are publicly available.

\section*{Materials Availability}
Not applicable.

\section*{Code Availability}
Available at \url{https://github.com/MiliLab/UniPS}.

\section*{Author Contribution}
Hebaixu Wang contributed to conceptualization, methodology, investigation, formal analysis, visualization, validation, funding acquisition, resources, and writing--original draft.
Jing Zhang contributed to methodology, formal analysis, funding acquisition, and writing--original draft.
Haonan Guo contributed to formal analysis, validation, and writing--review and editing.
Di Wang contributed to formal analysis, visualization, and validation.
Jiayi Ma contributed to conceptualization, methodology, funding acquisition, project administration, and writing--review and editing.
Bo Du contributed to methodology, funding acquisition, and writing--review and editing.
Liangpei Zhang contributed to formal analysis, validation, and writing--review and editing.
 
%
 
\bibliography{sn-bibliography}

@inproceedings{cui2025enpowering,
	title={Enpowering Your Pansharpening Models with Generalizability: Unified Distribution is All You Need},
	author={Cui, Yongchuan and Liu, Peng and Zhang, Hui},
	booktitle={Proc. IEEE Int. Conf. Comput. Vision},
	pages={11850--11860},
	year={2025}
}

@inproceedings{kieu2025bidirectional,
  title={Bidirectional diffusion bridge models},
  author={Kieu, Duc and Do, Kien and Nguyen, Toan and Nguyen, Dang and Nguyen, Thin},
  booktitle={Proc. ACM SIGKDD Int. Conf. Knowl. Discov. Data Min.},
  pages={1139--1148},
  year={2025}
}

@article{xu2023vitpose++,
  title={Vitpose++: Vision transformer for generic body pose estimation},
  author={Xu, Yufei and Zhang, Jing and Zhang, Qiming and Tao, Dacheng},
  journal={IEEE Trans. Pattern Anal. Mach. Intell.},
  volume={46},
  number={2},
  pages={1212--1230},
  year={2023},
  publisher={IEEE}
}

@article{cui2025leveraging,
	title={Leveraging Large-Scale Pretrained Spatial-Spectral Priors for General Zero-Shot Pansharpening},
	author={Cui, Yongchuan and Liu, Peng and Zeng, Yi},
	journal={arXiv preprint arXiv:2512.02643},
	year={2025}
}

@article{zhang2025rethinking,
	title={Rethinking Pan-sharpening: A New Training Process for Full-Resolution Generalization},
	author={Zhang, Ran and He, Xuanhua and Xueheng, Li and Cao, Ke and Liu, Liu and Xu, Wenbo and Jiabin, Fang and Qize, Yang and Zhang, Jie},
	journal={arXiv preprint arXiv:2507.15059},
	year={2025}
}

@article{nie2021evomoe,
	title={Evomoe: An evolutional mixture-of-experts training framework via dense-to-sparse gate},
	author={Nie, Xiaonan and Miao, Xupeng and Cao, Shijie and Ma, Lingxiao and Liu, Qibin and Xue, Jilong and Miao, Youshan and Liu, Yi and Yang, Zhi and Cui, Bin},
	journal={arXiv preprint arXiv:2112.14397},
	year={2021}
}

@article{chen2015sirf,
	title={SIRF: Simultaneous satellite image registration and fusion in a unified framework},
	author={Chen, Chen and Li, Yeqing and Liu, Wei and Huang, Junzhou},
	journal={IEEE Trans. Image Process},
	volume={24},
	number={11},
	pages={4213--4224},
	year={2015},
	publisher={IEEE}
}

@article{ballester2006variational,
	title={A variational model for P+ XS image fusion},
	author={Ballester, Coloma and Caselles, Vicent and Igual, Laura and Verdera, Joan and Roug{\'e}, Bernard},
	journal={Int. J. Comput. Vis.},
	volume={69},
	number={1},
	pages={43--58},
	year={2006},
	publisher={Springer}
}

@article{arienzo2022full,
	title={Full-resolution quality assessment of pansharpening: Theoretical and hands-on approaches},
	author={Arienzo, Alberto and Vivone, Gemine and Garzelli, Andrea and Alparone, Luciano and Chanussot, Jocelyn},
	journal={IEEE Geosci. Remote Sens. Mag.},
	volume={10},
	number={3},
	pages={168--201},
	year={2022},
	publisher={IEEE}
}

@article{xiong2020large,
	title={A large-scale remote sensing database for subjective and objective quality assessment of pansharpened images},
	author={Xiong, Yiming and Shao, Feng and Meng, Xiangchao and Jiang, Qiuping and Sun, Weiwei and Fu, Randi and Ho, Yo-Sung},
	journal={J. Vis. Commun. Image Represent.},
	volume={73},
	pages={102947},
	year={2020},
	publisher={Elsevier}
}

@inproceedings{cao2026cross,
	title={Cross-Scale Pansharpening via ScaleFormer and the PanScale Benchmark},
	author={Cao, Ke and He, Xuanhua and Li, Xueheng and Zhu, Lingting and Wang, Yingying and Ma, Ao and Zhang, Zhanjie and Zhou, Man and Xie, Chengjun and Zhang, Jie},
	booktitle={Proc. IEEE Conf. Comput. Vis. Pattern Recog.},
	pages={13211--13221},
	year={2026}
}

@article{meng2020large,
	title={A large-scale benchmark data set for evaluating pansharpening performance: Overview and implementation},
	author={Meng, Xiangchao and Xiong, Yiming and Shao, Feng and Shen, Huanfeng and Sun, Weiwei and Yang, Gang and Yuan, Qiangqiang and Fu, Randi and Zhang, Hongyan},
	journal={IEEE Geosci. Remote Sens. Mag.},
	volume={9},
	number={1},
	pages={18--52},
	year={2020},
	publisher={IEEE}
}

@inproceedings{pradeep2013implementation,
	title={Implementation of image fusion algorithm using MATLAB (LAPLACIAN PYRAMID)},
	author={Pradeep, M},
	booktitle={Proc. Int. Multi-Conf. Autom. Comput. Commun. Control Compress. Sens.},
	pages={165--168},
	year={2013},
	organization={IEEE}
}

@article{li2025uni,
	title={Uni-moe: Scaling unified multimodal llms with mixture of experts},
	author={Li, Yunxin and Jiang, Shenyuan and Hu, Baotian and Wang, Longyue and Zhong, Wanqi and Luo, Wenhan and Ma, Lin and Zhang, Min},
	journal={IEEE Trans. Pattern Anal. Mach. Intell.},
	year={2025},
	publisher={IEEE}
}

@inproceedings{wang2025hmoe,
	title={Hmoe: Heterogeneous mixture of experts for language modeling},
	author={Wang, An and Sun, Xingwu and Xie, Ruobing and Li, Shuaipeng and Zhu, Jiaqi and Yang, Zhen and Zhao, Pinxue and Han, Weidong and Kang, Zhanhui and Wang, Di and others},
	booktitle={Proc. Conf. Empir. Methods Nat. Lang. Process},
	pages={21954--21968},
	year={2025}
}

@article{ren2023pangu,
	title={Pangu-$\{$$\backslash$Sigma$\}$: Towards trillion parameter language model with sparse heterogeneous computing},
	author={Ren, Xiaozhe and Zhou, Pingyi and Meng, Xinfan and Huang, Xinjing and Wang, Yadao and Wang, Weichao and Li, Pengfei and Zhang, Xiaoda and Podolskiy, Alexander and Arshinov, Grigory and others},
	journal={arXiv preprint arXiv:2303.10845},
	year={2023}
}

@inproceedings{lenz2025jamba,
	title={Jamba: Hybrid transformer-mamba language models},
	author={Lenz, Barak and Lieber, Opher and Arazi, Alan and Bergman, Amir and Manevich, Avshalom and Peleg, Barak and Aviram, Ben and Almagor, Chen and Fridman, Clara and Padnos, Dan and others},
	booktitle={Int. Conf. Learn. Represent.},
	year={2025}
}

@inproceedings{lewis2021base,
	title={Base layers: Simplifying training of large, sparse models},
	author={Lewis, Mike and Bhosale, Shruti and Dettmers, Tim and Goyal, Naman and Zettlemoyer, Luke},
	booktitle={Proc. Int. Conf. Mach. Learn.},
	pages={6265--6274},
	year={2021},
	organization={PMLR}
}

@article{jiang2024mixtral,
	title={Mixtral of experts},
	author={Jiang, Albert Q and Sablayrolles, Alexandre and Roux, Antoine and Mensch, Arthur and Savary, Blanche and Bamford, Chris and Chaplot, Devendra Singh and Casas, Diego de las and Hanna, Emma Bou and Bressand, Florian and others},
	journal={arXiv preprint arXiv:2401.04088},
	year={2024}
}

@article{wei2024skywork,
	title={Skywork-moe: A deep dive into training techniques for mixture-of-experts language models},
	author={Wei, Tianwen and Zhu, Bo and Zhao, Liang and Cheng, Cheng and Li, Biye and L{\"u}, Weiwei and Cheng, Peng and Zhang, Jianhao and Zhang, Xiaoyu and Zeng, Liang and others},
	journal={arXiv preprint arXiv:2406.06563},
	year={2024}
}

@inproceedings{rajbhandari2022deepspeed,
	title={Deepspeed-moe: Advancing mixture-of-experts inference and training to power next-generation ai scale},
	author={Rajbhandari, Samyam and Li, Conglong and Yao, Zhewei and Zhang, Minjia and Aminabadi, Reza Yazdani and Awan, Ammar Ahmad and Rasley, Jeff and He, Yuxiong},
	booktitle={Proc. Int. Conf. Mach. Learn.},
	pages={18332--18346},
	year={2022},
	organization={PMLR}
}

@article{pan2024dense,
	title={Dense training, sparse inference: Rethinking training of mixture-of-experts language models},
	author={Pan, Bowen and Shen, Yikang and Liu, Haokun and Mishra, Mayank and Zhang, Gaoyuan and Oliva, Aude and Raffel, Colin and Panda, Rameswar},
	journal={arXiv preprint arXiv:2404.05567},
	year={2024}
}

@inproceedings{wumixture,
	title={Mixture of LoRA Experts},
	author={Wu, Xun and Huang, Shaohan and Wei, Furu},
	booktitle={Int. Conf. Learn. Represent.}
}

@article{cai2025survey,
	title={A survey on mixture of experts in large language models},
	author={Cai, Weilin and Jiang, Juyong and Wang, Fan and Tang, Jing and Kim, Sunghun and Huang, Jiayi},
	journal={IEEE Trans. Knowl. Data Eng.},
	year={2025},
	publisher={IEEE}
}

@inproceedings{peng2023u2net,
	title={U2net: A general framework with spatial-spectral-integrated double u-net for image fusion},
	author={Peng, Siran and Guo, Chenhao and Wu, Xiao and Deng, Liang-Jian},
	booktitle={Proc. ACM Int. Conf. Multimed.},
	pages={3219--3227},
	year={2023}
}

@article{roy2025classical,
	title={From Classical Image Fusion to Deep Representation Learning: A Survey of the Advances in Hyperspectral Image Pan-sharpening},
	author={Roy, Swalpa Kumar and Vivone, Gemine and Das, Swagatam and Raman, Balasubramanian and Singh, Pravendra and others},
	journal={Inf. Fusion},
	pages={103834},
	year={2025},
	publisher={Elsevier}
}

@article{xu2023upangan,
	title={UPanGAN: Unsupervised pansharpening based on the spectral and spatial loss constrained generative adversarial network},
	author={Xu, Qizhi and Li, Yuan and Nie, Jinyan and Liu, Qingjie and Guo, Mengyao},
	journal={Inf. Fusion},
	volume={91},
	pages={31--46},
	year={2023},
	publisher={Elsevier}
}

@article{shang2024mft,
	title={MFT-GAN: A multiscale feature-guided transformer network for unsupervised hyperspectral pansharpening},
	author={Shang, Yanli and Liu, Jianjun and Zhang, Jingyi and Wu, Zebin},
	journal={IEEE Trans. Geosci. Remote. Sens.},
	volume={62},
	pages={1--16},
	year={2024},
	publisher={IEEE}
}

@article{jia2024paps,
	title={Paps: Progressive attention-based pan-sharpening},
	author={Jia, Yanan and Hu, Qiming and Dian, Renwei and Ma, Jiayi and Guo, Xiaojie},
	journal={IEEE CAA J. Autom. Sin.},
	volume={11},
	number={2},
	pages={391--404},
	year={2024},
	publisher={IEEE}
}

@inproceedings{huang2025wavelet,
	title={Wavelet-Assisted Multi-Frequency Attention Network for Pansharpening},
	author={Huang, Jie and Huang, Rui and Xu, Jinghao and Peng, Siran and Duan, Yule and Deng, Liang-Jian},
	booktitle={Proc. AAAI Conf. Artif. Intell.},
	volume={39},
	number={4},
	pages={3662--3670},
	year={2025}
}

@article{wu2025semantic,
  title={A semantic-enhanced multi-modal remote sensing foundation model for Earth observation},
  author={Wu, Kang and Zhang, Yingying and Ru, Lixiang and Dang, Bo and Lao, Jiangwei and Yu, Lei and Luo, Junwei and Zhu, Zifan and Sun, Yue and Zhang, Jiahao and others},
  journal={Nat. Mach. Intell.},
  volume={7},
  number={8},
  pages={1235--1249},
  year={2025}, 
}

@article{gao2017bounds,
	title={Bounds on the jensen gap, and implications for mean-concentrated distributions},
	author={Gao, Xiang and Sitharam, Meera and Roitberg, Adrian E},
	journal={arXiv preprint arXiv:1712.05267},
	year={2017}
}

@article{ciotola2024hyperspectral,
  title={Hyperspectral pansharpening: Critical review, tools, and future perspectives},
  author={Ciotola, Matteo and Guarino, Giuseppe and Vivone, Gemine and Poggi, Giovanni and Chanussot, Jocelyn and Plaza, Antonio and Scarpa, Giuseppe},
  journal={IEEE Geosci. Remote Sens. Mag.},
  year={2024},
  publisher={IEEE}
}

@inproceedings{xu2025hipandas,
  title={Hipandas: Hyperspectral image joint denoising and super-resolution by image fusion with the panchromatic image},
  author={Xu, Shuang and Zhao, Zixiang and Bai, Haowen and Yu, Chang and Peng, Jiangjun and Cao, Xiangyong and Meng, Deyu},
  booktitle={Proc. IEEE Int. Conf. Comput. Vis.},
  pages={12002--12011},
  year={2025}
}

@article{zhou2025toward,
	title={Toward Resolution Mismatching: Modality-Aware Feature-Aligned Network for Pan-Sharpening},
	author={Zhou, Man and He, Xuanhua and Hong, Danfeng},
	journal={IEEE Trans. Pattern Anal. Mach. Intell.},
	year={2025},
	publisher={IEEE}
}

@inproceedings{barnes2000coincident,
	title={Coincident detection of crop water stress, nitrogen status and canopy density using ground based multispectral data},
	author={Barnes, EM and Clarke, TR and Richards, SE and Colaizzi, PD and Haberland, JULIO and Kostrzewski, M and Waller, P and Choi, C and Riley, E and Thompson, T and others},
	booktitle={Proc. Int. Conf. on Precision Agriculture},
	volume={1619},
	number={6},
	year={2000}
}

@article{zha2003use,
	title={Use of normalized difference built-up index in automatically mapping urban areas from TM imagery},
	author={Zha, Yong and Gao, Jay and Ni, Shaoxiang},
	journal={Int. J. Remote Sens.},
	volume={24},
	number={3},
	pages={583--594},
	year={2003},
	publisher={Taylor \& Francis}
}

@article{mcfeeters1996use,
	title={The use of the Normalized Difference Water Index (NDWI) in the delineation of open water features},
	author={McFeeters, Stuart K},
	journal={Int. J. Remote Sens.},
	volume={17},
	number={7},
	pages={1425--1432},
	year={1996},
	publisher={Taylor \& Francis}
}

@article{liu2000smoothing,
	title={Smoothing filter-based intensity modulation: A spectral preserve image fusion technique for improving spatial details},
	author={Liu, JG},
	journal={Int. J. Remote Sens.},
	volume={21},
	number={18},
	pages={3461--3472},
	year={2000},
	publisher={Taylor \& Francis}
}

@article{aiazzi2007improving,
	title={Improving component substitution pansharpening through multivariate regression of MS $+ $ Pan data},
	author={Aiazzi, Bruno and Baronti, Stefano and Selva, Massimo},
	journal={IEEE Trans. Geosci. Remote. Sens.},
	volume={45},
	number={10},
	pages={3230--3239},
	year={2007},
	publisher={IEEE}
}

@article{tu2001new,
	title={A new look at IHS-like image fusion methods},
	author={Tu, Te-Ming and Su, Shun-Chi and Shyu, Hsuen-Chyun and Huang, Ping S},
	journal={Inf. Fusion},
	volume={2},
	number={3},
	pages={177--186},
	year={2001},
	publisher={Elsevier}
}

@article{lu2021unified,
	title={A unified pansharpening model based on band-adaptive gradient and detail correction},
	author={Lu, Hangyuan and Yang, Yong and Huang, Shuying and Tu, Wei and Wan, Weiguo},
	journal={IEEE Trans. Image Process},
	volume={31},
	pages={918--933},
	year={2021},
	publisher={IEEE}
}

@article{chen2025cslp,
	title={CSLP: A novel pansharpening method based on compressed sensing and L-PNN},
	author={Chen, Yingxia and Wan, Zhenglong and Chen, Zeqiang and Wei, Mingming},
	journal={Inf. Fusion},
	volume={118},
	pages={103002},
	year={2025},
	publisher={Elsevier}
}

@inproceedings{yang2023panflownet,
	title={Panflownet: A flow-based deep network for pan-sharpening},
	author={Yang, Gang and Cao, Xiangyong and Xiao, Wenzhe and Zhou, Man and Liu, Aiping and Chen, Xun and Meng, Deyu},
	booktitle={Proc. IEEE Conf. Comput. Vis. Pattern Recogn.},
	pages={16857--16867},
	year={2023}
}

@inproceedings{yang2017pannet,
	title={PanNet: A deep network architecture for pan-sharpening},
	author={Yang, Junfeng and Fu, Xueyang and Hu, Yuwen and Huang, Yue and Ding, Xinghao and Paisley, John},
	booktitle={Proc. IEEE Int. Conf. Comput. Vision},
	pages={5449--5457},
	year={2017}
}

@inproceedings{xu2021deep,
	title={Deep gradient projection networks for pan-sharpening},
	author={Xu, Shuang and Zhang, Jiangshe and Zhao, Zixiang and Sun, Kai and Liu, Junmin and Zhang, Chunxia},
	booktitle={Proc. IEEE Conf. Comput. Vis. Pattern Recogn.},
	pages={1366--1375},
	year={2021}
}

@inproceedings{wang2025deep,
	title={Deep Adaptive Unfolded Network via Spatial Morphology Stripping and Spectral Filtration for Pan-sharpening},
	author={Wang, Hebaixu and Ma, Jiayi},
	booktitle={Proc. IEEE Int. Conf. Comput. Vision},
	pages={10730--10740},
	year={2025}
}

@inproceedings{li2025enhanced,
	title={Enhanced Pansharpening via Quaternion Spatial-Spectral Interactions},
	author={Li, Dong and Luo, Chunhui and Bao, Yuanfei and Yang, Gang and Xiao, Jie and Fu, Xueyang and Zha, Zheng-Jun},
	booktitle={Proc. IEEE Conf. Comput. Vis. Pattern Recogn.},
	pages={10908--10918},
	year={2025}
}

@article{guo2025better,
	title={Better Image Filter for Pansharpening},
	author={Guo, Anjing and Dian, Renwei and Wang, Nan and Li, Shutao},
	journal={IEEE Trans. Image Process},
	volume={34},
	pages={8171--8184},
	year={2025},
	publisher={IEEE}
}

@article{vivone2014critical,
	title={A critical comparison among pansharpening algorithms},
	author={Vivone, Gemine and Alparone, Luciano and Chanussot, Jocelyn and Dalla Mura, Mauro and Garzelli, Andrea and Licciardi, Giorgio A and Restaino, Rocco and Wald, Lucien},
	journal={IEEE Trans. Geosci. Remote. Sens.},
	volume={53},
	number={5},
	pages={2565--2586},
	year={2014},
	publisher={IEEE}
}

@article{fang2013variational,
	title={A variational approach for pan-sharpening},
	author={Fang, Faming and Li, Fang and Shen, Chaomin and Zhang, Guixu},
	journal={IEEE Trans. Image Process},
	volume={22},
	number={7},
	pages={2822--2834},
	year={2013},
	publisher={IEEE}
}

@article{ma2020pan,
	title={Pan-GAN: An unsupervised pan-sharpening method for Remote Sens. image fusion},
	author={Ma, Jiayi and Yu, Wei and Chen, Chen and Liang, Pengwei and Guo, Xiaojie and Jiang, Junjun},
	journal={Inf. Fusion},
	volume={62},
	pages={110--120},
	year={2020},
	publisher={Elsevier}
}

@article{zhong2024ssdiff,
	title={Ssdiff: Spatial-spectral integrated diffusion model for Remote Sens. pansharpening},
	author={Zhong, Yu and Wu, Xiao and Cao, Zihan and Dou, Hong-Xia and Deng, Liang-Jian},
	journal={Adv. Neural Inform. Process. Syst.},
	volume={37},
	pages={77962--77986},
	year={2024}
}

@article{wang2025forgotten,
	title={From Forgotten to Pan-sharpening},
	author={Wang, Jiaming and Lin, Yansong and Chen, Chuanxi and Huang, Xiao and Zhang, Ruiqian and Wang, Yu and Lu, Tao},
	journal={Pattern Recog.},
	pages={112653},
	year={2025},
	publisher={Elsevier}
}

@article{wang2024zero,
	title={Zero-Sharpen: A universal pansharpening method across satellites for reducing scale-variance gap via zero-shot variation},
	author={Wang, Hebaixu and Zhang, Hao and Tian, Xin and Ma, Jiayi},
	journal={Inf. Fusion},
	volume={101},
	pages={102003},
	year={2024},
	publisher={Elsevier}
}

@inproceedings{xiao2025hyperspectral,
	title={Hyperspectral Pansharpening via Diffusion Models with Iteratively Zero-Shot Guidance},
	author={Xiao, Jin-Liang and Huang, Ting-Zhu and Deng, Liang-Jian and Lin, Guang and Cao, Zihan and Li, Chao and Zhao, Qibin},
	booktitle={Proc. IEEE Conf. Comput. Vis. Pattern Recogn.},
	pages={12669--12678},
	year={2025}
}

@article{cao2024zero,
	title={Zero-shot semi-supervised learning for pansharpening},
	author={Cao, Qi and Deng, Liang-Jian and Wang, Wu and Hou, Junming and Vivone, Gemine},
	journal={Inf. Fusion},
	volume={101},
	pages={102001},
	year={2024},
	publisher={Elsevier}
}

@inproceedings{hou2024linearly,
	title={Linearly-evolved transformer for pan-sharpening},
	author={Hou, Junming and Cao, Zihan and Zheng, Naishan and Li, Xuan and Chen, Xiaoyu and Liu, Xinyang and Cong, Xiaofeng and Hong, Danfeng and Zhou, Man},
	booktitle={Proc. ACM Int. Conf. Multimed.},
	pages={1486--1494},
	year={2024}
}

@article{li2024real,
	title={Real HSI-MSI-PAN image dataset for the hyperspectral/multi-spectral/panchromatic image fusion and super-resolution fields},
	author={Li, Shuangliang},
	journal={arXiv preprint arXiv:2407.02387},
	year={2024}
}

@article{ma2025deep,
	title={Deep spatial--spectral fusion transformer for Remote Sens. pansharpening},
	author={Ma, Mengting and Jiang, Yizhen and Zhao, Mengjiao and Ma, Xiaowen and Zhang, Wei and Song, Siyang},
	journal={Inf. Fusion},
	volume={118},
	pages={102980},
	year={2025},
	publisher={Elsevier}
}

@article{he2025pan,
	title={Pan-mamba: Effective pan-sharpening with state space model},
	author={He, Xuanhua and Cao, Ke and Zhang, Jie and Yan, Keyu and Wang, Yingying and Li, Rui and Xie, Chengjun and Hong, Danfeng and Zhou, Man},
	journal={Inf. Fusion},
	volume={115},
	pages={102779},
	year={2025},
	publisher={Elsevier}
}

@article{hou2025mambamtl,
	title={MambaMTL: Progressive Mutual-Guided Mamba Multi-Task Learning for Hyperspectral Image Pansharpening and Classification},
	author={Hou, Shaoxiong and Xiao, Song and Qu, Jiahui and Dong, Wenqian},
	journal={IEEE Trans. Geosci. Remote. Sens.},
	year={2025},
	publisher={IEEE}
}

@article{zhang2023p2sharpen,
	title={P2Sharpen: A progressive pansharpening network with deep spectral transformation},
	author={Zhang, Hao and Wang, Hebaixu and Tian, Xin and Ma, Jiayi},
	journal={Inf. Fusion},
	volume={91},
	pages={103--122},
	year={2023},
	publisher={Elsevier}
}

@article{otazu2005introduction,
	title={Introduction of sensor spectral response into image fusion methods. Application to wavelet-based methods},
	author={Otazu, Xavier and Gonz{\'a}lez-Aud{\'\i}cana, Mar{\'\i}a and Fors, Octavi and N{\'u}{\~n}ez, Jorge},
	journal={IEEE Trans. Geosci. Remote. Sens.},
	volume={43},
	number={10},
	pages={2376--2385},
	year={2005},
	publisher={IEEE}
}

@article{lolli2017haze,
	title={Haze correction for contrast-based multispectral pansharpening},
	author={Lolli, Simone and Alparone, Luciano and Garzelli, Andrea and Vivone, Gemine},
	journal={IEEE Geosci. Remote Sens. Lett.},
	volume={14},
	number={12},
	pages={2255--2259},
	year={2017},
	publisher={IEEE}
}

@inproceedings{wang2024deep,
	title={Deep Unfolded Network with Intrinsic Supervision for Pan-Sharpening},
	author={Wang, Hebaixu and Gong, Meiqi and Mei, Xiaoguang and Zhang, Hao and Ma, Jiayi},
	booktitle={Proc. AAAI Conf. Artif. Intell.},
	volume={38},
	number={6},
	pages={5419--5426},
	year={2024}
}

@inproceedings{duan2024content,
	title={Content-adaptive non-local convolution for Remote Sens. pansharpening},
	author={Duan, Yule and Wu, Xiao and Deng, Haoyu and Deng, Liang-Jian},
	booktitle={Proc. IEEE Conf. Comput. Vis. Pattern Recogn.},
	pages={27738--27747},
	year={2024}
}

@article{aiazzi2006mtf,
	title={MTF-tailored multiscale fusion of high-resolution MS and Pan imagery},
	author={Aiazzi, Bruno and Alparone, Luciano and Baronti, Stefano and Garzelli, Andrea and Selva, Massimo},
	journal={Photogramm. Eng. Remote Sens.},
	volume={72},
	number={5},
	pages={591--596},
	year={2006},
	publisher={American Society for Photogrammetry and Remote Sens.}
}

@article{vivone2019robust,
	title={Robust band-dependent spatial-detail approaches for panchromatic sharpening},
	author={Vivone, Gemine},
	journal={IEEE Trans. Geosci. Remote. Sens.},
	volume={57},
	number={9},
	pages={6421--6433},
	year={2019},
	publisher={IEEE}
}

@article{garzelli2014pansharpening,
	title={Pansharpening of multispectral images based on nonlocal parameter optimization},
	author={Garzelli, Andrea},
	journal={IEEE Trans. Geosci. Remote Sens.},
	volume={53},
	number={4},
	pages={2096--2107},
	year={2014},
	publisher={IEEE}
}

@article{wang2023hyperspectral,
	title={Hyperspectral Image Super-Resolution Meets Deep Learning: A Survey and Perspective},
	author={Wang, Xinya and Hu, Qian and Cheng, Yingsong and Ma, Jiayi},
	journal={Proc. Int. Joint Conf. Artif. Intell.},
	volume={10},
	number={8},
	pages={1664--1687},
	year={2023}
}

@article{choi2010new,
	title={A new adaptive component-substitution-based satellite image fusion by using partial replacement},
	author={Choi, Jaewan and Yu, Kiyun and Kim, Yongil},
		journal={IEEE Trans. Geosci. Remote. Sens.},
	volume={49},
	number={1},
	pages={295--309},
	year={2010},
	publisher={IEEE}
}

@article{restaino2016fusion,
	title={Fusion of multispectral and panchromatic images based on morphological operators},
	author={Restaino, Rocco and Vivone, Gemine and Dalla Mura, Mauro and Chanussot, Jocelyn},
	journal={IEEE Trans. Image Process},
	volume={25},
	number={6},
	pages={2882--2895},
	year={2016},
	publisher={IEEE}
}

@article{zhang2022progress,
	title={Progress and Challenges in Intelligent Remote Sens. Satellite Systems},
	author={Zhang, Bing and Wu, Yuanfeng and Zhao, Boya and Chanussot, Jocelyn and Hong, Danfeng and Yao, Jing and Gao, Lianru},
	journal={IEEE J. Sel. Top. Appl. Earth Obs. Remote Sens.},
	volume={15},
	pages={1814--1822},
	year={2022},
	publisher={IEEE}
}

@article{alparone2016spatial,
	title={Spatial Methods for Multispectral Pansharpening: Multiresolution Analysis Demystified},
	author={Alparone, Luciano and Baronti, Stefano and Aiazzi, Bruno and Garzelli, Andrea},
	journal={IEEE Trans. Geosci. Remote. Sens.},
	volume={54},
	number={5},
	pages={2563--2576},
	year={2016},
	publisher={IEEE}
}

@inproceedings{fu2019variational,
	title={A variational pan-sharpening with local gradient constraints},
	author={Fu, Xueyang and Lin, Zihuang and Huang, Yue and Ding, Xinghao},
	booktitle={Proc. IEEE Conf. Comput. Vis. Pattern Recogn.},
	pages={10265--10274},
	year={2019}
}

@article{ruder2016overview,
	title={An Overview of Gradient Descent Optimization Algorithms},
	author={Ruder, Sebastian},
	journal={arXiv preprint arXiv:1609.04747},
	year={2016}
}

@article{masi2016pansharpening,
	title={Pansharpening by convolutional neural networks},
	author={Masi, Giuseppe and Cozzolino, Davide and Verdoliva, Luisa and Scarpa, Giuseppe},
	journal={Remote Sens.},
	volume={8},
	number={7},
	pages={594},
	year={2016},
	publisher={MDPI}
}

@article{huynh2008scope,
	title={Scope of validity of PSNR in image/video quality assessment},
	author={Huynh-Thu, Quan and Ghanbari, Mohammed},
	journal={Electron. Lett.},
	volume={44},
	number={13},
	pages={800--801},
	year={2008},
	publisher={IET}
}

@inproceedings{ma2024rewrite,
	title={Rewrite the stars},
	author={Ma, Xu and Dai, Xiyang and Bai, Yue and Wang, Yizhou and Fu, Yun},
	booktitle={Proc. IEEE Conf. Comput. Vis. Pattern Recogn.},
	pages={5694--5703},
	year={2024}
}

@article{efron2011tweedie,
	title={Tweedie’s formula and selection bias},
	author={Efron, Bradley},
	journal={J. Am. Stat. Assoc.},
	volume={106},
	number={496},
	pages={1602--1614},
	year={2011},
	publisher={Taylor \& Francis}
}

@inproceedings{chung2023diffusion,
	title={Diffusion posterior sampling for general noisy inverse problems},
	author={Chung, Hyungjin and Kim, Jeongsol and McCann, Michael T and Klasky, Marc L and Ye, Jong Chul},
	booktitle={Int. Conf. Learn. Represent.},
	year={2023}
}

@article{li2025back,
	title={Back to basics: Let denoising generative models denoise},
	author={Li, Tianhong and He, Kaiming},
	journal={arXiv preprint arXiv:2511.13720},
	year={2025}
}

@article{wald1997fusion,
	title={Fusion of Satellite Images of Different Spatial Resolutions: Assessing the Quality of Resulting Images},
	author={Wald, Lucien and Ranchin, Thierry and Mangolini, Marc},
	journal={Photogramm. Eng. Remote Sens.},
	volume={63},
	number={6},
	pages={691--699},
	year={1997}
}

@article{han2022survey,
	title={A survey on vision transformer},
	author={Han, Kai and Wang, Yunhe and Chen, Hanting and Chen, Xinghao and Guo, Jianyuan and Liu, Zhenhua and Tang, Yehui and Xiao, An and Xu, Chunjing and Xu, Yixing and others},
	journal={IEEE Trans. Pattern Anal. Mach. Intell.},
	volume={45},
	number={1},
	pages={87--110},
	year={2022},
	publisher={IEEE}
}

@inproceedings{wang2025residual,
	title={Residual diffusion bridge model for image restoration},
	author={Wang, Hebaixu and Zhang, Jing and Chen, Haoyang and Guo, Haonan and Wang, Di and Ma, Jiayi and Du, Bo},
	booktitle={Proc. IEEE Conf. Comput. Vis. Pattern Recog.},
	pages={8375--8386},
	year={2026}
}

@article{he2024pan,
	title={Pan-mamba: Effective pan-sharpening with state space model},
	author={He, Xuanhua and Cao, Ke and Zhang, Jie and Yan, Keyu and Wang, Yingying and Li, Rui and Xie, Chengjun and Hong, Danfeng and Zhou, Man},
	journal={Inf. Fusion},
	pages={102779},
	year={2024},
	publisher={Elsevier}
}

@article{hu2020pan,
	title={Pan-sharpening via multiscale dynamic convolutional neural network},
	author={Hu, Jianwen and Hu, Pei and Kang, Xudong and Zhang, Hui and Fan, Shaosheng},
	journal={IEEE Trans. Geosci. Remote. Sens.},
	volume={59},
	number={3},
	pages={2231--2244},
	year={2020},
	publisher={IEEE}
}

@inproceedings{zhou2022panformer,
	title={PanFormer: A transformer based model for pan-sharpening},
	author={Zhou, Huanyu and Liu, Qingjie and Wang, Yunhong},
	booktitle={Int. Conf. Multimedia and Expo.},
	pages={1--6},
	year={2022},
	organization={IEEE}
}

@article{cao2024diffusion,
	title={Diffusion model with disentangled modulations for sharpening multispectral and hyperspectral images},
	author={Cao, Zihan and Cao, Shiqi and Deng, Liang-Jian and Wu, Xiao and Hou, Junming and Vivone, Gemine},
	journal={Inf. Fusion},
	volume={104},
	pages={102158},
	year={2024},
	publisher={Elsevier}
}

@article{zhang2023panchromatic,
	title={Panchromatic and multispectral image fusion for Remote Sens. and earth observation: Concepts, taxonomy, literature review, evaluation methodologies and challenges ahead},
	author={Zhang, Kai and Zhang, Feng and Wan, Wenbo and Yu, Hui and Sun, Jiande and Del Ser, Javier and Elyan, Eyad and Hussain, Amir},
	journal={Inf. Fusion},
	volume={93},
	pages={227--242},
	year={2023},
	publisher={Elsevier}
}

@article{zhou2024general,
	title={A general spatial-frequency learning framework for multimodal image fusion},
	author={Zhou, Man and Huang, Jie and Yan, Keyu and Hong, Danfeng and Jia, Xiuping and Chanussot, Jocelyn and Li, Chongyi},
	journal={IEEE Trans. Pattern Anal. Mach. Intell.},
	year={2024},
	publisher={IEEE}
}

@book{wald2002data,
	title={Data fusion: definitions and architectures: fusion of images of different spatial resolutions},
	author={Wald, Lucien},
	year={2002},
	publisher={Presses des MINES}
}

@article{alparone2007comparison,
	title={Comparison of pansharpening algorithms: Outcome of the 2006 GRS-S data-fusion contest},
	author={Alparone, Luciano and Wald, Lucien and Chanussot, Jocelyn and Thomas, Claire and Gamba, Paolo and Bruce, Lori Mann},
	journal={IEEE Trans. Geosci. Remote. Sens.},
	volume={45},
	number={10},
	pages={3012--3021},
	year={2007},
	publisher={IEEE}
}

@article{wang2004image,
	title={Image quality assessment: from error visibility to structural similarity},
	author={Wang, Zhou and Bovik, Alan C and Sheikh, Hamid R and Simoncelli, Eero P},
	journal={IEEE Trans. Image Process},
	volume={13},
	number={4},
	pages={600--612},
	year={2004},
	publisher={IEEE}
}

@article{alparone2008multispectral,
	title={Multispectral and panchromatic data fusion assessment without reference},
	author={Alparone, Luciano and Aiazzi, Bruno and Baronti, Stefano and Garzelli, Andrea and Nencini, Filippo and Selva, Massimo},
	journal={Photogramm. Eng. Remote Sens.},
	volume={74},
	number={2},
	pages={193--200},
	year={2008},
	publisher={American Society for Photogrammetry and Remote Sens.}
}

@article{xie2021segformer,
	title={SegFormer: Simple and efficient design for semantic segmentation with transformers},
	author={Xie, Enze and Wang, Wenhai and Yu, Zhiding and Anandkumar, Anima and Alvarez, Jose M and Luo, Ping},
	journal={Adv. Neural Inform. Process. Syst.},
	volume={34},
	pages={12077--12090},
	year={2021}
}

@article{tucker1979red,
	title={Red and photographic infrared linear combinations for monitoring vegetation},
	author={Tucker, Compton J},
	journal={Remote Sens. Environ.},
	volume={8},
	number={2},
	pages={127--150},
	year={1979},
	publisher={Elsevier}
}

@article{xu2024infinite,
	title={Infinite-dimensional feature interaction},
	author={Xu, Chenhui and Yu, Fuxun and Li, Maoliang and Zheng, Zihao and Xu, Zirui and Xiong, Jinjun and Chen, Xiang},
	journal={Adv. Neural Inform. Process. Syst.},
	volume={37},
	pages={92381--92400},
	year={2024}
}

@article{zhang2023stp,
	title={STP-SOM: Scale-transfer learning for pansharpening via estimating spectral observation model},
	author={Zhang, Hao and Ma, Jiayi},
	journal={Int. J. Comput. Vision},
	volume={131},
	number={12},
	pages={3226--3251},
	year={2023},
	publisher={Springer}
}

@article{pereira2026multi,
	title={Multi-Head Attention Residual Unfolded Network for Model-Based Pansharpening},
	author={Pereira-S{\'a}nchez, Ivan and Sans, Eloi and Navarro, Julia and Duran, Joan},
	journal={Int. J. Comput. Vision},
	volume={134},
	number={2},
	pages={55},
	year={2026},
	publisher={Springer}
}

@inproceedings{yang2022memory,
	title={Memory-augmented Deep Conditional Unfolding Network for Pan-sharpening},
	author={Yang, Gang and Zhou, Man and Yan, Keyu and Liu, Aiping and Fu, Xueyang and Wang, Fan},
	booktitle={Proc. IEEE Conf. Comput. Vis. Pattern Recogn.},
	pages={1788--1797},
	year={2022}
}
\newpage
\clearpage

\begin{appendices} 

\setcounter{equation}{0}
\renewcommand{\theequation}{\thesection.\arabic{equation}} 
\renewcommand{\thesection}{A}
\section{}\label{sec:suppl_A}
\subsection{Full Row Rank of $\mathcal{T}$}\label{sec:suppl_A1}
\begin{proposition}
The mapping tensor:
\begin{equation}
	\mathbb{T}=[\widetilde{\Delta}_1||\Delta_2]\in\mathbb{R}^{B\times C},
\end{equation}
constructed through the Cayley parameterization is guaranteed to have full row rank, i.e.,
\begin{equation}
	rank(\mathbb{T}) = B
\end{equation}
\end{proposition}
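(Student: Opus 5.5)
The plan is to show that the leading $B\times B$ block $\widetilde{\Delta}_1$ of $\mathbb{T}$ is orthogonal, hence invertible. Full row rank then follows at once, because appending the columns of $\Delta_2$ cannot lower the rank.

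\textbf{Step 1: $I_d+S$ is invertible.} Set $S=\Delta_1-\Delta_1^T$. Then $S^T=-S$, so $S$ is real skew-symmetric. For any real $v$ we have $v^TSv=0$, which gives
\[
v^T(I_d+S)v=\|v\|_2^2 .
\]
Hence $(I_d+S)v=0$ forces $v=0$. Equivalently, the eigenvalues of $S$ are purely imaginary, so $-1$ is never an eigenvalue. The same argument applies to $I_d-S$. This shows the Cayley formula $\widetilde{\Delta}_1=(I_d+S)^{-1}(I_d-S)$ is well defined for every $\Delta_1$ the experts can output.

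\textbf{Step 2: $\widetilde{\Delta}_1$ is orthogonal.} The matrices $I_d+S$ and $I_d-S$ are both polynomials in $S$, so they commute, and $(I_d+S)^{-1}$ also commutes with $I_d-S$. Using $S^T=-S$, we get $\widetilde{\Delta}_1^T=(I_d+S)(I_d-S)^{-1}$. Then
\[
\widetilde{\Delta}_1^T\widetilde{\Delta}_1=(I_d+S)(I_d-S)^{-1}(I_d+S)^{-1}(I_d-S)=I_d ,
\]
by reordering the commuting factors. Since all entries are real, this is the identity $\widetilde{\Delta}_1^{H}\widetilde{\Delta}_1=I$ claimed in the main text. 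It follows that $\det\widetilde{\Delta}_1=\pm1\neq0$ and $\operatorname{rank}(\widetilde{\Delta}_1)=B$.

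\textbf{Step 3: conclude for $\mathbb{T}$.} The matrix $\mathbb{T}=[\widetilde{\Delta}_1\|\Delta_2]$ has $B$ rows, so $\operatorname{rank}(\mathbb{T})\le B$. Its column space contains the columns of $\widetilde{\Delta}_1$, which already span $\mathbb{R}^B$, so $\operatorname{rank}(\mathbb{T})\ge B$. Therefore $\operatorname{rank}(\mathbb{T})=B$ for arbitrary $\Delta_2$.

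As a corollary, $\mathbb{T}\mathbb{T}^T=I_d+\Delta_2\Delta_2^T$ is positive definite. The pseudo-inverse is then $\mathbb{T}^{\ast}=\mathbb{T}^T(\mathbb{T}\mathbb{T}^T)^{-1}$, and it satisfies $\mathbb{T}\mathbb{T}^{\ast}=I_d$, as used in the main text.

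The main point of care is Step 1: the argument must show invertibility of $I_d+S$ for every $\Delta_1$, not only generic ones. The real quadratic-form identity handles this directly, without any spectral argument over $\mathbb{C}$. Everything after that is routine.
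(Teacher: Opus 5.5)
Your proof is correct and follows the same route as the paper's: the Cayley block $\widetilde{\Delta}_1$ is shown to be orthogonal because $I_d\pm S$ commute, hence nonsingular, and then $B\le\operatorname{rank}(\mathbb{T})\le B$. Your Step 1, which proves $I_d\pm S$ invertible via $v^TSv=0$, justifies the existence of $(I_d+A)^{-1}$ and $(I_d-A)^{-1}$, which the paper uses without proof; your pseudo-inverse corollary $\mathbb{T}\mathbb{T}^{\ast}=I_d$ is also correct.
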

\noindent\emph{Proof.} According to the Cayley transformation, the first block of the mapping tensor is defined as:
\begin{equation}
	\widetilde{\Delta}_1 = (I_{d} + (\Delta_1-\Delta_1^T))^{-1}(I_{d} - (\Delta_1-\Delta_1^T)), 
\end{equation}
The Cayley transform maps the skew-Hermitian matrix $A = \Delta_1-\Delta_1^T$, satisfying $A^T = -A$ to a unitary matrix. Therefore, we havr:
\begin{align}
	\!\!\!\!\widetilde{\Delta}_1^T\widetilde{\Delta}_1\!&=\!(I_d\!+\!A)(I_d\!-\!A)^{\!-\!1}(I_{d} \!+\! A)^{\!-\!1}(I_{d} \!-\! A) \\
	&=(I_d\!+\!A)(I_d\!+\!A)^{\!-\!1}(I_{d} \!-\! A)^{\!-\!1}(I_{d} \!-\! A)\\
	&=I_d,
\end{align}
where we utilize the property that:
\begin{equation}
	\!\!(I_d\!+\!A)(I_d\!-\!A)\! =\! I_d \!-\! A^2\!= \!(I_d\! -\! A)(I_d \!+ \!A),
\end{equation}
This indicates that all singular values of $\widetilde{\Delta}_1$ are equal to one. Hence,
\begin{equation}
	\sigma_{\min}(\widetilde{\Delta}_1) = 1 \ge 0,
\end{equation}
which implies that $\widetilde{\Delta}_1$ is nonsingular and:
\begin{equation}
	rank(\widetilde{\Delta}_1) = B.
\end{equation}
The final mapping tensor is obtained by column-wise concatenation:
\begin{equation} 
	\mathbb{T}=[\widetilde{\Delta}_1||\Delta_2]\in\mathbb{R}^{B\times C}. 
\end{equation}
Since $\widetilde{\Delta}_1$ is a submatrix of $\mathbb{T}$ containing $B$ linearly independent columns, the column space of $\mathbb{T}$ necessarily contains a $B$-dimensional independent subspace. Therefore,
\begin{equation}
	rank(\mathbb{T}) \ge rank(\widetilde{\Delta}_1) = B.
\end{equation}
Meanwhile, $\mathbb{T}\in\mathbb{R}^{B\times C}$ has at most $B$ independent rows, giving
\begin{equation}
	rank(\mathbb{T}) \le  B.
\end{equation}
Combining the above inequalities yields:
\begin{equation}
	rank(\mathbb{T}) =  B.
\end{equation}
Therefore, $\mathbb{T}$ is full row rank, ensuring a well-conditioned spectral-to-latent projection without rank deficiency. 
 
\setcounter{equation}{0}
\renewcommand{\theequation}{\thesection.\arabic{equation}} 
\renewcommand{\thesection}{B}
\section{}\label{sec:suppl_B}
\subsection{Latent Diffusion Bridge Model}\label{sec:suppl_B1}
\noindent{The generalized diffusion bridge~\cite{wang2025residual} can be formulated as:}
\begin{equation}\label{eq19}
	\begin{aligned}
		&\!\!\!\!d\mathbf{z}_t\! =\!\theta_t\coth(\overline{\theta}_{t:T})(\mathbf{z}_T\!-\!\mathbf{z}_t)dt \!+\! \sqrt{2\lambda \theta_t} dw_t,
	\end{aligned}
\end{equation}
where $\overline{\theta}_{s:t} = \int_s^t \theta_\tau d\tau$ and $\omega_t$ is the standard Wiener process. $\theta_t$ denotes the noise schedule and $\lambda$ is the stationary variance. The stochastic differential equation (SDE) describes the probability transition from initial state $z_0$ to terminate state $z_T$. To convert Eq.~(\ref{eq19}) into an analytical formula, we substitute $y_t = \mathbf{z}_t - \mathbf{z}_T$ and obtain:
\begin{equation}
	dy_t = -\theta_t\coth(\overline{\theta}_{t:T})y_tdt + \sqrt{2\lambda \theta_t} dw_t,
\end{equation}
Then we use $\Psi_t\! = \! \exp(\int_0^t \!\theta_s\!\coth(\overline{\theta}_{s:T})ds)$ as the integrating factor and expand $\Psi_ty_t$ by It$\hat{o}$ formula:
\begin{equation}
	d(\Psi_ty_t) = \Psi_tdy_t + y_t d\Psi_t + d\Psi_tdy_t.
\end{equation}
$\!\Psi_t\!$ is deterministic and satisfies $d\!\Psi \!=\! \Psi\!\theta_t\!\coth(\overline{\theta}_{t:T})$. $d\Psi dy_t$ produces $(dt)^2, dtdw_t$, which are the higher order infinitesimal of $dt$ and can be omitted. Thus, we obtain:
\begin{align}
		&d(\Psi_ty_t)  =\! \Psi_t(-\theta_t\coth(\overline{\theta}_{t:T})y_tdt \!+\!  \sqrt{2\lambda \theta_t} dw_t) \nonumber \\ & + \Psi_t \theta_t\coth(\overline{\theta}_{t:T})y_tdt= \Psi_t \sqrt{2\lambda \theta_t} dw_t, \label{subeq22} 
\end{align}
Furthermore, we integrate both sides of Eq.~(\ref{subeq22}):
\begin{equation}
	\Psi_t y_t = y_0 + \int_0^t\Psi_s\sqrt{2\lambda\theta_s}dw_s.
\end{equation}
Consequently, we have:
\begin{align}
	\mathbf{z}_t &= \mathbf{z}_T  + (\mathbf{z}_0-\mathbf{z}_T)\Psi_t^{e^{-\int_0^t \theta_s\coth(\overline{\theta}_{s:T})ds}} \nonumber\\&+ \int_0^t \sqrt{2\lambda\theta_s} e^{-\int_s^t \theta_z\coth(\overline{\theta}_{z:T})dz} dw_s. \label{eq24}
\end{align}
Considering the internal integral $\int_0^t\!\theta_s \! \coth(\overline{\theta}_{s:T})\!ds$, we set $u=\overline{\theta}_{s:T}$ satisfying $du = -\theta_sds$: 
\begin{align}
	&\int_0^t\theta_s \coth(\overline{\theta}_{s:T})ds = -\int_{\overline{\theta}_{0:T}}^{\overline{\theta}_{t:T}} \coth(u) du \nonumber\\&= -\ln \vert \sinh(u) \vert \bigg\vert_{\overline{\theta}_{0:T}}^{\overline{\theta}_{t:T}} = \ln \vert \frac{\sinh(\overline{\theta}_{0:T})}{\sinh(\overline{\theta}_{t:T})} \vert,
\end{align} 
Therefore, the analytical expression of $\Psi_t$ is:
\begin{equation}
	\Psi_{t} = \frac{\sinh(\overline{\theta}_{0:T})}{\sinh(\overline{\theta}_{t:T})}.
\end{equation}
Finally, we can compute the closed-form of $\mathbf{z}_t$ in Eq.~(\ref{eq24}):
\begin{equation}\label{eq27}
	\mathbf{z}_t = \mathbf{z}_T + (\mathbf{z}_0 - \mathbf{z}_T)\frac{\sinh(\overline{\theta}_{t:T})}{\sinh(\overline{\theta}_{0:T})}+\int_0^t \sqrt{2\lambda\theta_s} \frac{\sinh(\overline{\theta}_{t:T})}{\sinh(\overline{\theta}_{s:T})} dw_s.
\end{equation}
Eq.~(\ref{eq27}) preserves the properties of diffusion bridge models, whose initial state $\mathbf{z}_0$ and final state $\mathbf{z}_t$ are determined. The formulation of variance can be further simplified as follows:
\begin{align} 
	Var[\mathbf{z}_t] &= \int_0^t 2\lambda\theta_s (\frac{\sinh(\overline{\theta}_{t:T})}{\sinh(\overline{\theta}_{s:T})})^2 ds \nonumber\\
	&= 2\lambda\sinh^2(\overline{\theta}_{t:T})\int_{\overline{\theta}_{0:T}}^{\overline{\theta}_{t:T}}-\frac{du}{\sinh^2(u)}\nonumber\\
	&= 2\lambda\sinh^2(\overline{\theta}_{t:T}) \coth(u)\bigg\vert_{\overline{\theta}_{0:T}}^{\overline{\theta}_{t:T}}\nonumber\\
	& = 2\lambda\sinh^2(\overline{\theta}_{t:T})(\coth(\overline{\theta}_{t:T}) - \coth(\overline{\theta}_{0:T}))\nonumber\\
	& = 2\lambda\sinh^2(\overline{\theta}_{t:T})(\frac{\sinh(\overline{\theta}_{0:T} - \overline{\theta}_{t:T})}{\sinh(\overline{\theta}_{0:T})\sinh(\overline{\theta}_{t:T})})\nonumber\\
	& =  2\lambda\frac{\sinh(\overline{\theta}_{0:t})\sinh(\overline{\theta}_{t:T})}{\sinh(\overline{\theta}_{0:T})} 
\end{align}
The expectation and variance of Eq.~(\ref{eq27}) are summarized as follows:
\begin{equation}
	\!\mathbb{E}[\mathbf{z}_t] = \mathbf{z}_T \!+\! (\mathbf{z}_0\!-\!\mathbf{z}_T)\Theta_t, \quad \Theta_t \! \eqcolon \! \frac{\sinh(\overline{\theta}_{t:T})}{\sinh(\overline{\theta}_{0:T})}  \label{eq28}	
\end{equation}
\begin{equation}
	\!Var[\mathbf{z}_t]  = 2\lambda\frac{\sinh(\overline{\theta}_{0:t})\sinh(\overline{\theta}_{t:T})}{\sinh(\overline{\theta}_{0:T})} \coloneqq \Sigma_t^2. \label{eq29}\\
\end{equation}
A common forward process in this framework can be determined as follows:
\begin{equation}
	\mathbf{z}_{t-1} = \mathbf{z}_T + (\mathbf{z}_0 - \mathbf{z}_T)\Theta_{t-1} + \Sigma_{t-1}\epsilon_{t-1},
\end{equation}
\begin{equation}
	\mathbf{z}_{t} = \mathbf{z}_T + (\mathbf{z}_0 - \mathbf{z}_T)\Theta_{t} + \Sigma_{t}\epsilon_{t}.
\end{equation}
We assume the reverse process follows a Gaussian distribution:
\begin{equation}\label{eq65}
	\begin{aligned}
		& \mathbf{z}_{t-1} = \kappa_t \mathbf{z}_t + \eta_t \mathbf{z}_T + \gamma_{t} \mathbf{z}_0 + \dot{\sigma_t}\epsilon_{t}\\
		&= \kappa_t ( \mathbf{z}_T + (\mathbf{z}_0 - \mathbf{z}_T)\Theta_{t} + \Sigma_{t}\epsilon_{t}) + \eta_t \mathbf{z}_T + \gamma_t \mathbf{z}_0 + \dot{\sigma_t}\epsilon_{t}\\
		&=(\kappa_t + \eta_t - \kappa_t\Theta_{t})\mathbf{z}_T + (\kappa_t\Theta_{t}+\gamma_{t})\mathbf{z}_0 + (\kappa_t^2\Sigma_{t}^2 + \dot{\sigma_t}^2)^{\frac{1}{2}}\epsilon_{t},
	\end{aligned}
\end{equation}
we have:
\begin{align}
	\kappa_t + \eta_t - \kappa_t\Theta_{t}& = 1 - \Theta_{t-1},\\
	\kappa_t\Theta_{t}+\gamma_{t} &= \Theta_{t-1},\\
	\Sigma_{t-1}^2 &= \kappa_t^2\Sigma_{t}^2 + \dot{\sigma_t}^2.
\end{align}
By setting $\dot{\sigma_t} = 0$:
\begin{equation}
	\begin{aligned}
		\kappa_t = {\frac{\Sigma_{t-1}}{\Sigma_{t}}}&,\gamma_t=\Theta_{t-1} - \Theta_{t} {\frac{\Sigma_{t-1}}{\Sigma_{t}}}\\
		\eta_t & = 1 - \Theta_{t-1} - (1- \Theta_{t}){\frac{\Sigma_{t-1}}{\Sigma_{t}}},
	\end{aligned}
\end{equation}
substituting into Eq.~(\ref{eq65}):
\begin{align}
	\!\! &\mathbf{z}_{t\!-\!1} \!=\! \mathbf{z}_T \!+\! {\frac{\Sigma_{t\!-\!1}}{\Sigma_{t}}}(\mathbf{z}_t \!-\!\mathbf{z}_T) \!+\! (\Theta_{t\!-\!1} \!-\! \Theta_{t} {\frac{\Sigma_{t\!-\!1}}{\Sigma_{t}}})(\widehat{\mathbf{z}}_0^{t} \!-\!\mathbf{z}_T), \nonumber \\
	& \!=\! \mathbf{z}_T \!+\! \frac{\Theta_{t\!-\!1}}{\Theta_t}(\mathbf{z}_t \!-\! \mathbf{z}_T) \!-\! ({\frac{\Theta_{t\!-\!1}}{\Theta_{t}}\Sigma_{t} \!-\! \Sigma_{t\!-\!1}}) \widehat{\epsilon}_t.
\end{align}  
This concludes the derivations in Sec.III.

\setcounter{equation}{0}
\renewcommand{\theequation}{\thesection.\arabic{equation}} 
\renewcommand{\thesection}{C}  
\section{}\label{sec:suppl_C}
\subsection{Bridge Posterior Sampling}\label{sec:suppl_C1}
\begin{proposition}\label{Appendix:Sec_C-proposition2}
	\textit{Jensen gap upper bound}~\cite{gao2017bounds}. Define the absolute centered moment as $m_p := \sqrt[p]{\mathbb{E}[\|X-\mu \|^p]}$, and the mean as $\mu = \mathbb{E}[X]$. Assume that for $\alpha > 0$, there exists a positive number $K$ such that for any $x \in \mathbb{R}$,
	$|f(c) - f(\mu)| \leq K|x - \mu|^\alpha$. Then:
	\begin{equation}\label{Appendix:Sec_C-eq8}
		\begin{aligned}
			\mathbb{E}[|f(X) - & f(\mathbb{E}[X])|] \leq \int |f(x) - f(\mu)| dq(X)\\
			&\leq K \int |x - \mu|^\alpha \, dq(X) \leq Mm_\alpha^\alpha, 
		\end{aligned}
	\end{equation}
	where $M$ is an upper bound estimator constant that can be taken as $K$ or other constants related to the function $f$ and the distribution $q(X)$.
\end{proposition}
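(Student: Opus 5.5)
The plan is to establish the three inequalities in the chain from left to right. Each step is elementary. The only real care needed is in reading the hypothesis correctly: the typo $|f(c)-f(\mu)|$ should be $|f(x)-f(\mu)|$, and the condition is a Hölder-type continuity of $f$ at the single point $\mu=\mathbb{E}[X]$. Throughout, $X\sim q$ is assumed to have a finite $\alpha$-th absolute centered moment, so that $m_\alpha$ is finite and all integrals below are well defined. If this moment were infinite, the right-hand side would be $+\infty$ and the statement would be trivial.

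\textbf{First inequality.} Since $\mu=\mathbb{E}[X]$ is a deterministic constant, $f(\mathbb{E}[X])=f(\mu)$ is deterministic as well. Therefore $\mathbb{E}[|f(X)-f(\mathbb{E}[X])|]$ is by definition $\int |f(x)-f(\mu)|\,dq(x)$, so the first inequality is in fact an equality. If one instead reads the left side as the Jensen gap $|\mathbb{E}[f(X)]-f(\mathbb{E}[X])|$, which is what the paper uses in the bound for $\mathcal{J}(\sigma,M)$, I would first apply $|\mathbb{E}[Y]|\le\mathbb{E}[|Y|]$ to $Y=f(X)-f(\mu)$. That is the triangle inequality for integrals, and it reduces this reading to the one above.

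\textbf{Second inequality.} Apply the pointwise hypothesis $|f(x)-f(\mu)|\le K|x-\mu|^\alpha$ inside the integral and use monotonicity of the integral against the nonnegative measure $q$. This gives $\int |f(x)-f(\mu)|\,dq\le K\int|x-\mu|^\alpha\,dq$.

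\textbf{Third inequality.} By definition $m_\alpha^\alpha=\mathbb{E}[\|X-\mu\|^\alpha]=\int|x-\mu|^\alpha\,dq$. Hence $K\int|x-\mu|^\alpha\,dq=K m_\alpha^\alpha$, and the bound holds with $M=K$. Any other constant $M\ge K$ works trivially, which matches the statement's allowance that $M$ ``can be taken as $K$ or other constants.''

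\textbf{Main obstacle.} The only genuine subtlety is measure-theoretic and notational rather than analytic. One must justify exchanging $f(\mathbb{E}[X])$ for the constant $f(\mu)$ and ensure integrability. I would handle this by stating the finite-moment assumption explicitly at the outset.

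For the subsequent use in bridge posterior sampling, I would instantiate $f(\cdot)=\exp\!\big(-\|\mathbf{z}_T-(\cdot\,\mathbb{T}^{\ast}\downarrow)\mathbb{T}\|^2/2\sigma^2\big)/\sqrt{2\pi\sigma^2}$ and take $\alpha=1$. The constant $K$ is then the Lipschitz constant of this Gaussian. It is bounded by the maximal derivative of $e^{-u^2/2\sigma^2}$, namely $e^{-1/2}/\sigma$ attained at $u=\sigma$, times the operator norm of the linear map $\mathbf{z}\mapsto(\mathbf{z}\mathbb{T}^{\ast}\downarrow)\mathbb{T}$. This produces a bound of the form of $\mathcal{J}(\sigma,M)$ stated in the main text.
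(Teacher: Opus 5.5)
Your argument is correct and follows the same route as the paper. The paper gives no separate proof: it cites Gao et al.\ and lets the displayed chain stand, justified exactly as you do by $|\mathbb{E}[Y]|\le\mathbb{E}[|Y|]$ (an equality under the literal reading), the pointwise H\"older bound at $\mu$, and the definition of $m_\alpha$ with $M=K$ (or any $M\ge K$).

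One remark on your closing aside only. The constant you compute there, $e^{-1/2}/\sigma$ times the normalization $1/\sqrt{2\pi\sigma^2}$ that you dropped, is the correct Lipschitz constant of the Gaussian. It differs from the paper's $e^{-1/(2\sigma^2)}/\sqrt{2\pi\sigma^2}$ and blows up like $\sigma^{-2}$ as $\sigma\to 0$, so it does not reproduce the main-text bound or its claim that the gap vanishes in that limit.
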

\begin{lemma}\label{Appendix:Sec_C_lemma2}
	Let $\phi(\cdot)$ be a univariate Gaussian density function with mean $\mu$ and variance $\sigma^2$, there exists a Lipschitz constant $L$ such that:
	\begin{equation}\label{Appendix:Sec_C-eq9}
		|\phi(x) - \phi(y)| \leq L|x - y|,
	\end{equation}
	where $L = \frac{1}{\sqrt{2\pi\sigma^2}} \exp\left(-\frac{1}{2\sigma^2}\right)$.
\end{lemma}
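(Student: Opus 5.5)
The plan is to argue through the mean value theorem. Write $\phi(x)=\frac{1}{\sqrt{2\pi\sigma^2}}\exp\!\big(-\frac{(x-\mu)^2}{2\sigma^2}\big)$. Since $\phi$ is $C^1$ on $\mathbb{R}$, for any $x,y$ there is $\xi$ between them with $\phi(x)-\phi(y)=\phi'(\xi)(x-y)$. Hence $|\phi(x)-\phi(y)|\le \sup_{\xi}|\phi'(\xi)|\,|x-y|$, and the whole task is to compute $\sup|\phi'|$ and compare it with the constant $L$ in the statement.

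For the supremum, differentiate to get $\phi'(\xi)=-\frac{\xi-\mu}{\sigma^2}\phi(\xi)$. Substitute $u=|\xi-\mu|\ge 0$ and maximise $g(u)=\frac{u}{\sigma^2\sqrt{2\pi\sigma^2}}e^{-u^2/(2\sigma^2)}$. Setting $g'(u)=0$ gives $1-u^2/\sigma^2=0$, so the maximum is at $u=\sigma$. Since $g(0)=0$ and $g(u)\to 0$ as $u\to\infty$, this critical point is the global maximum. The sharp Lipschitz constant is therefore
\begin{equation}
L^\ast=\sup_\xi|\phi'(\xi)|=\frac{1}{\sqrt{2\pi}\,\sigma^2}e^{-1/2}.
\end{equation}

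Comparing $L^\ast$ with the stated $L=\frac{1}{\sqrt{2\pi\sigma^2}}e^{-1/(2\sigma^2)}$ is the step I expect to be the real obstacle. The inequality $L\ge L^\ast$ is equivalent to $\sigma e^{-1/(2\sigma^2)}\ge e^{-1/2}$, that is, to $h(\sigma):=\ln\sigma-\frac{1}{2\sigma^2}+\frac12\ge 0$. We have $h(1)=0$ and $h'(\sigma)=\frac1\sigma+\frac{1}{\sigma^3}>0$, so $h(\sigma)\ge 0$ exactly when $\sigma\ge 1$. So for $\sigma\ge 1$ the stated $L$ is a valid, though not tight, Lipschitz constant and the lemma follows from the first step. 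For $\sigma<1$ the stated constant is strictly smaller than $L^\ast$. The inequality then fails for $x,y$ close to $\mu\pm\sigma$, because $|\phi(x)-\phi(y)|/|x-y|\to|\phi'(\mu\pm\sigma)|=L^\ast$ there. The same constant appears in the DPS analysis that this section follows, so I would not try to rescue it for small $\sigma$.

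I would present the proof with $L^\ast$ as the Lipschitz constant, valid for every $\sigma>0$, and add the remark that it may be replaced by the stated $L$ whenever $\sigma\ge 1$. This is enough for the Jensen-gap bound that follows: Proposition~\ref{Appendix:Sec_C-proposition2} only needs some finite $K$ with $\alpha=1$, and either constant supplies one. The only change downstream is that the prefactor in $\mathcal{J}(\sigma,M)$ becomes $\frac{e^{-1/2}}{\sqrt{2\pi}\sigma^2}$ when $\sigma<1$.
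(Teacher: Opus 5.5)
Your proposal is correct, and it follows the same route as the paper: the mean value theorem, then maximising $|\phi'(x)|=\frac{|x-\mu|}{\sigma^2}\phi(x)$, whose maximisers are $x=\mu\pm\sigma$. Where you and the paper part ways is the final evaluation, and your value is the right one.

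The paper asserts $\|\phi'\|_\infty=\frac{1}{\sqrt{2\pi\sigma^2}}e^{-1/(2\sigma^2)}$. That number is $\phi(\mu\pm1)$, not $|\phi'(\mu\pm\sigma)|$. It is what you get by dropping the factor $|x-\mu|/\sigma^2$ and evaluating at $|x-\mu|=1$ instead of $\sigma$. The true supremum is your $L^\ast=\frac{e^{-1/2}}{\sqrt{2\pi}\,\sigma^2}$, and the two constants agree only at $\sigma=1$. Your comparison through $h(\sigma)=\ln\sigma-\frac{1}{2\sigma^2}+\frac12$ is correct, so:
\begin{itemize}
\item for $\sigma<1$ the stated inequality is false; for example, $\sigma=\tfrac12$ gives $L\approx0.108$ against $L^\ast\approx0.968$, and as $\sigma\to0$ we have $L\to0$ while $L^\ast\to\infty$;
\item at $\sigma=1$ the stated constant is exactly tight, a small correction to your phrase ``valid, though not tight'';
\item for $\sigma>1$ the stated constant is valid but not sharp.
\end{itemize}
The paper's claim that its $L$ \emph{equals} $\|\phi'\|_\infty$ therefore holds only at $\sigma=1$. (The paper's formula for $\phi''$ also has the wrong sign, which does not move the critical points.) The statement cannot be proved verbatim, so proving it with $L^\ast$ and adding the remark for $\sigma\ge1$ is the right thing to do.

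One caveat concerns your downstream remark: more changes than the prefactor. With $L^\ast\propto\sigma^{-2}$, the bound on $\mathcal{J}(\sigma,M)$ diverges as $\sigma\to0$ for fixed $M$. The paper's conclusion that the Jensen-gap bound vanishes in the small-noise limit is therefore not supported by the corrected lemma. It survives only if $M=o(\sigma^2)$, or in the $\sigma\to\infty$ regime. The multivariate constant in the next lemma, which is $d$ times the univariate one, inherits the same error.
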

\textit{Proof.} As $\phi^{\prime}$ is continuous and bounded, we use the mean value theorem to get:
\begin{equation}\label{Appendix:Sec_C-eq10}
	\forall (x,y)\in \mathbb{R}^2, |\phi(x) - \phi(y)| \leq \|\phi^{\prime}\|_\infty |x - y|, 
\end{equation}
Since $L$ is the Lipschitz constant, we have that $L \le \|\phi^{\prime}\|_\infty$. Taking the limit $y \rightarrow x$ gives $\|\phi^{\prime}\| \le L$, and thus $\|\phi^{\prime}\|_\infty \le L$. Hence:
\begin{equation}\label{Appendix:Sec_C-eq11}
	L = \|\phi^{\prime}\|_\infty  = \| -\frac{x - \mu}{\sigma^2} \phi(x) \|_\infty.
\end{equation}
Since the derivative of $\phi^{\prime}$ is given as:
\begin{equation}\label{Appendix:Sec_C-eq12}
	\phi^{\prime\prime}=\sigma^{-2}(1 - \sigma^{-2}(x-\mu)^2) \phi(x),
\end{equation}
Setting $\phi^{\prime\prime}=0$, we get $\sigma^{-2}(1 - \sigma^{-2}(x - \mu)^2)=0$, which gives $x = \mu\pm\sigma$, and we have:
\begin{equation}\label{Appendix:Sec_C-eq13}
	L = \|\phi^{\prime}\|_\infty  = \frac{e^{-1/2\sigma^2}}{\sqrt{2\pi \sigma^2}}.
\end{equation}
\begin{lemma}\label{Appendix:Sec_C_lemma3}
	Let $\phi(\cdot)$ be an isotropic multivariate Gaussian density function with mean $\mu$ and variance $\sigma^2\boldsymbol{I}$. There exists a constant $L$ such that $\forall x,y\in \mathbb{R}^d$:
	\begin{equation}\label{Appendix:Sec_C-eq14}
		|\phi(x) - \phi(y)| \leq L|x - y|,
	\end{equation}
	where $L = \frac{d}{\sqrt{2\pi\sigma^2}} \exp\left(-\frac{1}{2\sigma^2}\right)$
\end{lemma}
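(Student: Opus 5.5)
The plan is to reduce the multivariate statement to the univariate Lipschitz bound of Lemma~\ref{Appendix:Sec_C_lemma2}, via the mean value theorem along a segment together with a coordinatewise bound on the gradient. The same calculus argument worked in Lemma~\ref{Appendix:Sec_C_lemma2}, and the factor $d$ in the claimed constant suggests it arises from summing $d$ one-dimensional contributions.

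\textbf{Step 1 (mean value theorem).} Since $\phi$ is smooth with bounded gradient, for any $x,y\in\mathbb{R}^d$ I set $\gamma(s)=y+s(x-y)$ and write $\phi(x)-\phi(y)=\int_0^1\nabla\phi(\gamma(s))^T(x-y)\,ds$. This gives
\begin{equation*}
|\phi(x)-\phi(y)|\le \sup_{z}\|\nabla\phi(z)\|_1\,\|x-y\|_\infty\le \sup_z\|\nabla\phi(z)\|_1\,\|x-y\|,
\end{equation*}
so it suffices to bound $\sup_z\|\nabla\phi(z)\|_1\le d\,\max_i\sup_z|\partial_i\phi(z)|$.

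\textbf{Step 2 (factorise).} Because the covariance is $\sigma^2\boldsymbol{I}$, the density factorises as $\phi(z)=\prod_{j=1}^d\phi_1(z_j)$, where each $\phi_1$ is the univariate Gaussian with mean $\mu_j$ and variance $\sigma^2$. Hence
\begin{equation*}
\partial_i\phi(z)=\phi_1'(z_i)\prod_{j\ne i}\phi_1(z_j).
\end{equation*}
By Lemma~\ref{Appendix:Sec_C_lemma2}, $\sup|\phi_1'|=\frac{1}{\sqrt{2\pi\sigma^2}}e^{-1/2\sigma^2}$, the value obtained by solving $\phi_1''=0$ at $z_i=\mu_i\pm\sigma$. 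Multiplying by $d$ then yields the claimed constant $L$, provided the remaining product $\prod_{j\ne i}\phi_1(z_j)$ is bounded by $1$.

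\textbf{Step 3 (main obstacle).} The difficulty is exactly this last bound. Each factor satisfies $\phi_1\le(2\pi\sigma^2)^{-1/2}$, which is at most $1$ only when $\sigma^2\ge 1/(2\pi)$. I would therefore first state the regime $\sigma^2\ge 1/(2\pi)$, which covers the likelihood noise levels used in bridge posterior sampling, and then the product is at most $1$ and the bound closes. For general $\sigma$ I would use the convention, implicit in the likelihood $p(\mathbf{z}_T\vert\widehat{\mathbf{z}}_0^t)\propto\exp(-\|\cdot\|^2/2\sigma^2)$, that only the unnormalised Gaussian factors of the other coordinates enter. Those factors are bounded by $1$, and the argument closes again.

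If neither route is acceptable, the honest statement would replace $L$ by $\frac{d}{\sqrt{2\pi\sigma^2}}e^{-1/2\sigma^2}(2\pi\sigma^2)^{-(d-1)/2}$. This only changes the constant later carried into the Jensen-gap bound via Proposition~\ref{Appendix:Sec_C-proposition2}, and does not affect its dependence on $M$.
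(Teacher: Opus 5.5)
\paragraph{Gap: the imported univariate constant is wrong, and the stated bound is false for small $\sigma$.}
Your Steps 1--2 are the paper's own argument: a mean value bound, then a coordinatewise bound on $\nabla\phi$ taken from Lemma~\ref{Appendix:Sec_C_lemma2}, multiplied by $d$. Your Step 3 correctly catches something the paper's one-line proof silently drops: each partial derivative carries the other $d-1$ normalized marginals, which contribute a factor up to $(2\pi\sigma^2)^{-(d-1)/2}$.

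The genuine gap is earlier, in the univariate constant you import. Evaluating $\phi_1'(z)=-\frac{z-\mu}{\sigma^2}\phi_1(z)$ at the critical points $z=\mu\pm\sigma$ gives $\frac{e^{-1/2}}{\sqrt{2\pi}\,\sigma^2}$. It does not give $\frac{1}{\sqrt{2\pi\sigma^2}}e^{-1/(2\sigma^2)}$. That second value is $\phi_1(\mu\pm 1)$, the density at distance $1$, not the derivative at distance $\sigma$. At $z=\mu\pm\sigma$ the exponent is $-\tfrac12$, and the prefactor $|z-\mu|/\sigma^2=1/\sigma$ must be kept. Lemma~\ref{Appendix:Sec_C_lemma2} contains exactly this slip, so citing it does not close Step 2.

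As a result, the inequality you are asked to prove is false for small $\sigma$, and no choice in Step 3 can rescue it.
\begin{itemize}
\item Take $d=1$, where no product of marginals appears at all. The sharp constant $\frac{e^{-1/2}}{\sqrt{2\pi}\,\sigma^2}$ exceeds the stated $L$ exactly when $\sigma<1$.
\item At $\sigma^2=1/(2\pi)$, which lies inside the regime you propose, the true constant is $\sqrt{2\pi}\,e^{-1/2}\approx 1.52$, while $L=e^{-\pi}\approx 0.043$.
\item In general dimension, $\nabla\phi(z)=-\frac{z-\mu}{\sigma^2}\phi(z)$, and $r\mapsto r\,e^{-r^2/(2\sigma^2)}$ peaks at $r=\sigma$. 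So the best Euclidean Lipschitz constant is $\sup_z\|\nabla\phi(z)\|=\frac{e^{-1/2}}{\sigma}(2\pi\sigma^2)^{-d/2}$. Its ratio to the stated $L$ blows up like $\sigma^{-d}e^{1/(2\sigma^2)}$ as $\sigma\to 0$.
\end{itemize}

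Your coordinatewise route, carried out correctly, gives the valid but looser constant $\frac{d\,e^{-1/2}}{\sigma}(2\pi\sigma^2)^{-d/2}$ for all $\sigma$. Your fallback options do not fix the problem:
\begin{itemize}
\item Your ``honest'' replacement constant is still wrong, because it keeps the bad univariate factor.
\item Switching to unnormalized marginals changes $\phi$ and still uses that bad factor.
\end{itemize}
The stated $L$ is valid only above a $d$-dependent threshold on $\sigma$; for $d=1$ the threshold is exactly $\sigma\ge 1$. That restriction would have to be added as a hypothesis. The paper's own proof has both defects and does not supply the missing step.
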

\textit{Proof.}
\begin{equation}\label{Appendix:Sec_C-eq15}
	\begin{aligned}
		&\|\phi(x) - \phi(y)\| \le \max_{z} \|\nabla_z \phi(z)\|\cdot \|x-y\| \\&=  \frac{d}{\sqrt{2\pi\sigma^2}} \exp(-\frac{1}{2\sigma^2}) \cdot \|x-y\|,
	\end{aligned}
\end{equation}
each element of $d$ dimensions $\nabla_z \phi(z)$ are bounded by $\frac{1}{\sqrt{2\pi\sigma^2}} \exp(-\frac{1}{2\sigma^2})$.

\begin{proposition}\label{Appendix:Sec_C_theorem1}
	For the measurement model $\mathbf{z}_T = \mathbb{T} \mathcal{D} \mathbb{T}^{\ast}\mathbf{z}_0 +  \mathbf{n}$ in linear verse problem with $ mathbf{n}\sim \mathcal{N}(0,\sigma^2\boldsymbol{I})$, we have:
	\begin{equation}\label{Appendix:Sec_C-eq16}
		q(\mathbf{z}_T\vert \mathbf{z}_t) \simeq q(\mathbf{z}_T\vert \widehat{\mathbf{z}}_0),
	\end{equation}
	where the approximation error can be quantified with the Jensen gap, which is upper bounded by:
	\begin{equation}\label{Appendix:Sec_C-eq17}
		\!\!\!\mathcal{J}(\sigma,M) \! \le \! \frac{d}{\sqrt{2\pi\sigma^2}} \exp\left(-\frac{1}{2\sigma^2}\right)||\nabla_{u} \mathbb{T}(u)||_2 M,
	\end{equation}
	where $u:=\mathcal{D}  \mathbb{T}^{\ast} \widehat{\mathbf{z}}_0^t$ and $\|\nabla_u \mathbb{T}(u)\|:=\max_u \|\nabla_{u}\mathbf{z}_0 \mathbb{T}(u)\|$ is Lipschitz constant. $M:=\int \|\mathcal{D} \mathbb{T}^{\ast}\mathbf{z}_0 - \mathcal{D} \mathbb{T}^{\ast}\widehat{\mathbf{z}}_0\|q(\mathbf{z}_0\vert \mathbf{z}_t)d\mathbf{z}_0$.
\end{proposition}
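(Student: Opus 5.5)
This follows the DPS-style Jensen-gap argument of \cite{chung2023diffusion}, transplanted to the bridge setting.

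\textbf{Step 1: write both sides as expectations of a Gaussian density.} Under the measurement model $\mathbf{z}_T = \mathbb{T}\mathcal{D}\mathbb{T}^{\ast}\mathbf{z}_0 + \mathbf{n}$, conditioning on $\mathbf{z}_t$ and marginalizing over $\mathbf{z}_0$ gives
\begin{equation*}
q(\mathbf{z}_T\vert \mathbf{z}_t) = \int \phi\big(\mathbb{T}\mathcal{D}\mathbb{T}^{\ast}\mathbf{z}_0\big)\, q(\mathbf{z}_0\vert \mathbf{z}_t)\, d\mathbf{z}_0 = \mathbb{E}_{\mathbf{z}_0\sim q(\mathbf{z}_0\vert\mathbf{z}_t)}\big[f(\mathbf{z}_0)\big].
\end{equation*}
Here $\phi$ is the isotropic Gaussian density with variance $\sigma^2\boldsymbol{I}$, evaluated at $\mathbf{z}_T$. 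I write $f(\mathbf{z}_0):=\phi(\mathbb{T}\mathcal{D}\mathbb{T}^{\ast}\mathbf{z}_0)$ to make the composition explicit.

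The surrogate is obtained by replacing $\mathbf{z}_0$ with its posterior mean:
\begin{equation*}
q(\mathbf{z}_T\vert \widehat{\mathbf{z}}_0) = f(\widehat{\mathbf{z}}_0), \qquad \widehat{\mathbf{z}}_0 = \mathbb{E}[\mathbf{z}_0\vert \mathbf{z}_t].
\end{equation*}
The approximation error is therefore exactly the Jensen gap
\begin{equation*}
\mathcal{J} = \big|\mathbb{E}[f(\mathbf{z}_0)] - f(\mathbb{E}[\mathbf{z}_0])\big|.
\end{equation*}

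\textbf{Step 2: reduce to an $\alpha=1$ Hölder bound.} I apply Proposition~\ref{Appendix:Sec_C-proposition2} with $\alpha=1$, using $\mu=\widehat{\mathbf{z}}_0$ and $q=q(\mathbf{z}_0\vert\mathbf{z}_t)$. First move the absolute value inside the integral:
\begin{equation*}
\mathcal{J}\le \int |f(\mathbf{z}_0)-f(\widehat{\mathbf{z}}_0)|\, q(\mathbf{z}_0\vert\mathbf{z}_t)\, d\mathbf{z}_0.
\end{equation*}
It then remains to bound the pointwise difference of $f$ by a Lipschitz constant times the distance between the inner arguments.

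\textbf{Step 3: chain the Lipschitz constants.} Define $u:=\mathcal{D}\mathbb{T}^{\ast}\mathbf{z}_0$ and $\widehat{u}:=\mathcal{D}\mathbb{T}^{\ast}\widehat{\mathbf{z}}_0$. Lemma~\ref{Appendix:Sec_C_lemma3} handles the outer Gaussian:
\begin{equation*}
|\phi(a)-\phi(b)|\le \frac{d}{\sqrt{2\pi\sigma^2}}e^{-1/2\sigma^2}\|a-b\|.
\end{equation*}
The map $u\mapsto \mathbb{T}(u)$ is linear, hence Lipschitz with constant $\|\nabla_u\mathbb{T}(u)\|_2$, its operator norm. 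This constant is finite by the full-row-rank and boundedness of $\mathbb{T}$ established in Appendix~\ref{sec:suppl_A}. Composing the two bounds gives
\begin{equation*}
|f(\mathbf{z}_0)-f(\widehat{\mathbf{z}}_0)|\le \frac{d}{\sqrt{2\pi\sigma^2}}e^{-1/2\sigma^2}\|\nabla_u\mathbb{T}(u)\|_2\,\|u-\widehat u\|.
\end{equation*}
Integrating against $q(\mathbf{z}_0\vert\mathbf{z}_t)$ turns the last factor into exactly $M=\int\|\mathcal{D}\mathbb{T}^{\ast}\mathbf{z}_0-\mathcal{D}\mathbb{T}^{\ast}\widehat{\mathbf{z}}_0\|\,dq$. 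This yields \eqref{Appendix:Sec_C-eq17}.

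\textbf{Main obstacle.} The delicate point is Step 1. The bridge posterior $q(\mathbf{z}_0\vert\mathbf{z}_t)$ implicitly depends on $\mathbf{z}_T$ as well, so the identity $q(\mathbf{z}_T\vert\mathbf{z}_t)=\mathbb{E}[f(\mathbf{z}_0)]$ requires the conditional independence $\mathbf{z}_T\perp\mathbf{z}_t\mid\mathbf{z}_0$. I would state this as the modeling assumption that the measurement is generated from $\mathbf{z}_0$ alone, exactly as in DPS.

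A secondary point is measurability and integrability: $M<\infty$ must be assumed, which the statement already does implicitly by defining $M$. Finally, the constant from Lemma~\ref{Appendix:Sec_C_lemma3} should be used as given, with its factor $d$, rather than re-derived.
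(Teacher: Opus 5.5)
Your proposal is correct relative to the paper's own lemmas and follows its proof essentially step for step: you factor $q(\mathbf{z}_T\vert\mathbf{z}_t)$ as the posterior expectation of the Gaussian likelihood composed with $\mathbb{T}$, move the absolute value inside the integral, chain Lemma~\ref{Appendix:Sec_C_lemma3} with the Lipschitz constant of the linear map $\mathbb{T}$, and integrate to obtain $M$. The weak points you flag are inherited from the paper rather than introduced by you: the paper also uses the factorization without comment, even though the bridge marginal $q(\mathbf{z}_t\vert\mathbf{z}_0,\mathbf{z}_T)$ depends on $\mathbf{z}_T$, so $\mathbf{z}_T\perp\mathbf{z}_t\mid\mathbf{z}_0$ is an extra assumption that the measurement model alone does not supply (unlike DPS, where the noisy state is generated from the clean sample alone), and both arguments take the constant of Lemma~\ref{Appendix:Sec_C_lemma3} on trust---re-deriving it would give $\sup|\phi'| = e^{-1/2}/(\sqrt{2\pi}\,\sigma^2)$ in one dimension, not the stated value.
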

\textit{Proof.} The measurement model $\mathbf{z}_T = \mathbb{T}(\mathcal{D}\mathbb{T}^{\ast}\mathbf{z}_0) + n$ can be formulated as:
\begin{equation}\label{Appendix:Sec_C-eq18}
	\begin{aligned}
		&q(\mathbf{z}_T \vert \mathbb{T}^{\ast}\mathbf{z}_0) \sim \mathcal{N}(\mathbb{T}\mathcal{D}\mathbb{T}^{\ast}\mathbf{z}_0,\sigma^2\boldsymbol{I}) \\&= \frac{1}{\sqrt{2\pi \sigma^2}} \exp\big(-\frac{(\mathbf{z}_T-\mathbb{T}\mathcal{D}\mathbb{T}^{\ast}\mathbf{z}_0)^2}{2\sigma^2}\big)\\
		& =  \frac{1}{\sqrt{2\pi \sigma^2}} \exp\big(\frac{\mathbb{T}\mathcal{D}\mathbb{T}^{\ast}\mathbf{z}_0}{\sigma^2}\mathbf{z}_T-\frac{(\mathbb{T}\mathcal{D}\mathbb{T}^{\ast}\mathbf{z}_0)^2}{2\sigma^2}\big)\exp\big(-\frac{\mathbf{z}_T^2}{2\sigma^2}\big) \\
		& = \bigg[ \frac{1}{\sqrt{2\pi \sigma^2}} \exp\big(-\frac{\mathbf{z}_T^2}{2\sigma^2}\big) \bigg] \exp\big(\frac{\mathbb{T}\mathcal{D}\mathbb{T}^{\ast}\mathbf{z}_0}{\sigma^2}\mathbf{z}_T-\frac{(\mathbb{T}\mathcal{D}\mathbb{T}^{\ast}\mathbf{z}_0)^2}{2\sigma^2}\big).
	\end{aligned}
\end{equation}

\begin{figure*}[t]
	\renewcommand\thefigure{S1}
	\centering
	\includegraphics[width=0.99\linewidth]{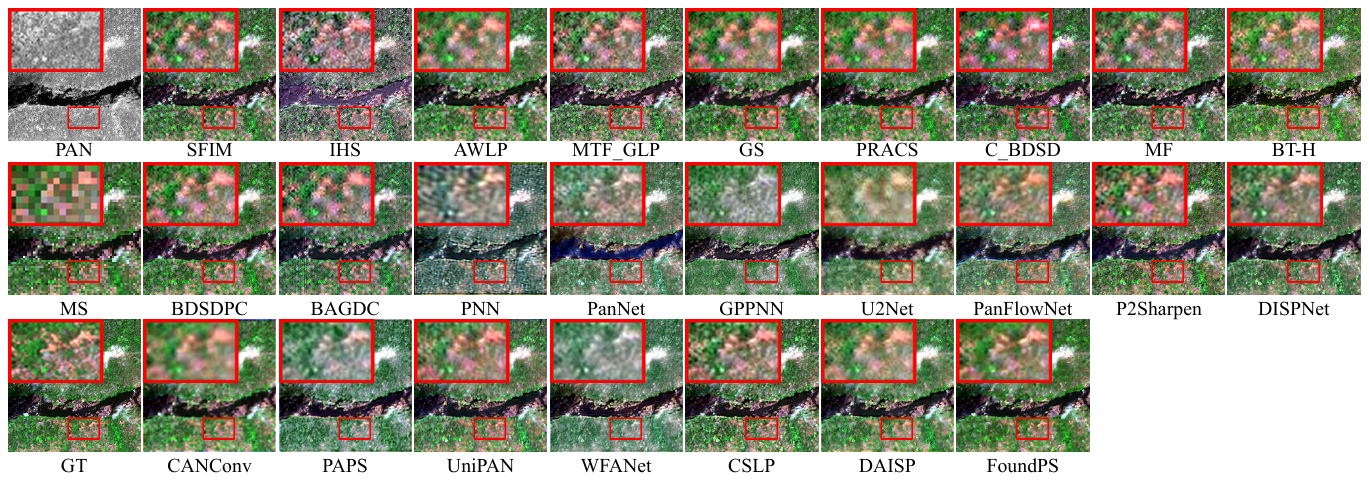}
	\caption{4-band visual comparisons of reduced scale on PSBench. Zoom in for the best.}
	\label{exp:append_reduced_4} 

	\renewcommand\thefigure{S2} 
	\centering
	\includegraphics[width=0.99\linewidth]{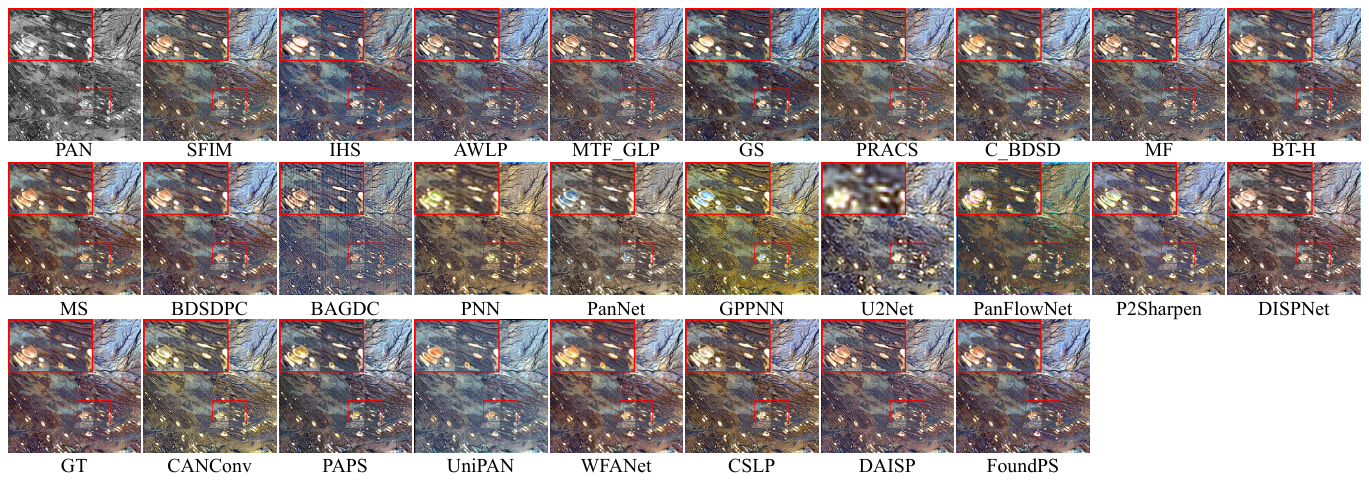}
	\caption{7-band visual comparisons of reduced scale on PSBench. Zoom in for the best.}
	\label{exp:append_reduced_7} 
\end{figure*}

\noindent In order to exploit the measurement model, we factorize $q(\mathbf{z}_T\vert \mathbf{z}_t)$ as follows:
\begin{equation}\label{Appendix:Sec_C-eq20}
	\begin{aligned}
		\!\!q(\mathbf{z}_T\vert \mathbf{z}_t) \!=\! \int\! q(\mathbf{z}_T\vert \mathbf{z}_0)q(\mathbf{z}_0\vert \mathbf{z}_t) d \mathbf{z}_0 \!=\! \mathbb{E}_{\mathbf{z}_0\sim q(\mathbf{z}_0\vert \mathbf{z}_t)}[f(\mathbf{z}_0)],
	\end{aligned}
\end{equation}
here, $f(\cdot):=h(\mathbb{T}(\cdot))$, where $\mathbb{T}$ is the measurement operator and $h(\cdot)$ is the multivariate normal distribution with mean $\mathbb{T}(\mathbf{z}_0)$ and the covariance $\sigma^2\boldsymbol{I}$. Therefore, we have:
\begin{align}
	&\!\!\mathcal{J}(f\!,\!q(\mathbf{z}_0\vert \mathbf{z}_t))  \!=\! \vert \mathbb{E}[f(\mathbb{T}^{\ast}\mathbf{z}_0)] \!-\! f(\mathbb{E}[\mathbb{T}^{\ast}\mathbf{z}_0]) \vert \! \\
	&= \vert \mathbb{E}[f(\mathbb{T}^{\ast}\mathbf{z}_0)] - f(\mathbb{T}^{\ast}\widehat{\mathbf{z}}_0) \vert, \label{Appendix:Sec_C-eq21}\\&= \vert \mathbb{E}[h(\mathbb{T}(\mathbb{T}^{\ast}\mathbf{z}_0))] - h(\mathbb{T}(\mathbb{T}^{\ast}\widehat{\mathbf{z}}_0)) \vert, \nonumber \\
	&\le\! \int \!\!\vert  h(\mathbb{T}(\mathbb{T}^{\ast}\!\mathbf{z}_0)) \!-\! h(\mathbb{T}(\mathbb{T}^{\ast}\widehat{\mathbf{z}}_0))  \vert\! d Q(\mathbb{T}^{\ast}\mathbf{z}_0\vert \mathbf{z}_t), \nonumber\\
	&\le \frac{d}{\sqrt{2\pi\sigma^2}}  e^{-\frac{1}{2\sigma^2}}  \int  \|\mathbb{T}\mathcal{D}(\mathbb{T}^{\ast}\mathbf{z}_0) - \mathbb{T}\mathcal{D}(\mathbb{T}^{\ast}\widehat{\mathbf{z}}_0)\| d Q(\mathbf{z}_0\vert \mathbf{z}_t), \nonumber\\
	&\le \frac{d}{\sqrt{2\pi\sigma^2}} e^{-\frac{1}{2\sigma^2}} \|\nabla_{u} \mathbb{T}(u) \| \int \|\mathcal{D}\mathbb{T}^{\ast}\mathbf{z}_0 - \mathcal{D}\mathbb{T}^{\ast}\widehat{\mathbf{z}}_0\| d Q(\mathbf{z}_0\vert \mathbf{z}_t), \nonumber\\
	&\le \frac{d}{\sqrt{2\pi\sigma^2}} e^{-\frac{1}{2\sigma^2}} \|\nabla_{u} \mathbb{T}(u) \| M,\nonumber\\
	&\le \frac{d}{\sqrt{2\pi\sigma^2}} e^{-\frac{1}{2\sigma^2}} \|\nabla_{\mathbf{z}_t} (\mathbb{T} \circ \mathcal{D}  \circ \mathbb{T}^{\ast}) (\widehat{\mathbf{z}}_0^t(z_t))\| \nonumber\\&\qquad\qquad\qquad\cdot \int \|\mathbf{z}_0 - \widehat{\mathbf{z}}_0\| d Q(\mathbf{z}_0\vert \mathbf{z}_t),\nonumber
\end{align} 
where $d Q(\mathbf{z}_0\vert \mathbf{z}_t) = q(\mathbf{z}_0\vert \mathbf{z}_t) d \mathbf{z}_0$, $\|\nabla_{u} \mathbb{T}(u) \|$ and $\|\nabla_{\mathbf{z}_t} (\mathbb{T} \circ \mathcal{D}  \circ \mathbb{T}^{\ast}) (\widehat{\mathbf{z}}_0^t(z_t))\|$ are Lipschitz constants. Note that $M$ is finite for most of the distribution in practice, which can be considered as the generalized absolute distance between the observed reference and estimated data. $\mathcal{J}(\sigma,M)$ can approach 0 as $\sigma\!\rightarrow\!0$ or $\infty$, suggesting that the approximation errors reduce when the measurement noise is extremely small or large. Specifically, if the predictions of $\widehat{\mathbf{z}}_0^t$ are accurate, the upper bound $\mathcal{J}(\sigma,M)\vert_{\sigma \!\rightarrow\! 0,M \!\rightarrow\! 0}$ shrinks due to the low variance and distortion. Oppositely, if the predictions lack accuracy, the upper bound of the $\mathcal{J}(\sigma,M)\vert_{\sigma \!\rightarrow\! \infty, M \!\rightarrow\! M_{max}}$ shrinks as well for large variance and limited distortion. For the inverse inference, we can employ bridge posterior sampling to guide the current state:
\begin{align}
	\widehat{\mathbf{z}}_0^t &:= \widehat{\mathbf{z}}_0^t + \eta_{z} \nabla_{\mathbf{z}_t} \|\mathbf{z}_T - \mathbb{T}(\mathbb{T}^{\ast}\widehat{\mathbf{z}}_0^t\downarrow)\|_2, \\
	\widehat{\epsilon}_t &:= \widehat{\epsilon}_t + \eta_{\epsilon} \nabla_{\mathbf{z}_t} \|\mathbf{z}_T - \mathbb{T}(\mathbb{T}^{\ast}\mathbf{z}_0\downarrow)\|_2.
\end{align} 
Building on these theoretical foundations, BPS functions as a training-free guidance mechanism that enables flexible control over the fusion effect via a tunable weight $\eta_{(\cdot)}$. 
\begin{figure*}[t]
	
	\renewcommand\thefigure{S3}
	\centering
	\includegraphics[width=0.99\linewidth]{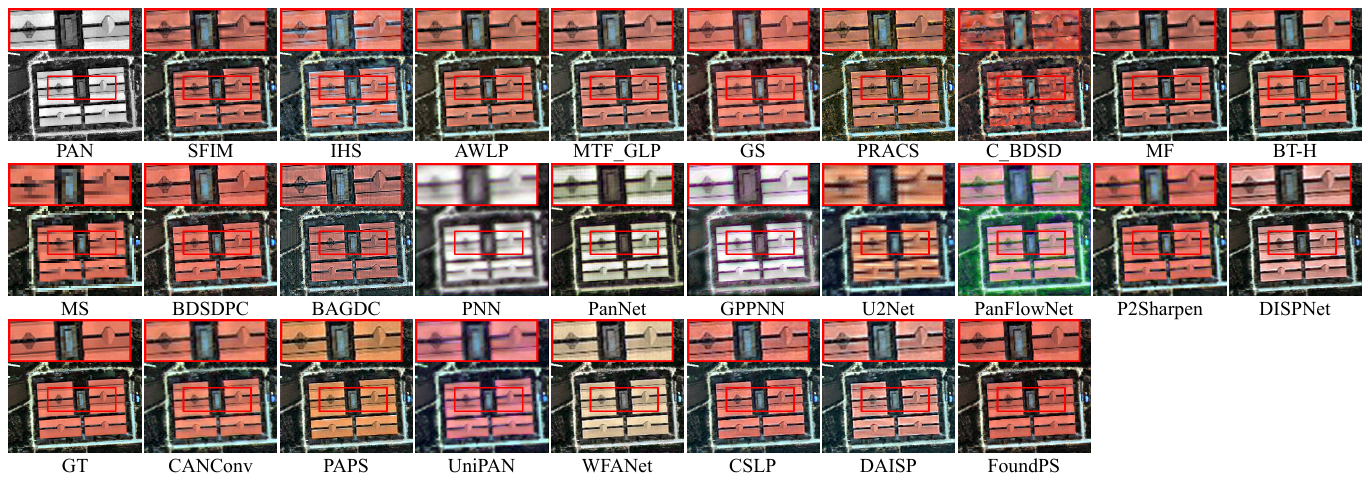}
	\caption{8-band visual comparisons of reduced scale on PSBench. Zoom in for the best.}
	\label{exp:append_reduced_8}
	
	\renewcommand\thefigure{S4}
	\centering
	\includegraphics[width=0.99\linewidth]{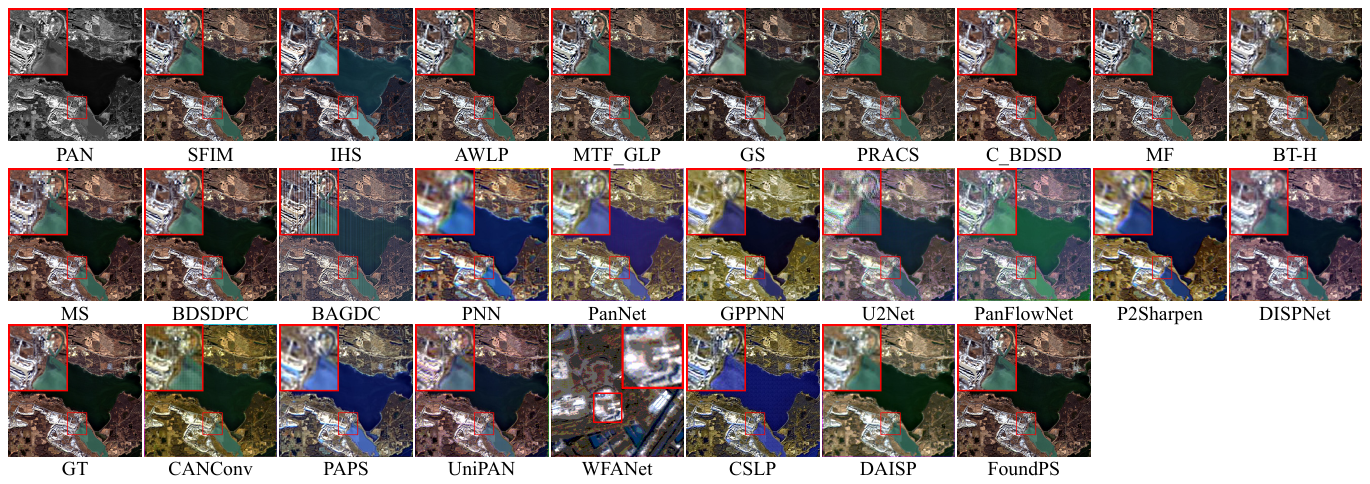}
	\caption{10-band visual comparisons of reduced scale on PSBench. Zoom in for the best.}
	\label{exp:append_reduced_10}

	\renewcommand\thefigure{S5}
	\centering
	\includegraphics[width=0.99\linewidth]{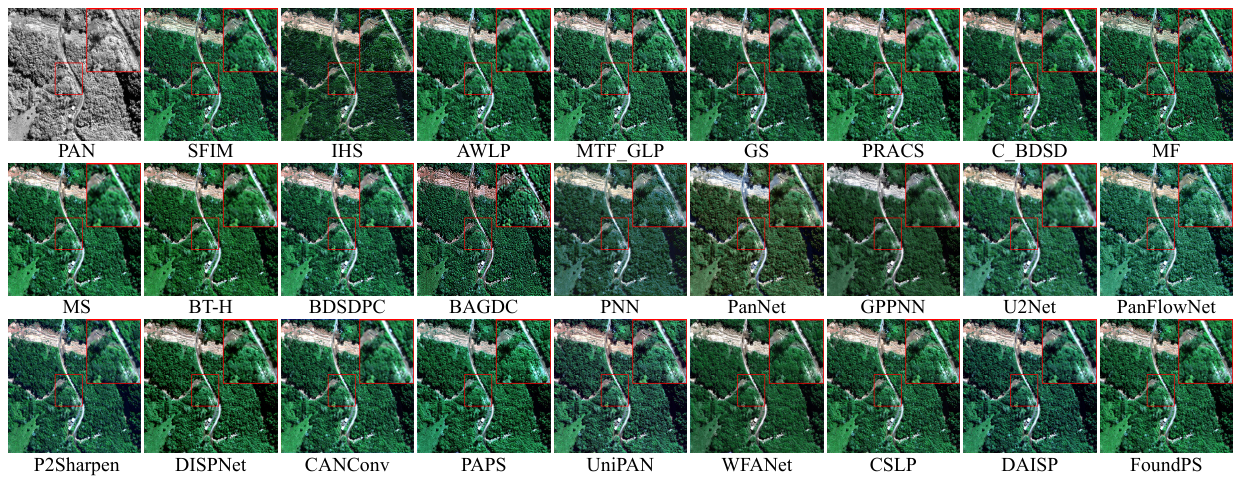}
	\caption{4-band visual comparisons of full scale on PSBench. Zoom in for the best.}
	\label{exp:append_full_4}
\end{figure*}

\begin{figure*}[t]
	\renewcommand\thefigure{S6}
	\centering
	\includegraphics[width=0.99\linewidth]{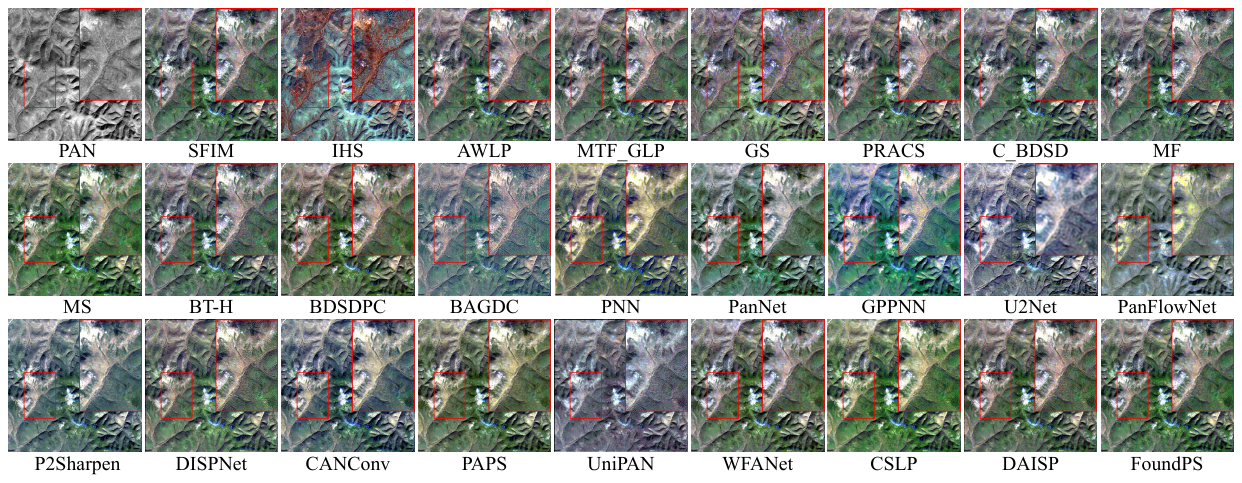}
	\caption{7-band visual comparisons of full scale on PSBench. Zoom in for the best.}
	\label{exp:append_full_7} 
	
	\renewcommand\thefigure{S7}
	\centering
	\includegraphics[width=0.99\linewidth]{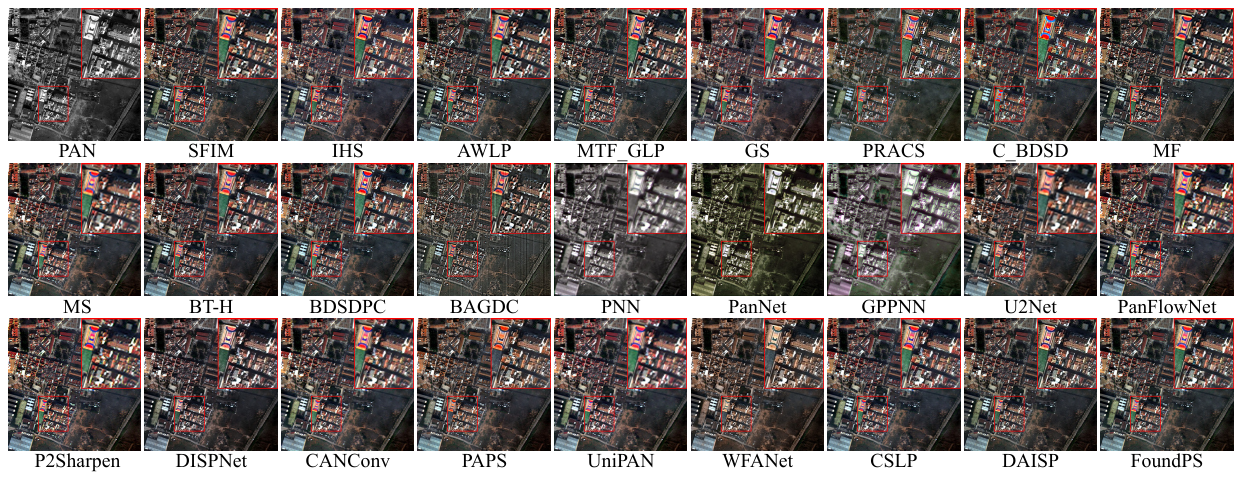}
	\caption{8-band visual comparisons of full scale on PSBench. Zoom in for the best.}
	\label{exp:append_full_8} 

	\renewcommand\thefigure{S8}
	\centering
	\includegraphics[width=0.99\linewidth]{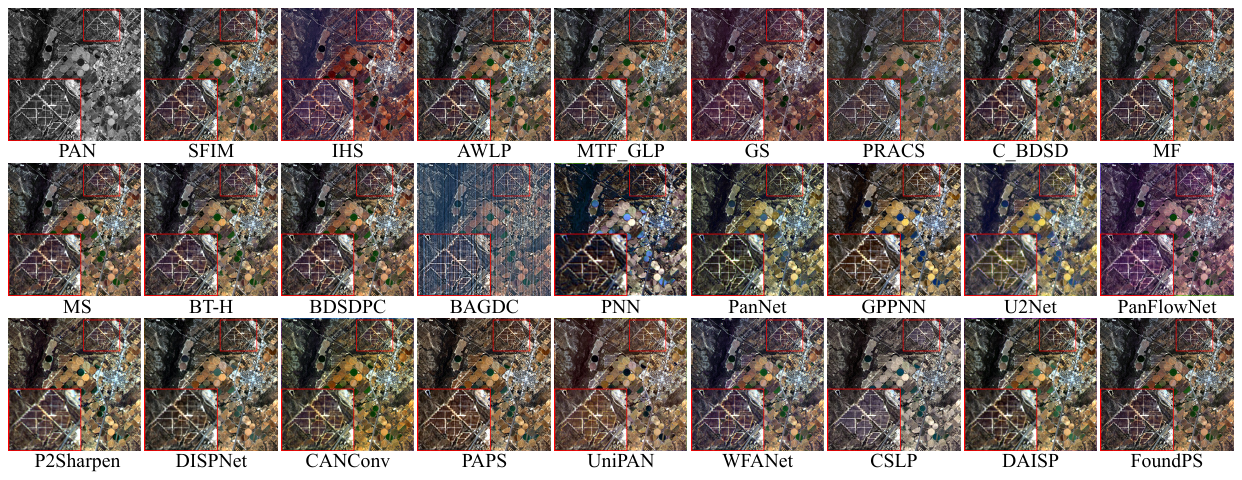}
	\caption{10-band visual comparisons of full scale on PSBench. Zoom in for the best.}
	\label{exp:append_full_10}
\end{figure*}

\begin{figure*}[t]
	\renewcommand\thefigure{S9}
	\centering
	\includegraphics[width=0.99\linewidth]{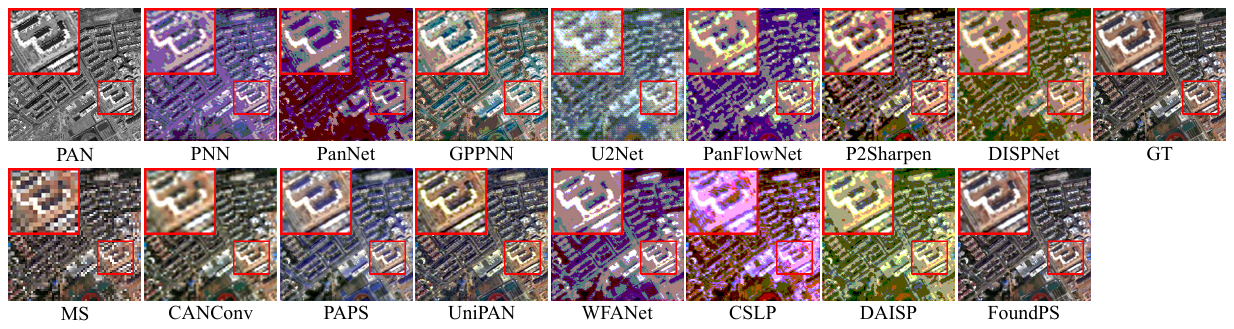}
	\caption{Visual comparisons of reduced scale on Quickbird.}
	\label{exp:genQB_reduced}
	
	\renewcommand\thefigure{S10}
	\includegraphics[width=0.99\linewidth]{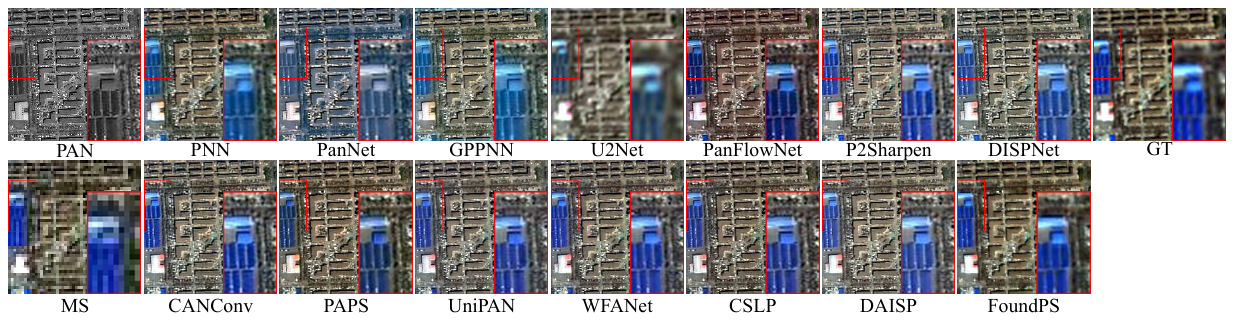}
	\caption{Visual comparisons of reduced scale on Jinlin.}
	\label{exp:genJL_reduced}
	
	\renewcommand\thefigure{S11}	
	\includegraphics[width=0.99\linewidth]{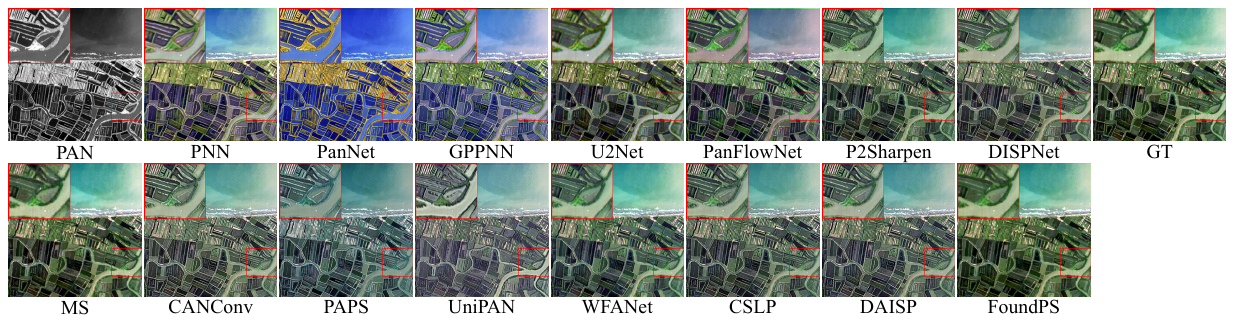}
	\caption{Visual comparisons of reduced scale on skysat.}
	\label{exp:genSS_reduced}
	
	\renewcommand\thefigure{S12}
	
	\centering
	\includegraphics[width=0.99\linewidth]{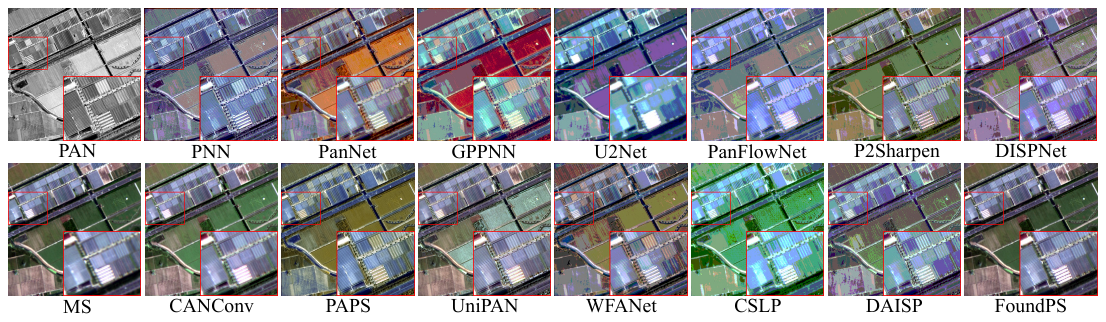}
	\caption{Visual comparisons of full scale on Quickbird.}
	\label{exp:genQB_full}
\end{figure*}

\begin{figure*}[t] 
	
	\renewcommand\thefigure{S13}
	
	\centering
	\includegraphics[width=0.99\linewidth]{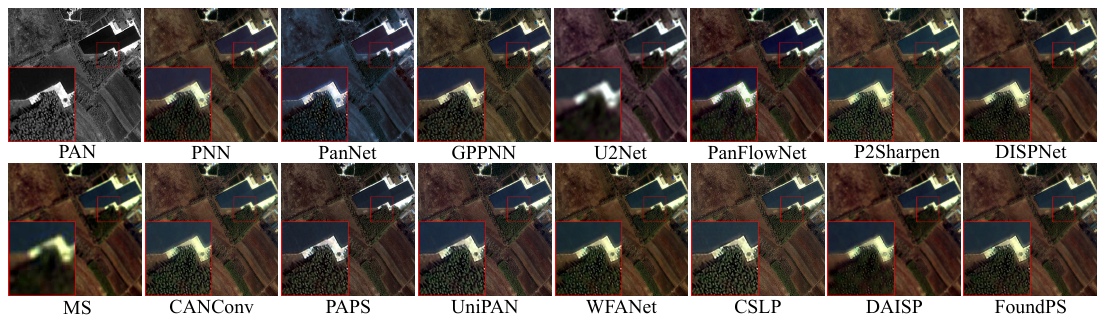}
	\caption{Visual comparisons of full scale on Jinlin.}
	\label{exp:genJL_full}
	
	\renewcommand\thefigure{S14}
	
	\centering
	\includegraphics[width=0.99\linewidth]{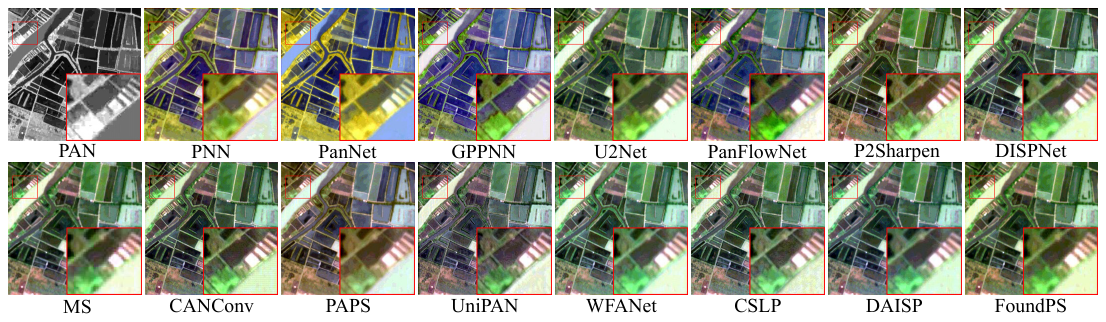}
	\caption{Visual comparisons of full scale on skysat.}
	\label{exp:genSS_full}
	
	\renewcommand\thefigure{S15}
	\centering
	\includegraphics[width=0.97\linewidth]{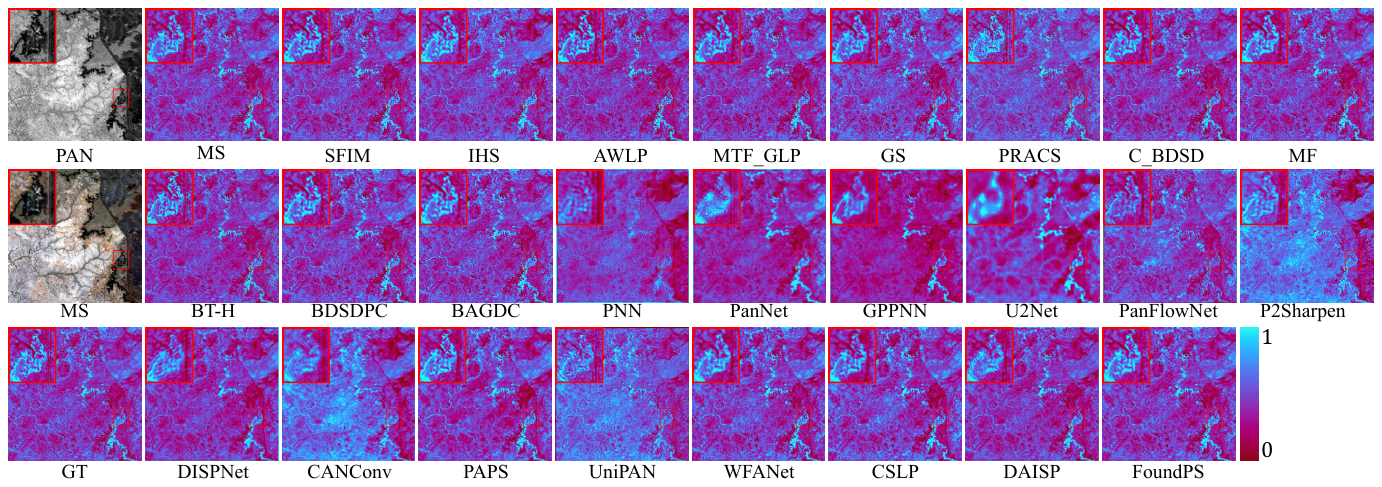}
	\caption{Remote sensing application of NDWI.}
	\label{exp:ndwi}
	\renewcommand\thefigure{S16}
	
	\centering
	\includegraphics[width=0.97\linewidth]{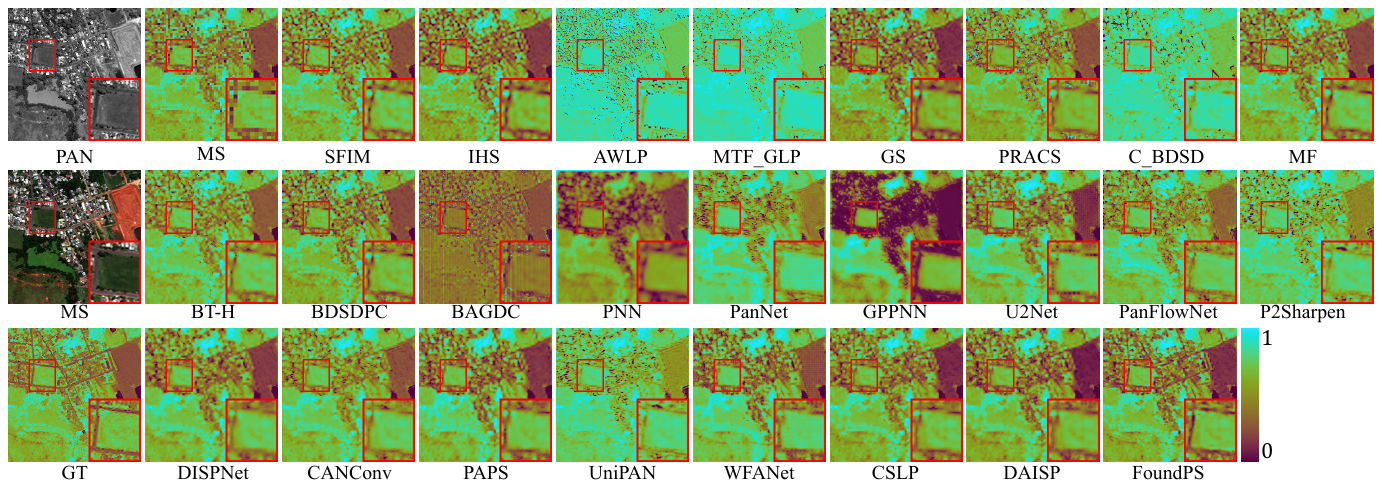}
	\caption{Remote sensing application of NDRE.}
	\label{exp:ndre}
\end{figure*}

\setcounter{equation}{0}
\renewcommand{\theequation}{\thesection.\arabic{equation}} 
\renewcommand{\thesection}{D}  
\section{}\label{sec:suppl_D}
\subsection{Visual Results on PSBench}\label{sec:suppl_1}
We show the visualization results of four representative pansharpening tasks on both reduced and full scale in Fig.~\ref{exp:append_reduced_4}, Fig.~\ref{exp:append_reduced_7}, Fig.~\ref{exp:append_reduced_8}, Fig.~\ref{exp:append_reduced_10}, Fig.~\ref{exp:append_full_4}, Fig.~\ref{exp:append_full_7}, Fig.~\ref{exp:append_full_8}, and Fig.~\ref{exp:append_full_10} to further demonstrate our superiority.

\subsection{Zero-shot Visual Results}\label{sec:suppl_2}
We conduct the zero-shot generalization experiments on unseen scenes and satellites. The additional visual results on reduced scale are presented in Fig~\ref{exp:genQB_reduced}, Fig~\ref{exp:genJL_reduced}, and Fig~\ref{exp:genSS_reduced}. And the corresponding results on full scale are presented in Fig~\ref{exp:genQB_full}, Fig~\ref{exp:genJL_full}, and Fig~\ref{exp:genSS_full}.

\subsection{NDXI Visual Results}\label{sec:suppl_3}
We carry out the verification on classical remote sensing applications. The additional visual comparisons of the NDWI and NDRE maps are presented in Fig.~\ref{exp:ndwi} and Fig.~\ref{exp:ndre}, respectively.

\end{appendices}


\end{document}